\numberwithin{equation}{section}
\newtheorem{assumption}{Assumption}
\newtheorem{definition}{Definition}
\newtheorem{lemma}{Lemma}
\newtheorem{theorem}{Theorem}
\newcommand{\parameter}{\theta}
\newcommand\bmx{\bm{x}}
\newcommand{\lb}{\left[}
\newcommand{\rb}{\right]}
\newcommand{\T}{\mathrm{T}}
\newcommand{\tr}{\operatorname{tr}}
\newcommand{\lrp}[1]{\left(#1\right)}
\newcommand{\lrb}[1]{\left[#1\right]}
\newcommand{\norm}[1]{\left\| #1\right\|}
\newcommand{\EE}{\mathbb{E}}
\newcommand{\iprod}[2]{\left\langle #1, #2 \right\rangle}
\newcommand{\nrm}[1]{\left\|#1\right\|}
\newcommand{\abs}[1]{\left|#1\right|}
\newcommand{\minop}[1]{\min\left\{#1\right\}}
\newcommand{\epct}[1]{\mathbb{E}\left[#1\right]}
\newcommand{\cond}[2]{\mathbb{E}\left[\left.#1\right|#2\right]}
\newcommand{\bigO}[1]{\mathcal{O}\left(#1\right)}
\newcommand{\ceil}[1]{\left\lceil #1\right\rceil}
\newcommand{\pos}[1]{\left[#1\right]_{+}}
\newcommand{\oP}{\operatorname{P}}
\DeclareRobustCommand{\rchi}{{\mathpalette\irchi\relax}}
\newcommand{\irchi}[2]{\raisebox{\depth}{$#1\chi$}}
\newcommand{\Lthre}{\mathcal{D}}
\newcommand{\tparameter}{\tilde{\parameter}}
\newcommand{\ex}[2]{\mathbb{E}_{#1}\left[#2\right]}
\newcommand{\Lcal}{\mathcal{L}}
\newcommand{\Xcal}{\mathcal{X}}
\newcommand{\Acal}{\mathcal{A}}
\newcommand{\Hcal}{\mathcal{H}}
\newcommand{\Ecal}{\mathcal{E}}
\newcommand{\Rcal}{\mathcal{R}}
\newcommand{\Bcal}{\mathcal{B}}
\newcommand{\Pcal}{\mathcal{P}}
\newcommand{\Ebb}{\mathbb{E}}
\newcommand{\Rbb}{\mathbb{R}}
\newcommand{\Pbb}{\mathbb{P}}
\newcommand{\Cbb}{\mathbb{C}}
\newcommand{\Nbb}{\mathbb{N}}
\newcommand{\Zbb}{\mathbb{Z}}
\author{Tianyou Li\thanks{School of Mathematical Sciences, Peking University, CHINA (tianyouli@stu.pku.edu.cn).}
,\quad Fan Chen\thanks{School of Mathematical Sciences, Peking University, CHINA (chern@pku.edu.cn).}
,\quad  Huajie Chen\thanks{School of Mathematical Sciences, Beijing Normal University, CHINA  (chen.huajie@bnu.edu.cn).}
,\quad  Zaiwen Wen\thanks{Beijing International Center for Mathematical Research, Peking University, CHINA (wenzw@pku.edu.cn).}
}
\title{Convergence Analysis of Stochastic Gradient Descent with MCMC Estimators}
\begin{document}

\maketitle
\begin{abstract}
    Understanding stochastic gradient descent (SGD) and its variants is essential for machine learning.  However, most of the preceding
analyses are conducted under amenable conditions such as unbiased gradient estimator and bounded objective functions, which does not encompass many sophisticated applications, such as variational Monte Carlo,  entropy-regularized reinforcement learning and variational inference. In this paper, we consider the SGD algorithm that employ the Markov Chain Monte Carlo (MCMC) estimator to compute the gradient, called MCMC-SGD. Since MCMC reduces the sampling complexity significantly, it is an asymptotically convergent biased estimator in practice. Moreover, by incorporating a general class of unbounded functions, it is much more difficult to analyze the MCMC sampling error. Therefore, we assume that the function is sub-exponential and use the Bernstein inequality for non-stationary Markov chains to derive error bounds of the MCMC estimator.  Consequently, MCMC-SGD is proven to have a first order convergence rate $O(\log K/\sqrt{n K})$ with $K$ iterations and a sample size $n$.  It partially explains how MCMC influences the behavior of SGD.  Furthermore, we verify the correlated negative curvature condition under reasonable assumptions. It is shown that MCMC-SGD escapes from saddle points and reaches $(\epsilon,\epsilon^{1/4})$ approximate second order stationary points or $\epsilon^{1/2}$-variance points at least $O(\epsilon^{-11/2}\log^{2}(1/\epsilon) )$ steps with high probability. Our analysis unveils the convergence pattern of MCMC-SGD across a broad class of stochastic optimization problems, and interprets the convergence phenomena observed in practical applications.
\end{abstract}

\section{Introduction}
\label{sec:introduction}
\setcounter{equation}{0}

We consider a general class of stochastic optimization problems. Given a measurable space $\mathcal{X}$, let $\pi_{\parameter}(x)$ be a family of probability distributions on $\mathcal{X}$ and $f_{\parameter}(x)$ be a function for $x\in \mathcal{X}$. A general form of stochastic optimization problems is stated as
\begin{equation}
    \label{eq:loss}
    \begin{aligned}
        \min_{\parameter\in\Theta}~ \Lcal(\parameter)=\ex{x\sim\pi_{\parameter}}{f_{\parameter}(x)},
    \end{aligned}
\end{equation}
where $\Theta$ denotes the space of parameters, which is typically a subset of a Euclidean space $\mathbb{R}^{d}$. In the field of stochastic optimization, the most common form is typically expressed by a given distribution $\pi$ and a parameterized function $f_{\theta}$. Also, there are numerous tasks, such as reinforcement learning (RL), that aim to find a suitable distribution $\pi_{\theta}$ under a given function $f$. Compared to the two aforementioned problems, \eqref{eq:loss} represents a broader framework, wherein both the distribution $\pi_{\parameter}$ and the function $f_{\theta}$ are dependent on parameters $\theta$. 

A main issue in solving the stochastic optimization problem \eqref{eq:loss} is that the difficulty of directly computing integrals over high-dimensional spaces. The Monte Carlo method offers a strategy for estimating expectations using samples, thereby facilitating the computation of stochastic gradients. However, for certain complex variational distributions, it is prohibitive to obtain their exact probability density functions which impedes directly sampling. 
In order to circumvent the intractable normalization constants in probabilities, Markov chain Monte Carlo (MCMC) is employed to sample from the unnormalized probability. In contrast to the ordinary unbiased Monte Carlo methods, the MCMC method requires some time for mixing and produces the desired samples biasedly and non-independently. There are several common MCMC algorithms, such as Metropolis-Hastings algorithm \cite{metropolis1953equation,hastings1970monte}, Hamiltonian Monte Carlo \cite{duane1987hybrid}, Stochastic Gradient Langevin Dynamics \cite{welling2011bayesian},  etc. The efficiency of the optimization algorithm relies on the error of the MCMC sampling, which has not been adequately investigated in the previous literatures. Previous studies \cite{miasojedow2014hoeffding,jiang2018bernstein,fan2021hoeffding,paulin2015concentration} on the concentration inequality of MCMC sampling typically assume a more idealized setting where the random variables are uniformly bounded. 

With the stochastic gradient estimated by MCMC algorithms, the objective function is minimized using stochastic gradient descent (SGD) methods. The vanilla SGD method is to sample independently from a uniform distribution, and has been extensively studied. Moulines \& Bach first show linear convergence of SGD non-asymptotically for strongly convex functions \cite{moulines2011non}. Needell et al. improve these results by removing the quadratic dependency on the condition number in the iteration complexity results \cite{needell2014stochastic}. Among these convergence results, the gradient noise assumptions for i.i.d samples are of vital importance to establish an $O(1/\sqrt{nK})$ convergence rate for general non-convex cost functions, where $n$ is the sample size per iteration and $K$ is the total number of iterations. However, stochastic sampling is not always independent or unbiased. The convergence of SGD with Markovian gradients has been studied in \cite{duchi2012ergodic, sun2018markov}, and SGD with biased gradient estimators is considered in \cite{ajalloeian2020convergence}. Moreover, there are also some research about SGD with MCMC noise, which we call MCMC-SGD. Atchade et al. consider stochastic proximal gradient algorithms with MCMC sampling on convex objective \cite{atchade2017perturbed}. The uniform-in-$\theta$ ergodic assumption is proposed to analyze the Markov noise for convex bounded functions in their stochastic proximal gradient algorithm. Karimi et al. study SGD with Markovian noise \cite{karimi2019non}. They analyze the convergence by controlling the noise through the Poisson equation. But strong assumptions are given and the convergence result has an extra term that depends on the bias of the estimation scheme.

The contributions of this paper are multifaceted and useful in the advancement of stochastic optimization, particularly in the context of machine learning and scientific computing. Three main contributions are delineated as follows in a structured manner:

\begin{enumerate}
    \item \textbf{Error analysis of MCMC estimators}: A novel aspect of this paper is the application of concentration inequalities to estimate the upper bounds of bias and variance associated with MCMC sampling. Compared with the conventional boundedness assumption in \cite{miasojedow2014hoeffding,jiang2018bernstein,fan2021hoeffding,paulin2015concentration}, we adopt a broader assumption of unbounded functions, recognizing the more complex and realistic scenarios often encountered in MCMC sampling. Based on concentration inequalities, our approach investigates the MCMC error for a class of sub-exponential functions from the perspective of spectral gap. This analysis is universal and non-asymptotic, entailing a meticulous examination of the non-stationary Markov chain on unbounded functions.
    \item \textbf{First-order convergence of MCMC-SGD}: We demonstrate that the biased stochastic gradient, a result of the MCMC sampling, achieves first-order stationary convergence. In comparison with \cite{atchade2017perturbed}, we conduct a detailed analysis on the MCMC error and extend its assumption of convex bounded functions. Meanwhile,  our analysis does not need assumptions with the Poisson equation in \cite{karimi2019non}.  
    The convergence rate is quantified as $O\left(\frac{\log K}{\sqrt{n K}}\right)$, which is established after $K$ iterations, given a sufficiently large sample size $N$. This result is instrumental in validating the effectiveness of the MCMC-SGD algorithm and provides a theoretical viewpoint that can be used to guide the design and tuning of the algorithm.
    \item \textbf{Escaping from saddle points and second-order convergence}: Our investigation extends into the realm of second-order convergence properties. The SGD escaping from saddle points with unbiased gradient estimators has been studied in \cite{ge2015escaping,daneshmand2018escaping,jin2017escape}. Under the influence of biased MCMC noise, it is imperative to examine the second-order convergence properties of MCMC-SGD for the problem \eqref{eq:loss}. By substantiating the correlated negative curvature (CNC) condition under errors, we generalize the analysis of unbiased SGD escaping from saddle points in \cite{daneshmand2018escaping} to MCMC-SGD, where the gradient estimator is biased and requires more carefully analysis. A second-order convergence guarantee characterized by a rate of $O\left(\epsilon^{-11/2}\log^{2} \left(\frac{1}{\epsilon}\right)\right)$. This guarantee not only provides theoretical assurance of the convergence speed but also serves specifically as a testament to the algorithm's ability to efficiently approximate eigenvalues for variational eigenvalue problems in quantum mechanics.
\end{enumerate}

The rest of this paper is organized as follows. In Section \ref{sec:model}, we describe the applications of the general optimization problem and introduce the SGD algorithm with MCMC sampling. In Section \ref{sec:conv}, we first give our assumptions for the function and the variational distribution. Then, the sampling error is analyzed asymptotically by the concentration inequality for Markov chains. We prove that the MCMC-SGD algorithm converges to stationary points and provide the convergence rates. In Section \ref{sec:saddle}, the correlated negative curvature condition of the MCMC-SGD algorithm is verified in empirical sampling under reasonable assumptions and the second-order convergence characteristics are further investigated. We establish the convergence guarantee to avoid saddle points with high probability, based on the stochastic perturbations of MCMC algorithms.

\section{SGD with MCMC estimators}

\label{sec:model}
For the problem \eqref{eq:loss}, we notice that parameter dependency exists in both the function and distribution. It is regarded as a more generalized form than purely optimizing distributions or functions in stochastic optimization problems. 

To begin with, we formally present the assumptions regarding $f_{\theta}$ and $\pi_{\theta}$ in problem \eqref{eq:loss}. 
    \begin{assumption}
        \label{asm:loss}
        For the stochastic optimization problem \eqref{eq:loss}, the following two properties holds correspondingly.
        \begin{enumerate}
            \item[(1)] The function $f_{\parameter}(x)$ satisfies
             \begin{equation}
                \label{eq:restriction}
                \ex{x\sim\pi_{\parameter}}{\nabla_{\parameter}f_{\parameter}(x)}=0.
            \end{equation}
            \item[(2)] There exists an energy function  $\phi_{\parameter}(x)$ such that the distribution $\pi_{\parameter}(x)$ is parameterized as
            \begin{equation}
                \label{eq:energy}
                \pi_{\parameter}(x)=\frac{e^{\phi_{\parameter}(x)}}{\int_{x\in \Xcal}e^{\phi_{\parameter}(x)}dx}\propto e^{\phi_{\parameter}(x)},
            \end{equation} 
            where $\phi_{\parameter}(x)$ is a known parameterization.
        \end{enumerate}
    \end{assumption}

    The function $f_{\parameter}(x)$ need not be assumed bounded on the space $\mathcal{X}$, as numerous examples have demonstrated that such an assumption is often difficult to satisfy. Assumption \ref{asm:loss}(1) constitutes an extension to the case where $f$ is independent of the parameter $\theta$, that is, $\nabla_{\parameter}f_{\parameter}(x)=0$ for all $x\in\Xcal$. We naturally generalize the case where $f$ is parameter-independent. The function $f_{\theta}$ with restriction \eqref{eq:restriction} can not only cover a wider range of stochastic optimization problems but also retains the form of policy gradients, facilitating our optimization efforts.
    
    The distribution $\pi_{\parameter}$ is typically viewed as an artificially designed parametric distribution, encompassing these distributions from the exponential family (e.g., Gaussian, Bernoulli, Poisson distribution), structured probabilistic models such as graphical models (e.g., Bayesian networks, Markov random fields), and a series of deep learning architectures that serve as function approximators. In most scenarios, we may not have direct access to the value of $\pi_\theta$, but instead $\pi_\theta$ is parameterized by the energy function $\phi_\theta$. Assumption \ref{asm:loss}(2) expounds that $\pi_{\theta}$ possesses a unnormalized form with the energy function $\phi_{\parameter}$, which frequently occurs in distributions expressed by neural networks. The relation \eqref{eq:energy} is a weak assumption and often appears in the design of complex distributions.
    Since the energy function $\phi_{\theta}$ is typically chosen to be tractable and differentiable with respect to the parameters $\theta$, the computation of $\int_{x\in \Xcal}e^{\phi_{\parameter}(x)}dx$ is often infeasible for complex models or high-dimensional spaces, as it involves an integral over all possible states of $x\in \Xcal$. However, MCMC techniques, such as Metropolis-Hastings algorithm and Stochastic Gradient Langevin Dynamics and so on, can sample from unnormalized distributions, thereby circumventing the need for complex integral computations.


To substantiate the validity of this framework, let us consider the following three problems as exemplary cases, where it can be verified that $f_{\parameter}$ is unbounded and satisfies \eqref{eq:restriction}. 
\setcounter{equation}{0}
\subsection{Applications}
\label{subsec:application}
\subsubsection{Variational Monte Carlo for many-body quantum problems} 
\label{subsubsec:VMC}
	Consider a many-body quantum system with $N$ particles. We denote the $N$-particle configuration of this system by $x:=(x_1,\cdots,x_N) \in \Xcal := \Acal^N$, with $\Acal$ being the one-particle configuration space, which can be continuous or discrete. The wavefunction $\Psi:\Xcal\rightarrow\Cbb~({\rm or}~\Rbb)$ describes the quantum state of the many-body system, and is often required to satisfy some symmetric/anti-symmetric conditions. The Hamiltonian $\Hcal$ is a self-adjoint operator acting on the wavefunction, which determines the dynamics and the ground state of the quantum system. 

    A central issue in quantum physics is to compute the ground state energy and wavefunction of the system, which corresponds to the lowest eigenvalue $E_0$ and its corresponding eigenfunction of 
\begin{eqnarray}
\label{eigen}
\Hcal\Psi_0 = E_0\Psi_0 .
\end{eqnarray}
Since it is challenge to  solve the very high dimensional eigenvalue problem \eqref{eigen} directly, an alternative way is to minimize the following Rayleigh quotient
\begin{align}
\label{min:Rayleigh}
E_0 \approx 
\min_{\parameter\in \Theta}\frac{\int_{x\in \Xcal} \Psi_{\parameter}^{*}(x)\cdot \big(\Hcal\Psi_{\parameter}\big)(x)dx}{\int_{x\in\Xcal}\Psi_{\parameter}^*(x)\cdot\Psi_{\parameter}(x)dx },
\end{align}
where the wavefunction is parameterized by a suitable ansatz $\Psi_{\parameter},~ \parameter\in\Theta$ within a finite parameter space. 
When $\Xcal$ is a discrete configuration space, the above integrals in \eqref{min:Rayleigh} is regarded as summations over all configurations. 
We can regard \eqref{min:Rayleigh} as a stochastic optimization problem by the following reformulation:
\begin{equation}
    \label{eq:vmc}
    \begin{aligned}
        \min_{\parameter\in \Theta}\int_{x\in\Xcal}\underbrace{\frac{\Psi_{\parameter}^*(x)\cdot\Psi_{\parameter}(x)}{\int_{x\in\Xcal}\Psi_{\parameter}^*(x)\cdot\Psi_{\parameter}(x)dx}}_{\pi_{\parameter}(x)}\cdot\underbrace{\frac{\Hcal\Psi_{\parameter}(x)}{\Psi_{\parameter}(x)}}_{f_{\parameter}(x)}dx
        =\ex{x\sim\pi_{\parameter}}{f_{\theta}(x)}.
    \end{aligned}
\end{equation}
As the Hamiltonian $\Hcal$ is a self-adjoint linear operator, the following holds for $f_{\parameter}(x)$:
\begin{equation*}
    \ex{x\sim\pi_{\parameter}}{\nabla_{\theta}f_{\theta}(x)}=\frac{\int_{x\in \Xcal} \Psi_{\parameter}^{*}(x)\cdot \Hcal\nabla_{\theta}\Psi_{\parameter}(x)-\nabla_{\theta}\Psi_{\parameter}^{*}(x)\cdot \Hcal\Psi_{\parameter}(x)dx}{\int_{x\in\Xcal}\Psi_{\parameter}^*(x)\cdot\Psi_{\parameter}(x)dx}=0.
\end{equation*}
    
For examples, the $N$-electron configuration for a many-electron system in $3$ dimension is $x=(x_1,\cdots,x_N)$ with $x_i=(r_i,\sigma_i)\in \Acal=\Rbb^3\times \Zbb_2$, where $r_i$ represents the spatial coordinate and $\sigma_i$ is the spin coordinate.
    The Hamiltonian of the electron system is given by
    \begin{eqnarray}
    \label{hamiltonian:SE}
    \Hcal = -\frac{1}{2}\sum_{i=1}^N\Delta_{r_i} + \sum_{i=1}^{N} v_\mathrm{ext}(r_i) + \sum_{1\leq i<j\leq N}v_\mathrm{ee}(r_i,r_j),
    \end{eqnarray}
    where $v_\mathrm{ext}:\Rbb^3\rightarrow\Rbb$ is the ionic potential and $v_\mathrm{ee}:\Rbb^3\times \Rbb^3\rightarrow\Rbb$ represents the interaction between electrons
    \begin{equation}
    	\begin{aligned}
    		v_\mathrm{ext}(r)=-\sum_{I}\frac{Z_I}{|r-R_I|},~~ v_\mathrm{ee}(r,r^{\prime})=\frac{1}{|r-r^{\prime}|},\quad r,r^{\prime}\in \Rbb^3,
    	\end{aligned}
    \end{equation}
    with $R_I\in \Rbb^3$ and $Z_I\in \Nbb $ being the position and atomic number of the $I$-th nuclear.

    In quantum many-body systems, the classical VMC employs MCMC algorithms to compute approximate gradients, which garnered renewed interest due to the rise of modern neural network ansatzes. These techniques typically rely on the ability of artificial neural networks to represent complex high-dimensional wavefunction, which have already been explored in many fields of physics and chemistry. The RBM is first proposed by Carleo and Troyer \cite{carleo17} as a variational ansatz for many-body quantum problems. Furthermore, a large number of deep neural networks, such as FFNN \cite{saito17,cai18}, deep RBMs \cite{deng17,glasser18,nomura17,kaubruegger2018}, convolutional neural networks \cite{liang2018,choo2018}, variational autoencoders \cite{rocchetto18}, have been applied to capture the physical features and improve the accuracy of the ground state. Motivated by the traditional Slater-Jastrow-backflow ansatzes, PauliNet \cite{hermann2020deep} and FermiNet \cite{pfau2020ab} use the permutation equivariant and invariant construction of networks and many determinants to approximate a general antisymmetric wavefunction. However, the convergence properties of VMC methods have remained underexplored in a long time. A recent study \cite{abrahamsen2023convergence} by Abrahamsen et al. provides a proof of convergence for VMC methods with an unbiased gradient estimator. 
\subsubsection{Entropy-regularized reinforcement learning}
In RL, exploration in actions is critical to finding good policies during optimization. Without a large number of diverse state-action pairs, the algorithm might settle for a weak policy. To prevent the concentration in policy, the entropy regularization is used to encourage random exploration \cite{williams1991function,mnih2016asynchronous}. Let $x:=(s_0,a_0,s_1,a_1,\dots)$ be the trajectory and $r(x)$ be the cumulative reward over the trajectory $x$. Compared to the classical reinforcement learning, the introduction of entropy regularization for the parameterized policy $\pi_{\parameter}(x)$ yields a distinct objective function, that is, 
	\begin{equation}
        \label{eq:entropy-RL}
        \min_{\parameter\in \Theta}~\ex{x\sim\pi_{\parameter}}{-r(x)}+\beta \ex{x\sim\pi_{\parameter}}{\log\pi_{\parameter}(x)}=\Ebb_{x\sim\pi_{\parameter}}\Big[\underbrace{-r(x)+\beta \log \pi_{\parameter}(x)}_{f_{\parameter}(x)}\Big].\end{equation}
    Since $\int_{x\in \Xcal}\pi_{\parameter}(x)dx=1$, it holds that
    \begin{equation}
        \label{eq:gradzero}
        \beta\int_{x\in \Xcal}\nabla_{\parameter} \pi_{\parameter}(x)=\beta\ex{x\sim\pi_{\parameter}}{\nabla_{\parameter}\log\pi_{\parameter}(x)}=\ex{x\sim\pi_{\parameter}}{\nabla_{\parameter}f_{\parameter}(x)}=0.
    \end{equation} 
    
    \subsubsection{Variational Inference} Approximating complex distributions constitutes a significant facet of the Bayesian statistics. Compared to directly handling distributions within high-dimensional spaces, variational inference (VI) seeks to circumvent the computational complexities by approximating the target distribution with a parameterized simpler distribution \cite{jordan1999introduction,ranganath2014black,kingma2013auto,blei2017variational}. This approach is particularly advantageous when the exact posterior distribution is either too complex to be analytically solvable or when it is computationally prohibitive to sample from, as is often the case in high-dimensional Bayesian inference problems. VI operates on the principle of optimization, where the objective is to minimize the discrepancy between the approximating variational distribution $\pi_{\parameter}(x|y)$ and the true posterior $p(x,y)$. This discrepancy is frequently quantified using the Kullback-Leibler (KL) divergence, a non-symmetric measure of the difference between two probability distributions:
    \begin{equation}
        \label{eq:VI}
        \min_{\parameter\in \Theta}~ \mathrm{KL}(p(x,y)||\pi_{\parameter}(x|y)):= \Ebb_{x\sim\pi_{\parameter}(\cdot|y)}\Bigg[\underbrace{\log\left(\frac{p(x,y)}{\pi_{\parameter}(x|y)} \right)}_{f_\parameter(x)}\Bigg]. 
    \end{equation}
    Analogous to \eqref{eq:gradzero}, we have that $\ex{x\sim\pi_{\parameter}}{\nabla_{\parameter}f_{\parameter}(x)}=0$ for $f_{\parameter}(x)$ defined in \eqref{eq:VI}.

    There are also a lot of attention to improving the explicit variational distribution in VI by applying MCMC sampling. Salimans et al. (2015) propose a hybrid approach that utilizes MCMC within the variational framework to refine the variational distribution \cite{salimans2015markov}. By employing MCMC transitions that leave the target posterior distribution invariant, they introduce a rich class of variational distributions that can potentially capture complex dependencies and multimodal structures. A continuous relaxation is introduced by Maddison et al. (2017) for discrete random variables \cite{maddison2017filtering}, which facilitates the application of gradient-based optimization techniques. Their paper is particularly relevant when dealing with discrete latent variables in VI, where MCMC sampling can be leveraged to approximate the gradients of the objective function more effectively. Li et al. (2017) propose utilizing MCMC sampling not just as a means of refining a given variational distribution, but also as a mechanism for learning an efficient proposal distribution for variational inference \cite{li2017approximate}. By amortizing the cost of MCMC sampling across different iterations of the inference process, they aim to reduce the computational overhead while maintaining the benefits of MCMC sampling efficacy.

    \subsection{Gradient approximation using MCMC}

    To solve the stochastic optimization problem \eqref{eq:loss}, a common approach is to compute the stochastic gradient for iterations. 
    As the Assumption \ref{asm:loss} holds, the gradient of objective function $\Lcal(\parameter)$ is derived by
    \begin{equation}
        \label{eq:gradient}
        \begin{aligned}
        g(\parameter)&=\ex{x\sim \pi_{\parameter}}{f_{\parameter}(x)\nabla_{\parameter}\log \pi_{\parameter}(x)}+\ex{x\sim \pi_{\parameter}}{\nabla_{\parameter}f_{\parameter}(x)}\\
        &=\ex{x\sim \pi_{\parameter}}{\left(f_{\parameter}(x)-\ex{x\sim \pi_{\parameter}}{f_{\parameter}(x)}\right)\nabla_{\parameter}\phi_{\parameter}(x)}.
        \end{aligned}
    \end{equation} 
    The second equality is due to $\ex{x\sim\pi_{\parameter}}{\nabla_{\parameter}f_{\parameter}(x)}=0$ and $\nabla_{\parameter}\log \pi_{\parameter}(x)=\nabla_{\parameter}\phi_{\parameter}(x)-\ex{x\sim \pi_{\parameter}}{\nabla_{\parameter}\phi_{\parameter}(x)}$, implied by Assumption \ref{asm:loss}. We notice that the gradient can be expressed in the form of an expectation, thus allowing the approximation of the gradient through the Monte Carlo method. As for the unnormalized distribution $\pi_{\theta}(x)\propto \phi_{\parameter}(x)$, it is natural to employ MCMC methods to estimate the expectations presented within gradients. 

    Let us consider a Markov chain $\{X_i\}_{i=1}^{\infty}$ with a stationary distribution $\pi_{\parameter}(x)\propto e^{\phi_{\parameter}(x)}$. We have obtained a set of samples $S=\{x_i\}_{i=n_0+1}^{n_0+n}$ following $n_0$ burn-in steps. These samples are generated to approximate expectations with respect to $\pi_{\parameter}(x)$, which is typically intractable. The objective function and its gradient is estimated by the MCMC samples $S$:
    \begin{align}
        \hat{\Lcal}(\parameter;S)&=\frac{1}{|S|}\sum_{x\in S}f_{\parameter}(x),\\
        \hat{g}(\parameter;S)&=\frac{2}{|S|}\sum_{x\in S}\big(f_{\parameter}(x)-\frac{1}{|S|}\sum_{x\in S}f_{\parameter}(x)\big)\nabla_{\parameter} \phi_{\parameter}(x).\label{eq:approxgrad}
    \end{align}
    
    While MCMC methods offer a tractable and efficient strategy for approximating expectations, it must be noted that estimators derived from finite MCMC samples are generally biased. Furthermore, the samples are correlated, which affects the variance of the estimator. To mitigate bias, one must ensure a sufficiently large sample size. However, an excessively large sample size may increase computational burden and reduce the stochasticity for escaping local regions. In light of these considerations, the practitioner must carefully balance the sample size to minimize bias while preserving randomness and computational tractability. A rigorous approach may involve adaptive MCMC methods that adjust the burn-in period and sample size based on convergence diagnostics of Markov chains. It suggests that we should pay attention to the influence of sampling methods in optimization. 
    
    The SGD algorithm is utilized to update the parameters with MCMC samples:
\begin{equation}
	\label{eq:iter}
	\parameter_{k+1}= \parameter_{k} - \alpha_k\hat{g}(\parameter_k,S_{k}),
\end{equation}
where $\alpha_k$ are chosen stepsizes and $S_k$ is the obtained samples with the stationary distribution $\pi_{\parameter_{k}}$. MCMC-SGD provides a powerful framework for optimization in settings where the objective function is defined in terms of expectations over complex distributions. A detailed analysis on the interplay between SGD and MCMC sampling will aid us in deepening our understanding of convergence and refining the algorithms.

\begin{algorithm}[H] 
	\caption{MCMC-SGD}
	\begin{algorithmic}[1]
		\label{alg:VMC}
		\REQUIRE The initialized parameter $\parameter_0$, the sample size $n$ and the length of burn-in periods $n_0$.
		\FOR{ $k=0,1,2,\dots$}
		\STATE Draw $n$ samples $S_k=\{\bmx_{n_0+1}^{(k)},\dots,\bmx^{(k)}_{n_0+n}\}$ by the MCMC algorithm after a burn-in period of $n_0$. 
		\STATE Compute the estimated gradient $\hat{g}(\parameter_k,S_{k})$ by \eqref{eq:approxgrad}.
		\STATE Update the parameter by \eqref{eq:iter} with the stepsize $\alpha_k$.
		\ENDFOR
	\end{algorithmic}
\end{algorithm} 


\section{Convergence analysis of MCMC-SGD}
\label{sec:conv}

\setcounter{equation}{0}
In this section, we explore how SGD performs when it is coupled with MCMC samples. We begin by stating a number of assumptions that are crucial to guaranteeing the necessary characteristics of both the objective function and the sampling process. Then, the MCMC estimator is analyzed by the concentration inequality specific to Markov chains. This approach is fundamentally distinct from that employed in traditional SGD settings.
Upon establishing this analytical framework, we present our convergence theorem for the MCMC-SGD. This theorem reveals that, within a practical and flexible situation, the expected value of the gradient norm is assured to converge to zero. This convergence is noteworthy since it persists despite the bias inherent in MCMC sampling methods, highlighting the effectiveness of the algorithm under a variety of conditions.  It enhances our theoretical understanding of the integration of MCMC-SGD and offers guidance for improving algorithmic methodologies for various stochastic optimization problems.

\subsection{Assumptions}
\label{subsec:asm}
To lay the groundwork for our subsequent assumptions, we first expound upon the definition of sub-exponential random variables. This class of random variables is characterized by their tail behavior, which, in a certain sense, is less extreme than that of heavy-tailed distributions. A precise mathematical definition is as follows:
\begin{definition}
    The sub-exponential norm of a random variable $X$, which assumes values in a space $\mathcal{X}$ and follows a distribution $\pi$, is defined by the expression 
    \begin{equation*}
        \norm{X}_{\psi_1(\pi;\mathcal{X})}=\inf \left\{t > 0 : \ex{\pi}{\exp\lrp{\frac{|X|}{t}}}\leq 2\right\}.
    \end{equation*}
    In the event that $\norm{X}_{\psi_1(\pi;\mathcal{X})}$ is bounded above by a finite value, $X$ is called sub-exponential with parameter $\norm{X}_{\psi_1(\pi;\mathcal{X})}$. Moreover, the sub-exponential norm of a measurable function $f$ with respect to a distribution $\pi$ is defined by 
    \begin{equation*}
        \norm{f}_{\psi_1(\pi;\mathcal{X})}=\inf \left\{t > 0 : \ex{\pi}{\exp\lrp{\frac{|f(X)|}{t}}}\leq 2\right\}.
    \end{equation*}
\end{definition}

We introduce some regularity conditions on $f_{\parameter}$. Noticing that $f_{\parameter}$ may be unbounded for a general class problems, we give the sub-exponential assumption of $f_{\parameter}$ under the distribution $\pi_{\parameter}$.
\begin{assumption}
    \label{asm:local}
    Let $f_{\parameter}(x)$ given in \eqref{eq:loss} satisfy the following conditions. There exist constants $M,L_2>0$ such that
    \begin{enumerate}
        \setlength{\itemsep}{2pt}
        \item[(1)] $\sup\limits_{\parameter\in \Theta}\norm{f_{\parameter}}_{\psi_1(\pi_{\parameter};\mathcal{X})}\leq M$,
        \item[(2)] $\sup\limits_{\parameter\in \Theta}\Ebb_{\pi_{\parameter}}\lb \norm{\nabla_{\parameter}f_{\parameter}(x)}^2\rb\leq L_2^2$.
    \end{enumerate}
\end{assumption}

The mentioned two assumptions are both reasonable and encompass a broad class of scenarios. A common example is $f_{\parameter}=-r(x)+\beta\log \pi_{\parameter}(x)$ with $|r(x)|\leq R$ for all $x\in \Xcal$, which often appears in the optimization of entropy regularization and KL divergence. In this situation, $\norm{f_{\parameter}}_{\psi_1(\pi_{\parameter};\mathcal{X})}\leq t$ is equivalent to 
\begin{equation*}
    \begin{aligned}
        \ex{\pi_{\parameter}}{\exp\left(\frac{|-r(x)+\beta \log\pi_{\parameter}(x)|}{t}\right)}&\leq\ex{\pi_{\parameter}}{\exp\left(\frac{R-\beta \log\pi_{\parameter}(x)}{t}\right)} \\
        &=\int_{x\in \Xcal}e^{R/t}\pi_{\parameter}(x)^{1-\beta/t}dx\leq 2.
    \end{aligned}
\end{equation*}
Since $\int_{x\in \Xcal}\pi_{\parameter}dx=1$ holds for the distribution $\pi_{\parameter}$, there must exist a constant $t_{\parameter}$ relying $\parameter$ such that $\int_{x\in \Xcal}e^{R/t_{\parameter}}\pi_{\parameter}(x)^{1-\beta/t_{\parameter}}dx \leq 2$ and then Assumption \ref{asm:local} (1) is easily satisfied for a compact parameter space $\Theta$. Meanwhile, $\Ebb_{\pi_{\parameter}}\lb \norm{\nabla_{\parameter}f_{\parameter}(x)}^2\rb=\beta^2\ex{\pi_{\parameter}}{\norm{\nabla_{\parameter}\phi_{\parameter}(x)-\ex{\pi_{\parameter}}{\nabla_{\parameter}\phi_{\parameter}(x)}}^2}$, which can be bounded through Assumption \ref{asm:wavefun} (1) on the variational distribution $\pi_{\parameter}$. By establishing the reasonableness of the  Assumption \ref{asm:wavefun} (1) on the variational distribution, one can deduce the validity of the Assumption \ref{asm:local} (2) in this setting. For another instances, when solving the many-electron Schr\"{o}dinger equations, people often use the Jastrow factor \cite{jastrow1955many} to enable the network to efficiently capture the  cusps and decay of the wavefunction.
By exploiting physical knowledge in the construction of the wavefunction ansatz can make the function $f_{\parameter}$ smooth with respect to the parameters $\parameter$, such that Assumption \ref{asm:local} is satisfied in practical VMC calculations.

There are also some regularity conditions on the variational distribution $\pi_{\parameter}(x)$, in order to guarantee Lipschitz continuity of the gradient $g(\parameter)$. Typically, it is more practical to apply these smoothing measures to the energy function $\phi_{\theta}(x)$ rather than directly to the distribution $\pi_{\theta}(x)$. This kind of smoothing helps us control how the gradient changes, which is important for making sure our optimization methods work efficiently and reliably.
\begin{assumption}
    \label{asm:wavefun}
    Let $\phi_{\parameter}(x)$ be differentiable with respect to the parameters $\parameter\in \Theta$ for any $x\in \Xcal$. There exist constants $B,L_1>0$ such that
    \begin{enumerate}
        \setlength{\itemsep}{2pt}
        \item[(1)] $\sup\limits_{\parameter\in \Theta}\sup\limits_{x \in \Xcal}\norm{\nabla_{\parameter}\phi_{\parameter}(x)}\leq B$,
        \item[(2)] $\sup\limits_{\parameter\in \Theta}~\Ebb_{\pi_{\parameter}}\lb \norm{\nabla_{\parameter}^2\phi_{\parameter}(x)}^2\rb\leq L_1^2$.
    \end{enumerate}
\end{assumption}

The reasonableness of these two assumptions is solely based on the choice of parameterization. For instance, $\pi_\theta$ is taken as a canonical form of exponential family, i.e., $\phi_{\parameter}(x)=\parameter^{\T}T(x)$, where $\parameter$ represents the natural parameter vector and $T(x)$ is the sufficient statistic. As $\nabla_{\parameter}\phi_{\parameter}(x)=T(x)$ and $\nabla_{\parameter}^2\phi_{\parameter}(x)=0$, the Assumption \ref{asm:wavefun} holds when $T(x)$ is uniformly bounded for all $x\in \Xcal$. If $\phi_{\parameter}$ is parameterized by a complex network, it is equally imperative to ensure that its gradients with respect to any input and parameters are bounded.

\subsection{Analysis of the MCMC error}
\label{subsec:error}

We introduce following notations which are used frequently throughout this paper. For any function $f:\Xcal\rightarrow \Rbb$ and any distribution $\pi$ on $\Xcal$, we write its expectation $\Ebb_\pi[f]:=\int f(x)\pi(dx)$ and $p$-th central moment $\sigma^p_p[f]:=\Ebb[(f-\Ebb_\pi[f])^p]$. The second central moment, called the variance, is denoted by $\mathrm{Var}_{\pi}[f]=\sigma_2^2[f]:=\Ebb[(f-\Ebb_\pi[f])^2]$. 

Before the analysis of MCMC methods, we provide several general definitions about Markov chains. 
Within our consideration, the state space $\Xcal$ is Polish and equipped with its $\sigma$-algebra $\Bcal$. Let $\{X_i\}_{i=1}^{n}$ be a time-homogeneous Markov chain defined on $\Xcal$. The distribution of the Markov chain is uniquely determined by its initial distribution $\nu$ and its transition kernel $P$. For any Borel set $A\in \Bcal$, let 
\begin{equation*}
    \begin{aligned}
        \nu(A)=\Pbb(X_1\in A),\quad P(X_i,A)=\Pbb(X_{i+1}\in A|X_i).
    \end{aligned}
\end{equation*}
A distribution $\pi$ is called stationary with respect to a transition kernel $P$ if 
\begin{equation*}
    \pi(A)=\int P(x,A)\pi(dx), ~\forall A\in \Bcal.
\end{equation*}
When the initial distribution $\nu=\pi$, we call the Markov chain stationary.

Our analysis starts from the perspective of operator theory on Hilbert spaces. Let $\pi$ be the stationary distribution of a Markov chain and $L^{2}(\pi)=\{f:\Ebb_\pi[f^2]<\infty \}$ be the Hilbert space equipped with the norm $\norm{f}_{\pi}=(\Ebb_\pi[f^2])^{1/2}$. Each transition kernel can be viewed as a Markov operator on the Hilbert space $L^2(\pi)$. The Markov operator $\operatorname{P}:L^{2}(\pi)\rightarrow L^{2}(\pi)$ is defined by 
\begin{equation*}
    \begin{aligned}
        \operatorname{P}f(x)=\int f(y)P(x,dy),~\forall x\in \Xcal,~\forall f \in L^{2}(\pi).
    \end{aligned}
\end{equation*}
It is easy to show that $\operatorname{P}$ has the largest eigenvalue $1$. Intuitively but not strictly, the gap between $1$ and other eigenvalues matters to the Markov chain from non-stationarity towards stationarity. Hence, we introduce the definition of the absolute spectral gap.
\begin{definition}[absolute spectral gap]
    \label{def:absgap}
     A Markov operator $\operatorname{P}:L^{2}(\pi)\rightarrow L^{2}(\pi)$ admits an absolute spectral gap $\gamma$ if 
    \begin{equation*}
        \gamma(\operatorname{P})=1-\lambda(\operatorname{P}):=1-\interleave \operatorname{P}-\Pi\interleave _{\pi}>0,
    \end{equation*}
    where  $\Pi:f\in L^2(\pi)\rightarrow\Ebb_\pi[f]\mathit{1}$ is the projection operator with $\mathit{1}$ denoting the identity operator and $\interleave\cdot\interleave_{\pi}$ is the operator norm induced by $\norm{\cdot}_{\pi}$ on $L^2(\pi)$.
\end{definition}


We consider the sample set $S=\{x_{i}\}_{i=n_0+1}^{n_0+n}$  generated from the desired distribution $\pi_{\parameter}$ by the MCMC algorithm . The Markov operator, denoted by $\operatorname{P}_{\parameter}$, determines the convergence rate of the Markov chain. We assume that there exists a uniform lower bound of the spectral gap.  
\begin{assumption}
    \label{asm:unigap}
    Let $\operatorname{P}_{\parameter}$ be the Markov operator induced by the MCMC algorithm with the absolute spectral gap $\gamma(\operatorname{P}_{\parameter})$. For any $\parameter\in \Theta$, there is a positive lower bound of absolute spectral gaps, that is,
    \begin{equation*}
        \gamma := \inf_{\parameter\in\Theta}\gamma(\operatorname{P}_{\parameter}) >0.
    \end{equation*}
\end{assumption}

This assumption excludes the situation that $\inf_{\parameter\in\Theta}\gamma(\operatorname{P}_{\parameter})=0$, which ensures the uniform convergence of the Markov chain.. The spectral gap $\gamma$ measures the lower bound on the mixing rate of the Markov chain, thereby guaranteeing the ergodicity of the MCMC algorithm. Compared to the uniformly geometric ergodicity assumptions commonly found in other articles \cite{karimi2019non,atchade2017perturbed}, the spectral gap assumption is significantly tighter, intuitively reflecting the factor that influence the convergence rate of a Markov chain. The spectral gap of the MCMC algorithm has been studied for some specific examples. On finite-state spaces, $\operatorname{P}$ becomes a transition matrix while $1-\gamma(\operatorname{P})$ relates to its eigenvalue. A survey of spectrums for Markov chains on discrete state-spaces is in \cite{saloff1997lectures}. This develops amounts of analytic techniques and \cite{diaconis1998we} has further applications to the Metropolis-Hastings algorithm. For the continuous spaces, there are few examples of sharp rates of convergence for the Metropolis-Hastings algorithm. In \cite{kienitz2000convergence,miclo2000trous}, it is claimed that the spectral gap $\gamma\sim O(\epsilon^2)$ for the Gaussian proposal $Q(x^{\prime}|x)\sim \exp\left(-\tfrac{1}{2\epsilon^2}\norm{x^{\prime}-x}^2\right)$.

We commence by stating Theorem \ref{thm:gbiasVar}, which articulates the core result that analyzes the error of MCMC gradient estimators. The proof of this theorem will rely on a series of auxiliary propositions, including Lemmas \ref{lem:Bern}, \ref{lem:Bern-exp} and \ref{lem:BernBiasVar}. Lemma \ref{lem:Bern} introduces a Bernstein inequality for general Markov chains. Using the Bernstein inequality, we obtain the tail bound of the MCMC estimators for sub-exponential functions in Lemma \ref{lem:Bern-exp}. Then, a detailed and intuitive error bound for the MCMC estimators is of consideration in Lemma \ref{lem:BernBiasVar}. Finally, through Lemma \ref{lem:BernBiasVar}, we succinctly capture the sampling error inherent in MCMC algorithms. In the following theorem, we delineate the error bounds of this stochastic gradient, which is one of main results in this paper.
\begin{theorem}
    \label{thm:gbiasVar}
    Let Assumptions \ref{asm:loss}, \ref{asm:local}, \ref{asm:wavefun} and \ref{asm:unigap} hold. For a fixed parameter $\parameter\in \Theta$, we generate the MCMC samples $S=\{x_i\}_{i=n_0+1}^{n_0+n}$ with the stationary distribution $\pi_{\parameter}$ by the MCMC algorithm. Suppose  we start from the initial distribution $\nu$ and let $\rchi=\rchi(\nu,\pi_{\parameter})<+\infty$. The stochastic gradient $\hat{g}(\parameter;S)$, defined by \eqref{eq:approxgrad} has the following error bounds that
    \begin{equation}
        \label{eq:BV}
        \begin{aligned}
            \norm{\Ebb[\hat{g}(\parameter;S)]-g(\parameter)}&\leq B_{n,n_0}:=\frac{4c_1B}{n\gamma},\\
            \epct{\norm{\hat{g}(\parameter;S)-g(\parameter)}^2}
            &\leq V_{n,n_0}:=\frac{16c_2B^2\sigma^2_{2}[f_{\parameter}]}{n\gamma}+\frac{40(c_3+c_4\log^4 n)B^2M^2}{n^2\gamma^2},
        \end{aligned}
    \end{equation}
    where with $\rchi_{n_0}= (1-\gamma)^{n_0}\rchi$ and $C=(1+\rchi_{n_0})^{\frac{1}{2}}$, these factors are defined by $c_1=\rchi_{n_0}\sigma_{2}[f_{\parameter}] +4M[\log \rchi_{n_0}]^{2}_{+}+4M[\log \rchi_{n_0}]_{+}$, $c_2=64(1+\log 2C)$ and $c_3=100(16+4\log 2C)^4$, $c_4=200$.
\end{theorem}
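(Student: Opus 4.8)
The plan is to establish \eqref{eq:BV} by decomposing the MCMC gradient estimator $\hat{g}(\parameter;S)$ into a product of two empirical averages and controlling each factor via the concentration machinery advertised in the paragraph preceding the theorem, namely Lemma \ref{lem:Bern} (Bernstein inequality for general Markov chains), Lemma \ref{lem:Bern-exp} (tail bounds for sub-exponential functions along the chain) and Lemma \ref{lem:BernBiasVar} (bias/variance bounds for MCMC estimators). Write $\bar{f}=\frac{1}{n}\sum_{x\in S}f_{\parameter}(x)$ and recall from \eqref{eq:gradient} and \eqref{eq:approxgrad} that
\begin{equation*}
    \hat{g}(\parameter;S)=\frac{2}{n}\sum_{x\in S}\bigl(f_{\parameter}(x)-\bar f\bigr)\nabla_{\parameter}\phi_{\parameter}(x),\qquad g(\parameter)=2\,\Ebb_{\pi_{\parameter}}\!\bigl[(f_{\parameter}-\Ebb_{\pi_{\parameter}}[f_{\parameter}])\nabla_{\parameter}\phi_{\parameter}\bigr].
\end{equation*}
The first step is an algebraic reduction: introduce the ``oracle'' estimator $\tilde g=\frac{2}{n}\sum_{x\in S}(f_{\parameter}(x)-\Ebb_{\pi_{\parameter}}[f_{\parameter}])\nabla_{\parameter}\phi_{\parameter}(x)$ which has the \emph{true} centering, and note $\hat g-\tilde g=2(\Ebb_{\pi_{\parameter}}[f_{\parameter}]-\bar f)\cdot\frac1n\sum_{x\in S}\nabla_{\parameter}\phi_{\parameter}(x)$. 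Using Assumption \ref{asm:wavefun}(1) the trailing average is bounded in norm by $B$, so $\|\hat g-\tilde g\|\le 2B\,|\bar f-\Ebb_{\pi_{\parameter}}[f_{\parameter}]|$; this reduces everything to (i) controlling the scalar sampling error $\bar f-\Ebb_{\pi_{\parameter}}[f_{\parameter}]$ and (ii) controlling the vector-valued estimator $\tilde g$ whose summands are centered at their true mean.

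The second step applies Lemma \ref{lem:BernBiasVar} to each piece. For (i), the function $f_{\parameter}$ is sub-exponential with $\psi_1$-norm $\le M$ by Assumption \ref{asm:local}(1), so Lemma \ref{lem:BernBiasVar} should give $|\Ebb[\bar f]-\Ebb_{\pi_{\parameter}}[f_{\parameter}]|\lesssim M/(n\gamma)$ up to the $\rchi_{n_0}$-dependent constants and $\Ebb[(\bar f-\Ebb_{\pi_{\parameter}}[f_{\parameter}])^2]\lesssim \sigma_2^2[f_{\parameter}]/(n\gamma)+(\log^4 n)M^2/(n^2\gamma^2)$. For (ii), each coordinate of $(f_{\parameter}-\Ebb_{\pi_{\parameter}}[f_{\parameter}])\partial_{\parameter_j}\phi_{\parameter}$ is sub-exponential — here I use $\|\nabla_{\parameter}\phi_{\parameter}\|_\infty\le B$ from Assumption \ref{asm:wavefun}(1) together with the fact that the product of a bounded function and a sub-exponential function is sub-exponential with $\psi_1$-norm at most $B$ times larger, and its variance is bounded by $B^2\sigma_2^2[f_{\parameter}]$ — so Lemma \ref{lem:BernBiasVar} yields $\|\Ebb[\tilde g]-g(\parameter)\|\lesssim BM/(n\gamma)$ and $\Ebb\|\tilde g-g(\parameter)\|^2\lesssim B^2\sigma_2^2[f_{\parameter}]/(n\gamma)+(\log^4 n)B^2M^2/(n^2\gamma^2)$. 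Summing a triangle inequality for the bias, $\|\Ebb[\hat g]-g\|\le\|\Ebb[\hat g-\tilde g]\|+\|\Ebb[\tilde g]-g\|$, and a $\|\hat g-g\|^2\le 2\|\hat g-\tilde g\|^2+2\|\tilde g-g\|^2$ split for the second moment (the cross term in $\hat g-\tilde g$ needs a Cauchy--Schwarz between the scalar error and the bounded $\nabla\phi$ average, which only costs a constant), and then collecting the $\rchi_{n_0}$-dependent prefactors into the stated $c_1,c_2,c_3,c_4$, gives \eqref{eq:BV}.

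The main obstacle I anticipate is bookkeeping the non-stationarity, i.e. the passage from the burn-in parameter $\rchi=\rchi(\nu,\pi_{\parameter})$ to the effective post-burn-in distance $\rchi_{n_0}=(1-\gamma)^{n_0}\rchi$ and then through Lemma \ref{lem:Bern-exp}/\ref{lem:BernBiasVar} to explicit constants. Because the chain is \emph{not} started from stationarity, the Bernstein bound of Lemma \ref{lem:Bern} carries a $\chi$-divergence (or $L^2$-warm-start) factor, and translating a tail bound of the schematic form $\PP(|\bar f-\Ebb_{\pi}[f]|>t)\le C\exp(-c\,n\gamma\, t^2/\sigma^2)$ valid only for moderate $t$, plus a sub-exponential regime $\exp(-c\,n\gamma\,t/M)$ for large $t$, into second-moment bounds requires integrating the tail and carefully splitting at the crossover scale $t\sim M\log n/(n\gamma)$ — this is exactly where the $\log^4 n$ factor and the quartic constants $c_3,c_4$ emerge. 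The other delicate point is verifying the ``product of bounded and sub-exponential is sub-exponential'' claim with the right constant and making sure the vector-valued estimate is obtained from the coordinatewise ones without losing more than a dimension-free constant (handled by bounding $\|\cdot\|\le\|\cdot\|$ via the uniform bound $\|\nabla_\parameter\phi_\parameter\|\le B$ rather than coordinatewise, so no $\sqrt d$ appears). Everything else is the routine triangle-inequality and Cauchy--Schwarz assembly sketched above.
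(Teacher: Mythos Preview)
Your oracle/empirical-centering split $\hat g-g=(\tilde g-g)+(\hat g-\tilde g)$ is exactly the paper's $I_1/I_2$ decomposition (there written after projecting onto a unit vector $v$), and both pieces are then controlled via Lemma~\ref{lem:BernBiasVar} just as you outline. One point to tighten: for the \emph{bias} of $\hat g-\tilde g=2(\Ebb_{\pi_\parameter}[f_\parameter]-\bar f)\cdot\tfrac1n\sum_{x\in S}\nabla_\parameter\phi_\parameter(x)$, the route ``pointwise bound $\le 2B|\bar f-\Ebb_{\pi_\parameter}[f_\parameter]|$, then take expectation'' yields only $O((n\gamma)^{-1/2})$ rather than the stated $O((n\gamma)^{-1})$, because the two factors are correlated and $\Ebb|\bar f-\Ebb_{\pi_\parameter}[f_\parameter]|$ is of root-variance size; the fix is to write $\Ebb\bigl[(\bar f-\Ebb_{\pi_\parameter} f_\parameter)\cdot v^\top\tfrac1n\sum\nabla_\parameter\phi_\parameter\bigr]$ as covariance plus product of means and bound the covariance by the product of the two root-variances from Lemma~\ref{lem:BernBiasVar}, each of order $(n\gamma)^{-1/2}$.
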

\begin{proof}
    The error of the stochastic gradient can be rewritten as 
    \begin{equation*}
        \begin{aligned}
            \norm{\Ebb[\hat{g}(\parameter;S)]-g(\parameter)}&=\sup_{\norm{v}=1}\abs{\Ebb[v^{\T}\hat{g}(\parameter;S)]-v^{\T}g(\parameter)},\\
            \epct{\norm{\hat{g}(\parameter;S)-g(\parameter)}^2}&=\tr\left(\epct{\left(\hat{g}(\parameter;S)-g(\parameter)\right)\left(\hat{g}(\parameter;S)-g(\parameter)\right)^{\T}}\right),\\
            &\leq d \sup_{\norm{v}=1}\Ebb[\left(v^{\T}\hat{g}(\parameter;S)-v^{\T}g(\parameter)\right)^2].
        \end{aligned}
    \end{equation*}
    For any given $v$ such that $\norm{v}=1$, $v^{\T}g(\parameter)$ is approximated by $v^{\T}\hat{g}(\parameter;S)$. We denote the stationary variables $F = f_{\parameter}(X)$, $Y = v^{\T}\nabla_{\parameter}\phi_{\parameter}(X)$, $\bar{F}= f_{\parameter}(X)-\Ebb_{x\sim \pi_{\parameter}} [f_{\parameter}(x)]$ with $X\sim \pi_{\parameter}$ and the empirical variables  $F_i = f_{\parameter}(x_i)$, $Y_i = v^{\T}\nabla_{\parameter}\phi_{\parameter}(x_i)$, $\bar{F}_i=F_i-\Ebb_{x\sim \pi_{\parameter}} [f_{\parameter}(x)]$. Then it holds 
    \begin{equation}
        \label{eq:gradsplit}
        \begin{aligned}
            \frac{1}{2}\left(v^{\T}\hat{g}(\parameter;S)-v^{\T}g(\parameter)\right)&=\frac{1}{n}\sum_{i=n_0+1}^{n+n_0}F_i Y_i-\left(\frac{1}{n}\sum_{j=n_0+1}^{n+n_0}F_j \right)\left(\frac{1}{n}\sum_{i=n_0+1}^{n+n_0}Y_i\right) - \Ebb_{\pi_{\parameter}} [\bar{F}Y]\\
            &=\underbrace{\frac{1}{n}\sum_{i=n_0+1}^{n+n_0}\bar{F}_i Y_i-\Ebb_{\pi_{\parameter}}[\bar{F}Y]}_{I_1}-\underbrace{\left(\frac{1}{n}\sum_{j=n_0+1}^{n+n_0}\bar{F}_i\right)\left(\frac{1}{n}\sum_{i=n_0+1}^{n+n_0}Y_i\right)}_{I_2}.
        \end{aligned}
    \end{equation}
    With Assumptions \ref{asm:wavefun} and \ref{asm:local}, we have $\norm{\bar{F}}_{\psi_1(\pi_{\parameter},\Xcal)}\leq M$ and $\norm{Y}\leq\norm{v}\norm{\nabla_{\parameter}\log\Psi_{\parameter}(x)} \leq B$. Then the variance is bounded by $\sigma_{2}^{2}[\bar{F}Y]\leq \Ebb[(\bar{F}Y)^2]\leq \sigma_2^{2}[F]B^2$. It also holds 
    \begin{equation*}
        \begin{aligned}
            \Ebb\left[\exp\left( \frac{\abs{\bar{F}Y-\Ebb[\bar{F}Y]}}{2MB}\right) \right]\leq  \Ebb\left[\exp\left( \frac{\abs{\bar{F}}B+MB }{2MB}\right) \right]\leq 2 ,
        \end{aligned}
    \end{equation*}
    which implies $\norm{\bar{F}Y}_{\psi_1(\pi_{\theta},\Xcal)}\leq 2MB$. 
    Applying Lemma \ref{lem:BernBiasVar} to $\{\bar{F}_i Y_{i}\}_{ i= n_0+1}^{n_0+n}$, we have 
    \begin{equation}
        \begin{aligned}
            \label{eq:Bias}
            \norm{\Ebb[I_1]}\leq \frac{c_1B}{n\gamma},\quad
            \Ebb\left[ \norm{I_1}^2\right] \leq \frac{c_2dB^2\sigma^2_{2}[F_{\parameter}]}{n\gamma}+\frac{4(c_3d+c_4d\log^4 n)M^2B^2}{n^2\gamma^2}.
        \end{aligned}
    \end{equation} 

    As $\norm{\bar{F}}_{\psi_1(\pi_{\parameter},\Xcal)}\leq M$, $\Ebb_{\pi_{\parameter}}[\bar{F}]=0$ and $\norm{\frac{1}{n}\sum_{i=n_0+1}^{n+n_0}Y_i}\leq B$, Lemma \ref{lem:BernBiasVar} implies that
    \begin{equation}
        \begin{aligned}
            \label{eq:Var}
            \norm{\Ebb[I_2]}\leq \frac{c_1B}{n\gamma},\quad \Ebb\left[ \norm{I_2}^2\right] \leq \frac{c_2dB^2\sigma^2_{2}[F_{\parameter}]}{n\gamma}+\frac{(c_3d+c_4d\log^4 n)M^2B^2}{n^2\gamma^2}.
        \end{aligned}
    \end{equation}
    Combining \eqref{eq:Bias} and \eqref{eq:Var}, we finally obtain
    \begin{equation*}
        \begin{aligned}
            \norm{\Ebb[\hat{g}(\parameter;S)]-g(\parameter)}&=2\norm{\Ebb[I_1-I_2]}\leq 2\norm{\Ebb[I_1]}+2\norm{\Ebb[I_2]}\leq B_{n,n_0},\\
            \epct{\norm{\hat{g}(\parameter;S)-g(\parameter)}^2}
            &=4\Ebb[\norm{I_1-I_2}^2]\leq 8\Ebb[\norm{I_1}^2]+8\Ebb[\norm{I_2}^2]\leq V_{n,n_0},
        \end{aligned}
    \end{equation*}
    where $B_{n,n_0},V_{n,n_0}$ are given in \eqref{eq:BV}.
\end{proof}

To refine the proof of the Theorem \ref{thm:gbiasVar}, we first introduce a pivotal tool in this paper, concentration inequalities, to analyze the error of the MCMC estimators in details. The essence of concentration inequalities lies in their capacity to quantify how a random variable can concentrate around its expected value. A Bernstein inequality for general Markov chains is proposed by Jiang and Fan \cite{jiang2018bernstein} and beneficial to our analysis. The following lemma is a direct corollary of Theorem 2 in \cite{jiang2018bernstein}.

\begin{lemma}
    \label{lem:Bern}
    Let $\{X_i\}_{i= 1}^{n}$ be a Markov chain with stationary distribution $\pi$ and absolute spectral gap $\gamma$. Suppose the initial distribution $\nu$ is absolute continuous with respect to the stationary distribution $\pi$ and its derivative $\tfrac{d\nu}{d\pi}\in L^2(\pi)$. Consider a bounded function $h:\mathcal{X}\rightarrow [-c,c]$ with $\Ebb_\pi[h]=0$ and variance $\sigma_2^2[h]$. Then, when $\nu=\pi$ , that is, $\{X_i\}_{i= 1}^{n}$ is stationary, it holds that
    \begin{equation}
        \label{eq:Bern-stat}
        \mathbb{P}_{\pi}\left(\frac{1}{n} \sum_{i=1}^{n} h\left(X_{i}\right)\geq s\right) 
    \leq \exp\left(-\frac{\gamma ns^2}{4\sigma^2+5cs}\right), \qquad \forall s\geq 0.
    \end{equation}
\end{lemma}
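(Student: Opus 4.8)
The plan is to derive this directly from Theorem 2 of Jiang and Fan \cite{jiang2018bernstein} by specializing its general non-stationary bound to the stationary case $\nu=\pi$. First I would recall that the cited Theorem 2 gives, for a bounded mean-zero function $h:\mathcal{X}\to[-c,c]$ with variance $\sigma_2^2[h]$ on a Markov chain with absolute spectral gap $\gamma$, a tail bound of the general shape
\begin{equation*}
    \mathbb{P}_{\nu}\left(\frac{1}{n}\sum_{i=1}^n h(X_i)\geq s\right)\leq \left\|\frac{d\nu}{d\pi}\right\|_{\pi}\exp\left(-\frac{\gamma n s^2}{C_1\sigma^2 + C_2 c s}\right),
\end{equation*}
for suitable absolute constants $C_1,C_2$ (in the form stated there, $C_1=4$ and $C_2=5$ after the appropriate normalization, or can be matched by a constant adjustment). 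The key observation is that when $\nu=\pi$ the Radon–Nikodym derivative $\frac{d\nu}{d\pi}\equiv 1$, so $\left\|\frac{d\nu}{d\pi}\right\|_{\pi}=1$ and the prefactor disappears, leaving exactly \eqref{eq:Bern-stat}.

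The steps in order: (i) verify that the hypotheses of \cite[Theorem 2]{jiang2018bernstein} are met — the chain is time-homogeneous with stationary distribution $\pi$, it admits an absolute spectral gap $\gamma>0$ as in Definition \ref{def:absgap}, $h$ is bounded with range in $[-c,c]$, $\Ebb_\pi[h]=0$, and (trivially, since $\nu=\pi$) the initial density lies in $L^2(\pi)$; (ii) quote the bound from that theorem; (iii) substitute $\nu=\pi$ to kill the $L^2$-norm prefactor; (iv) reconcile the constants in the exponent with the $4\sigma^2+5cs$ denominator claimed here, absorbing any discrepancy into the (looser) constants, which is legitimate since the inequality only becomes weaker. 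I would also note the replacement of $\sigma_2^2[h]$ by $\sigma^2$ in the displayed denominator is just notational shorthand within this proof.

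The main obstacle — really the only nontrivial point — is bookkeeping the exact form and constants of Theorem 2 in \cite{jiang2018bernstein}: their statement may be phrased in terms of $\sum_{i=1}^n h(X_i)\geq ns$ versus the average, may use a slightly different spectral-gap convention (e.g. pseudo-spectral gap versus absolute spectral gap), and may carry the prefactor in a different but equivalent form. Resolving this amounts to a careful translation of notation rather than any new argument; once the cited inequality is written in the normalization $\frac1n\sum h(X_i)$ with the absolute spectral gap $\gamma$, setting $\nu=\pi$ immediately yields \eqref{eq:Bern-stat}. No additional probabilistic content is needed beyond what the cited theorem already provides.
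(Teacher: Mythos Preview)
Your proposal is correct and matches the paper's treatment exactly: the paper does not give a separate proof but simply states that the lemma ``is a direct corollary of Theorem 2 in \cite{jiang2018bernstein},'' which is precisely the specialization-to-$\nu=\pi$ argument you outline. The only content is the notational translation you describe, and the paper offers nothing beyond that.
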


The Bernstein inequality \eqref{eq:Bern-stat} shows how the average of MCMC samples concentrates at the expectation. However, the Markov chain is not always non-stationary. We define $\rchi^2(\nu,\pi):=\norm{\tfrac{d\nu}{d\pi}-1}^{2}_{\pi}$ represents the chi-squared divergence between $\nu$ and $\pi$, by which the Bernstein inequality can be extended into a non-stationary one. We abbreviate $\rchi=\rchi(\nu,\pi)$ and $C=(1+\rchi^2)^{\frac{1}{2}}$. Then, we derive the tail bound of the MCMC estimator for sub-exponential functions in the following lemma.

\begin{lemma}
    \label{lem:Bern-exp}
    Let $\{X_i\}_{i= 1}^{n}$ be a non-stationary Markov chain with stationary distribution $\pi$ and absolute spectral gap $\gamma$. Suppose the initial distribution $\nu$ is absolute continuous with respect to the stationary distribution $\pi$ and its derivative $\tfrac{d\nu}{d\pi}\in L^2(\pi)$.  We consider a function $h\in L^2(\pi)$ satisfying $ \norm{h}_{\psi_1(\pi,\Xcal)}\leq M$. If $s\geq \frac{10M(\log n)^2}{n}$, the following tail bound holds,  
    \begin{equation}
        \label{eq:tail}
        \mathbb{P}_{\nu}\left(\abs{ \frac{1}{n} \sum_{i=1}^{n} h\left(X_{i}\right)-\Ebb_\pi[h] }\geq s\right) 
    \leq 2C\exp\left(-\frac{\gamma ns^2}{64\sigma^2_2[h]}\right) + 2C\exp\left(-\sqrt\frac{\gamma ns}{160M}\right).
    \end{equation}
    In other words, for $\delta>0$, with probability at least $1-\delta$, it holds that
    \begin{equation}
        \label{eq:highprobbound}
        \left|\frac{1}{n}\sum_{i=1}^{n}h(X_i)-\Ebb_\pi[h]\right|\leq 8\sigma_2[h]\sqrt{\frac{\log(4C/\delta)}{n\gamma}}+ 160M\frac{[\log(4Cn/\delta)]^2}{n\gamma}.
    \end{equation}
\end{lemma}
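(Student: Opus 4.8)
The plan is to prove the tail bound \eqref{eq:tail} first, and then derive the high-probability bound \eqref{eq:highprobbound} as a routine consequence. The overall strategy is a standard \emph{truncation} argument: a sub-exponential function $h$ is not bounded, so Lemma \ref{lem:Bern} does not apply directly, but we can split $h$ into a bounded part and a tail part. Concretely, fix a truncation level $\tau>0$ (to be chosen of order $M\log n$) and write $h = h_{\le\tau} + h_{>\tau}$ where $h_{\le\tau}(x) = h(x)\mathbf{1}\{|h(x)|\le\tau\}$ and $h_{>\tau}(x) = h(x)\mathbf{1}\{|h(x)|>\tau\}$. The empirical average $\frac1n\sum h(X_i) - \Ebb_\pi[h]$ then decomposes into (a) $\frac1n\sum h_{\le\tau}(X_i) - \Ebb_\pi[h_{\le\tau}]$, which is handled by the Bernstein inequality for the centered bounded function $h_{\le\tau} - \Ebb_\pi[h_{\le\tau}]$; (b) $\frac1n\sum h_{>\tau}(X_i)$, a sum of nonnegative terms which is nonzero only if some $X_i$ lands in the (small-probability) set $\{|h|>\tau\}$; and (c) the deterministic bias $\Ebb_\pi[h_{>\tau}]$, which is tiny because $\pi(|h|>\tau)$ and $\Ebb_\pi[|h|\mathbf{1}\{|h|>\tau\}]$ decay exponentially in $\tau/M$ by the sub-exponential assumption $\norm{h}_{\psi_1}\le M$ (Markov's inequality applied to $\exp(|h|/M)$).

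The key steps, in order: (i) Record the sub-exponential tail estimates: $\pi(|h|\ge t)\le 2e^{-t/M}$ and $\Ebb_\pi[|h|\mathbf{1}\{|h|\ge t\}]\le C' M e^{-t/M}$ for $t\ge M$, and note $\sigma_2^2[h_{\le\tau}]\le \sigma_2^2[h]$ up to constants (truncation does not increase the variance by more than a constant factor after recentering; one should be slightly careful but this is routine). (ii) Convert the non-stationary probability $\Pbb_\nu$ to the stationary one $\Pbb_\pi$ at the cost of a factor $C=(1+\rchi^2)^{1/2}$: for any event $A$, $\Pbb_\nu(A)=\Ebb_\pi[\tfrac{d\nu}{d\pi}\mathbf{1}_A]\le \norm{\tfrac{d\nu}{d\pi}}_\pi \Pbb_\pi(A)^{1/2} = C\,\Pbb_\pi(A)^{1/2}$ by Cauchy--Schwarz. (This is why the bound carries $2C$ and a square root inside the exponent for the heavy part.) (iii) Apply Lemma \ref{lem:Bern} (in both tail directions) to $h_{\le\tau}-\Ebb_\pi[h_{\le\tau}]$ with $c\approx 2\tau$ and $\sigma^2\approx\sigma_2^2[h]$, balancing $s$ against the two regimes $4\sigma^2$ vs.\ $5cs$ of the Bernstein denominator — the Gaussian regime gives the $\exp(-\gamma n s^2/(64\sigma_2^2[h]))$ term and the linear regime gives an $\exp(-\gamma n s/(C''\tau))$ term, which with $\tau\asymp M\log n$ becomes the $\exp(-\sqrt{\gamma ns/(160M)})$-type term after the square-root loss from step (ii). (iv) Control the tail part: $\Pbb_\pi(\exists i: |h(X_i)|>\tau)\le n\,\pi(|h|>\tau)\le 2n e^{-\tau/M}$, and choosing $\tau = M\log n \cdot(\text{const})$ makes this and the bias $\Ebb_\pi[|h_{>\tau}|]$ negligible (here the hypothesis $s\ge 10M(\log n)^2/n$ guarantees these deterministic terms are dominated by $s$). (v) Collect terms, absorb constants, and arrive at \eqref{eq:tail}; then invert the tail bound (set the RHS equal to $\delta$, solve for $s$ in each of the two terms, take the max) to get \eqref{eq:highprobbound}.

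The main obstacle I anticipate is step (iii)--(iv): getting the constants and the exponents to line up exactly as stated, particularly the appearance of $\sqrt{\gamma ns/(160M)}$ rather than a plain linear exponent. The square root is forced by the Cauchy--Schwarz conversion in step (ii) combined with the linear (large-deviation) regime of Bernstein; one must apply the non-stationary conversion \emph{before} the exponent is reduced, and the interplay between the truncation level $\tau\asymp M\log n$, the $(\log n)^2$ in the hypothesis on $s$, and the factor $5c$ in Lemma \ref{lem:Bern}'s denominator all have to be tracked carefully. A secondary subtlety is that $h_{\le\tau}-\Ebb_\pi[h_{\le\tau}]$ must be genuinely centered with respect to $\pi$ before invoking Lemma \ref{lem:Bern}, so the deterministic shift $\Ebb_\pi[h]-\Ebb_\pi[h_{\le\tau}]=\Ebb_\pi[h_{>\tau}]$ has to be carried along and bounded as in step (iv); lumping it with $s$ requires the stated lower bound on $s$. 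Everything else — the moment estimates in step (i), the union bound in step (iv), and the inversion in step (v) — is routine.
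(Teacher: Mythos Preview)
Your overall strategy---truncate $h$, apply Lemma~\ref{lem:Bern} to the bounded part, control the unbounded part via the sub-exponential tail and a union bound over $i=1,\dots,n$, and pass from $\Pbb_\pi$ to $\Pbb_\nu$ by the Cauchy--Schwarz estimate $\Pbb_\nu(A)\le C\sqrt{\Pbb_\pi(A)}$---is exactly the route the paper takes, and steps (i), (ii), (iv), (v) are all correct.

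The gap is in step (iii), specifically in your choice of truncation level and your explanation of where the $\sqrt{\cdot}$ exponent comes from. With a \emph{fixed} $\tau\asymp M\log n$, the linear regime of Bernstein gives $\exp(-\gamma ns/(c\tau))$ under $\Pbb_\pi$, and Cauchy--Schwarz converts this to $C\exp(-\gamma ns/(2c\tau))$: the exponent is merely halved, since $\sqrt{\exp(-x)}=\exp(-x/2)$. No square root of $s$ appears. Moreover, your union-bound tail $2n\,e^{-\tau/M}\asymp n^{1-c}$ is independent of $s$ and cannot be absorbed into the stated form \eqref{eq:tail}.

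The paper's key move is to let the truncation level depend on $s$: it takes $M'=\sqrt{\gamma n M s/10}$. With this choice the linear-regime Bernstein exponent $\gamma ns/(20M')$ and the tail-union-bound exponent $M'/M$ are \emph{both} equal (up to constants) to $\sqrt{\gamma ns/M}$; this balancing is what produces the $\sqrt{\cdot}$ structure in the second term of \eqref{eq:tail}. The hypothesis $s\ge 10M(\log n)^2/n$ then ensures $M'/M\gtrsim\log n$, which is exactly what is needed to kill the prefactor $n$ in $2n\exp(-M'/M)$. Once you replace your fixed $\tau$ by this $s$-dependent $M'$, the remaining bookkeeping (including the factor-of-two losses from Cauchy--Schwarz that turn $32$ into $64$ and $40$ into $160$) goes through as you outlined.
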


\begin{proof}
    Without loss of generality, we assume that $\Ebb_\pi[h]=0$ in the following proof.
    To deal with a possibly unbounded $h$, we firstly fix a $M^{\prime}>0$ and consider the truncation function $\bar{h}=\max\{\min\{h,M^{\prime}\},-M^{\prime}\}$ and $\hat{h}=h-\bar{h}$. When $h$ is a sub-exponential random variable, its tail decays at least as fast as an exponential function. As $\norm{h}_{\psi_1(\pi,\Xcal)}\leq M$, the Markov's inequality implies 
    \begin{equation}
        \label{eq:subexp-tail}
        \Pbb_\pi(\abs{h}>s)\leq \exp\left(-\frac{s}{M}\right)\Ebb_\pi\left[\exp\left(\frac{|h|}{M}\right)\right] \leq 2\exp\left(-\frac{s}{M}\right).
    \end{equation}
    
    Notice that for any event $A\in\sigma(X_1,\cdots,X_n)$, it holds from the Cauchy-Schwarz inequality
    \begin{equation}
        \label{eq:non-stat}
        \begin{aligned}
            \Pbb_{\nu}(A)
            =&~ \int_{\mathcal{X}} \Pbb(A|X_1=x)\nu(dx) 
            = \int_{A} \Pbb(A|X_1=x)\frac{d\nu}{d\pi}\pi(dx)\\
            \leq&~ \sqrt{\int_{\mathcal{X}} \left(\frac{d\nu}{d\pi}\right)^2\Pbb(A|X_1=x)\pi(dx) \int_{\mathcal{X}} \Pbb(A|X_1=x) \pi(dx) } \\
            \leq&~ C\sqrt{\Pbb_{\pi}(A)}.
        \end{aligned}
    \end{equation}
    Hence, we only consider the case $\nu=\pi$, i.e., the case $\{X_i\}_{i=1}^{n}$ is stationary.
    We first fix a large $M^{\prime}>0$. For any $1\leq i\leq n$, we have 
    \begin{equation}
        \label{eq:highprob1}
        \begin{aligned}
            \Pbb\left(\abs{h(X_i)}>M^{\prime} \right) \leq 2\exp\left(-\frac{M^{\prime}}{M}\right).
        \end{aligned}
    \end{equation} 
    It follows from the inclusion of events that
    \begin{equation*}
        \begin{aligned}
            \mathbb{P}\left(\abs{ \frac{1}{n} \sum_{i=1}^{n} h\left(X_{i}\right) }\geq s\right)
            \leq&~\mathbb{P}\left(\abs{ \frac{1}{n} \sum_{i=1}^{n} \bar{h}\left(X_{i}\right) }\geq s\right) + \Pbb\left( \exists ~ 1\leq i\leq n,~ \abs{h(X_i)}>M^{\prime}\right)\\
            \leq&~\mathbb{P}\left(\abs{ \frac{1}{n} \sum_{i=1}^{n} \bar{h}\left(X_{i}\right) }\geq s\right) + + 2n\exp\left(-\frac{M^{\prime}}{M}\right).
        \end{aligned}
    \end{equation*}
    Notice that $\abs{\EE_{\pi}[\bar{h}]}=\abs{\EE_{\pi}[\hat{h}]}\leq 2M\exp\left(-\frac{M^{\prime}}{M}\right)$. Therefore, when $s\geq 2\abs{\EE_{\pi}[\hat{h}]}$, it holds that
    \begin{equation*}
        \begin{aligned}
            &\mathbb{P}\left(\abs{ \frac{1}{n} \sum_{i=1}^{n} \bar{h}\left(X_{i}\right) }\geq s\right) \\
            &\leq~\mathbb{P}\left(\abs{ \frac{1}{n} \sum_{i=1}^{n} \bar{h}\left(X_{i}\right) -\EE_{\pi}[\bar{h}]}\geq s-\abs{\EE_{\pi}[\bar{h}]}\right) \\
            &\leq~\mathbb{P}\left(\abs{ \frac{1}{n} \sum_{i=1}^{n} \bar{h}\left(X_{i}\right) -\EE_{\pi}[\bar{h}]}\geq \frac{s}{2}\right) \\
            &\leq~ 2\exp\left(-\frac{\gamma ns^2}{16\sigma^2_2[\bar{h}]+10M^{\prime}s}\right) \leq~ 2\exp\left(-\frac{\gamma ns^2}{16\sigma^2_2[h]+10M^{\prime}s}\right),
        \end{aligned}
    \end{equation*} 
    where the third inequality is due to \eqref{eq:Bern-stat}, and  the last inequality uses $\sigma^2_2[\bar{h}]\leq \sigma^2_2[h]$. 
    Finally, for any fixed $s\geq \frac{10M(\log n)^2}{n}$, we take $M^{\prime}=\sqrt{\gamma nMs/10}$ and then obtain

        \begin{align*}
            &\mathbb{P}\left(\abs{ \frac{1}{n} \sum_{i=1}^{n} h\left(X_{i}\right) }\geq s\right)\\
            &\leq~2\exp\left(-\frac{\gamma ns^2}{16\sigma^2_2[h]+10M^{\prime}s}\right) + 2n\exp\left(-\frac{M^{\prime}}{M}\right)\\
            &\leq~ 2\exp\left(-\frac{\gamma ns^2}{2\max\{16\sigma^2_2[h],10s\sqrt{\gamma nMs/10}\}}\right) + 2n\exp\left(-\sqrt\frac{\gamma ns}{10M}\right)\\
            &\leq~ 2\exp\left(-\frac{\gamma ns^2}{32\sigma^2_2[h]}\right) + 2\exp\left(-\sqrt\frac{\gamma ns}{40M}\right) + 2n\exp\left(-\sqrt\frac{\gamma ns}{10M}\right)\\
            &\leq~ 2\exp\left(-\frac{\gamma ns^2}{32\sigma^2_2[h]}\right) + 4\exp\left(-\sqrt\frac{\gamma ns}{40M}\right),
        \end{align*}
    where the last inequality holds from $n\exp\left(-\sqrt\frac{\gamma ns}{10M}\right)\leq \exp\left(-\sqrt\frac{\gamma ns}{40M}\right)$ as long as $s\geq \frac{10M(\log n)^2}{n}$. Using the property \eqref{eq:non-stat}, it follows that the tail bound \eqref{eq:tail} holds.
    This completes the proof.
\end{proof}

In the above lemma, we establish the tail bound of the MCMC estimator for unbounded functions. To the best of our knowledge, this result has not appeared in the literatures of Bernstein inequality for Markov chains. The novelty of our method is manifested in the sub-exponential assumption tailored for unbounded functions and the utilization of spectral gap analysis within concentration inequalities. Consequently, a comprehensive and intuitive error bound for the MCMC estimator is provided as follows.

\begin{lemma}
    \label{lem:BernBiasVar}
    Suppose that the condition of Lemma \ref{lem:Bern-exp} holds. Then, error bounds for the MCMC estimator are given as follows. 
    
    (1) The bias satisfies that
    \begin{equation}
        \label{eq:generalbias}
        \left|\Ebb_{\nu}\lrb{\frac{1}{n}\sum_{i=1}^{n}h(X_i)}-\Ebb_\pi[h]\right|
        \leq\frac{c_1}{n\gamma},
    \end{equation}
    where $c_1=\sigma_2[h]\minop{1,\rchi}+4M\pos{\log\rchi}^2+4M\pos{\log\rchi}$ and $\rchi=\rchi(\nu,\pi)$. In particular, when $\rchi\leq 1$, $c_1=\sigma_2[h]\rchi$.

    (2) The variance satisfies that
    \begin{equation}
        \label{eq:generalvariance}
        \Ebb_{\nu}\lrb{\left|\frac{1}{n}\sum_{i=1}^{n}h(X_i)-\Ebb_\pi[h]\right|^2} \leq\frac{c_2}{n\gamma}\sigma_2^2[h]+\frac{c_3+c_4\log^4 n}{n^2\gamma^2}M^2,
    \end{equation}
    where $c_2=64(1+\log 2C)$, $c_3=100(16+4\log 2C)^4$, $c_4=200$, and $C=(1+\rchi^2)^{\frac{1}{2}}$. 
\end{lemma}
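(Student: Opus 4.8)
The plan is to obtain the variance bound \eqref{eq:generalvariance} by integrating the tail estimate of Lemma \ref{lem:Bern-exp}, and the bias bound \eqref{eq:generalbias} by a spectral (operator-theoretic) argument, since integrating \eqref{eq:tail} is far too lossy for the bias. In both cases the delicate point is that the non-stationary tail bound \eqref{eq:tail} carries a multiplicative prefactor $2C$ with $C=(1+\rchi^2)^{1/2}$, whereas the asserted constants depend only on $\log(2C)$ — and, for the bias, are $O(1/(n\gamma))$ rather than $O(C/\sqrt{n\gamma})$. Throughout I would write $Z=\tfrac1n\sum_{i=1}^n h(X_i)-\Ebb_\pi[h]$ and, after subtracting the mean, assume $\Ebb_\pi[h]=0$.

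For the variance, I would use $\Ebb_\nu[|Z|^2]=\int_0^\infty 2s\,\Pbb_\nu(|Z|\ge s)\,ds$, split the range at the threshold $s_0=10M(\log n)^2/n$ of Lemma \ref{lem:Bern-exp}, bound $\Pbb_\nu(|Z|\ge s)\le 1$ on $[0,s_0)$ — this contributes $s_0^2$ and feeds the $c_4\log^4 n$ term — and apply \eqref{eq:tail} on $[s_0,\infty)$. The device that removes the $2C$ prefactor from the two remaining integrals $\int 2s\cdot 2Ce^{-as^2}\,ds$ (with $a=\gamma n/(64\sigma_2^2[h])$) and $\int 2s\cdot 2Ce^{-\sqrt{bs}}\,ds$ (with $b=\gamma n/(160M)$) is to move the lower limit up to the point $s_\star$ where the exponential tail equals $1$, i.e.\ $s_\star=\sqrt{\log(2C)/a}$ resp.\ $s_\star=(\log(2C))^2/b$: on $[s_0,s_\star]$ (if nonempty) the tail is $\le 1$, so $\int 2s\,ds\le s_\star^2$, while on $[s_\star,\infty)$ the shifted exponential integral no longer carries $2C$. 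This yields $\tfrac{1+\log(2C)}{a}=\tfrac{64(1+\log 2C)\sigma_2^2[h]}{\gamma n}$ for the Gaussian part, giving $c_2$; for the sub-exponential part the substitution $u=\sqrt{bs}$ turns the integral into incomplete Gamma integrals $\int u^3e^{-u}\,du$ which, together with $s_\star^2$ and $s_0^2$, give a bound of order $\big((\log 2C)^4+(\log n)^4\big)M^2/(\gamma^2 n^2)$, accounting for $c_3$ and $c_4$.

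For the bias, since $\Ebb_\pi[h]=0$ we have $\operatorname{P}^{i-1}h=(\operatorname{P}-\Pi)^{i-1}h$, so $\Ebb_\nu[h(X_i)]=\la \tfrac{d\nu}{d\pi}-1,\ (\operatorname{P}-\Pi)^{i-1}h\ra_\pi$ and, by Cauchy--Schwarz and the operator-norm contraction $\interleave(\operatorname{P}-\Pi)^{k}\interleave_\pi\le\lambda^k$ with $\lambda=1-\gamma$, $|\Ebb_\nu[h(X_i)]|\le \rchi\,\lambda^{i-1}\sigma_2[h]$. When $\rchi\le1$, summing the geometric series in $\tfrac1n\sum_i\Ebb_\nu[h(X_i)]$ gives exactly $\tfrac{\rchi\sigma_2[h]}{n\gamma}$. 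When $\rchi>1$, I would split the index sum at $n_1=\lceil \log\rchi/(-\log\lambda)\rceil\le \log\rchi/\gamma+1$, chosen so that $\lambda^{n_1}\rchi\le1$: the tail $i>n_1$ then contributes at most $\tfrac{\rchi\lambda^{n_1}\sigma_2[h]}{n\gamma}\le\tfrac{\sigma_2[h]}{n\gamma}$, while for the $n_1$ head terms, where the bound $\rchi\lambda^{i-1}\sigma_2[h]$ is useless for large $\rchi$, I would instead use $|\Ebb_\nu[h(X_i)]|\le\Ebb_\nu[|h(X_i)|]$, transfer to the stationary chain by the Cauchy--Schwarz inequality \eqref{eq:non-stat} so that $\Pbb_\nu(|h(X_i)|>s)\le C\sqrt2\,e^{-s/(2M)}$ via \eqref{eq:subexp-tail}, and integrate $\minop{1,\,C\sqrt2\,e^{-s/(2M)}}$ to get $O(M\log\rchi)$ per term, hence $O\big(M\log^2\rchi/(n\gamma)\big)$ for the head. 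Collecting the pieces and tracking constants gives $c_1=\sigma_2[h]\minop{1,\rchi}+4M\pos{\log\rchi}^2+4M\pos{\log\rchi}$.

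I expect the main obstacle to be precisely this prefactor control. A direct use of \eqref{eq:tail} weakens the variance by a factor $\sim C$ and, worse, gives a bias of order $C/\sqrt{n\gamma}$ instead of $1/(n\gamma)$; so the two crucial ideas are (i) the ``shift the lower integration limit to where the tail equals $1$'' trick for the variance, and (ii) abandoning the tail bound for the bias in favour of the spectral contraction combined with a burn-in truncation of length $\sim\log\rchi/\gamma$ — which is exactly where the $\pos{\log\rchi}^2$ term comes from. The rest is routine bookkeeping of the numerical constants.
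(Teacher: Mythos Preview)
Your proposal is correct and follows essentially the same route as the paper. For the variance you integrate the tail bound \eqref{eq:tail} using the ``cap at $1$ and shift the lower limit to where the tail equals $1$'' device, which is exactly what the paper does (the paper carries it out for all even moments $m$ and then specializes to $m=2$, but the computation is identical). For the bias you split the index sum at $n_1\asymp [\log\rchi]_+/\gamma$ and use the spectral contraction $|\Ebb_{\nu_i}[h]|\le\rchi(1-\gamma)^{i-1}\sigma_2[h]$ on the tail, which is precisely the paper's argument; the only cosmetic difference is that for the $n_1$ head terms the paper truncates $h$ at level $M'$ and optimizes $M'=2M\log\rchi(\nu_i,\pi)$ to get $|\Ebb_{\nu_i}[h]|\le 4M(1+\log\rchi)$, whereas you reach the same $O(M\log\rchi)$ per-term bound by integrating $\min\{1,C\sqrt{2}\,e^{-s/(2M)}\}$ --- these are equivalent and yield the same $c_1$.
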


\begin{proof}
    
    We follow the notations in the proof of Lemma \ref{lem:Bern-exp}.
    \paragraph{Bias bound.} 
    Clearly, we only need to bound $\frac{1}{n}\sum_{i=1}^{n}\abs{\Ebb_{\nu_i}[h]-\Ebb_{\pi}[h]}$, where $\nu_i= \nu \oP^{i-1}$ is the distribution of $X_i$. 
    For the distribution $\pi^{\prime}$ whose derivative $\frac{d\pi^{\prime}}{d\pi}\in L^{2}(\pi)$, it holds from the Cauchy-Schwarz inequality that 
    \begin{equation}\label{eq:change-measure}
        \begin{aligned}
            \abs{\Ebb_{\pi^{\prime}}[h]-\Ebb_\pi[h]}
        &=\abs{ \EE_{\pi}\left[\left(\frac{d\pi^{\prime}}{d\pi}-1\right)(h-\Ebb_\pi[h])\right] }\\
        &\leq  \left[\EE_{\pi}\left(\frac{d\pi^{\prime}}{d\pi}-1\right)^2\right]^{1/2}\sigma_2[h]
        = \rchi(\pi^{\prime},\pi)\sigma_2[h].
        \end{aligned}
    \end{equation}
    Besides, noticing that $\left(\Ebb_\pi \abs{\hat{h}}^2\right)^{\frac12}\leq 2M\exp\left(-\frac{M^{\prime}}{2M}\right)$, we have
    \begin{align*}
        \abs{\Ebb_{\pi^{\prime}}[\hat{h}]-\Ebb_{\pi}[\hat{h}]}
        \leq \rchi(\pi^{\prime},\pi) \left(\Ebb_\pi \abs{\hat{h}}^2\right)^{\frac12}
        \leq 2M\rchi(\pi^{\prime},\pi)\exp\left(-\frac{M^{\prime}}{2M}\right).
    \end{align*}
    Then \eqref{eq:change-measure} implies that
    \begin{equation}
        \label{eq:change-function}
        \begin{aligned}
            \abs{\Ebb_{\nu_i}[h]-\Ebb_{\pi}[h]}
        \leq&~ \abs{\Ebb_{\nu_i}[\bar{h}]-\Ebb_{\pi}[\bar{h}]}+\abs{\Ebb_{\nu_i}[\hat{h}]-\Ebb_{\pi}[\hat{h}]}\\
        \leq&~ 2M^{\prime}+2M\rchi(\nu_i,\pi)\exp\left(-\frac{M^{\prime}}{2M}\right),
        \end{aligned}
    \end{equation}
    and as long as $\rchi(\nu_i,\pi)\geq 1$, we can choose $M^{\prime}=2M\log \rchi(\nu_i,\pi)$ in \eqref{eq:change-function} to derive $\abs{\Ebb_{\nu_i}[h]-\Ebb_{\pi}[h]}\leq 4M(1+\log \rchi(\nu_i,\pi))$.
    Combining \eqref{eq:change-measure} and \eqref{eq:change-function} yields that
    \begin{equation}
        \begin{aligned}
            \label{eq:nu-pi}
            \abs{\Ebb_{\nu_i}[h]-\Ebb_{\pi}[h]}
        \leq \begin{cases}
             \rchi(\nu_i,\pi)\sigma_2[h], & \text{always},\\
            4M\left(1+\log\rchi(\nu_i,\pi)\right), &\text{if }\rchi(\nu_i,\pi)\geq 1.
        \end{cases}
        \end{aligned}
    \end{equation}
    Now, by the definition of $\gamma$, it holds that $\rchi(\nu_i,\pi)\leq (1-\gamma)^{i-1} \rchi(\nu_1,\pi)=(1-\gamma)^{i-1} \rchi$. 
        Let us consider the smallest $k\geq 1$ such that $\rchi(\nu_{k},\pi)\leq 1$. Then clearly $k\leq 1+\ceil{\frac{[\log\rchi]_+}{\gamma}}$, and for $i\geq k$, $\rchi(\nu_{i},\pi)\leq (1-\gamma)^{i-1}\min\{1,\rchi\}$. Hence,
        \begin{align*}
            \sum_{i=1}^{n}\abs{\Ebb_{\nu_i}[h]-\Ebb_{\pi}[h]}
            \leq& \sum_{i=1}^{k-1}\abs{\Ebb_{\nu_i}[h]-\Ebb_{\pi}[h]}+\sum_{i=k}^{n}\abs{\Ebb_{\nu_i}[h]-\Ebb_{\pi}[h]}\\
            \leq& \sum_{i=1}^{k-1}4M\left(1+\log\rchi(\nu_i,\pi)\right)+\sum_{i=k}^{n} \rchi(\nu_i,\pi)\sigma_2[h]\\
            \leq& 4M(k-1)\left(1+\log\rchi\right)+ \sigma_2[h] \min\{1,\rchi\}\cdot \frac1{\gamma},
        \end{align*}
    where the first inequality is the triangle inequality and the second uses \eqref{eq:nu-pi}. Therefore, it holds that
    \begin{equation*}
        \frac{1}{n}\sum_{i=1}^{n}\abs{\Ebb_{\nu_i}[h]-\Ebb_{\pi}[h]}
        \leq \frac{1}{n\gamma}\left(\sigma_2[h] \minop{
            1, \rchi}+4M\pos{\log\rchi}^2+4M\pos{\log\rchi}\right).
    \end{equation*}

    \paragraph{Variance bound.}
    In the following proof, we provide an upper bound on all higher-order moments of $\Delta$ simultaneously.
    Denote $A=\log(2C), R_1=8\sqrt{\frac{\sigma_2^2[h]}{\gamma n}}, R_2=\frac{160M}{\gamma n}, s_0=\frac{10M(\log n)^2}{n}$. Then, we only need to bound $\Ebb[\abs{\Delta}^m]$ for even $m$ under the condition
    \begin{align*}
        \Pbb\left(\abs{\Delta}\geq s\right)\leq \exp(A-s^2/R_1^2)+\exp(A-\sqrt{s/R_2}), \qquad \forall s\geq s_0.
    \end{align*}
    Notice that
    \begin{align*}
        \frac1m\Ebb[\abs{\Delta}^m]
        =&~\int_{0}^\infty s^{m-1}\Pbb(\abs{\Delta}\geq s)ds\\
        \leq&~\int_{0}^\infty s^{m-1}\min\{1,\exp(A-s^2/R_1^2)+\exp(A-\sqrt{s/R_2})\}ds\\
        \leq&~I_1+I_2.
    \end{align*}
    where $I_1= \int_{0}^\infty s^{m-1}\min\{1,\exp(A-s^2/R_1^2)\}ds$ and $I_2= \int_{0}^\infty s^{m-1}\min\{1,\exp(A-\sqrt{s/R_2})\}ds$.
    Thus, we further denote $s_1=\max\{s_0,R_1\sqrt{A}\}, s_2=\max\{s_0,R_2A^2\}$. Then
    \begin{align*}
        I_1=&~\int_{0}^{s_1} s^{m-1}ds+\int_{s_1}^\infty s^{m-1}\exp(A-s^2/R_1^2)ds\\
        =&~\frac{s_1^m}{m}+\frac{R_1^{m}}{2}\int_{s_1}^\infty \left(\frac{s}{R_1}\right)^{m-2}\exp(A-s^2/R_1^2)d\left(\frac{s^2}{R_1^2}\right)\\
        =&~\frac{s_1^m}{m}+\frac{R_1^{m}}{2}\int_{s_1^2/R_1^2}^\infty t^{(m-2)/2}\exp(A-t)dt\\
        \leq&~\frac{s_1^m}{m}+\frac{R_1^{m}}{m}\left((A+m/2)^{m/2}-A^{m/2}\right),
    \end{align*}
    where the last inequality is due to the fact $\int_{A}^{\infty} t^{k-1}\exp(A-t)dt\leq (A+k-1)^{k-1}\leq ((A+k)^k-A^k)/k$ for any integer $k\geq 1$. Similarly,
    \begin{align*}
        I_2=&~\int_{0}^\infty s^{m-1}\min\{1,\exp(A-\sqrt{s/R_2})\}ds\\
        =&~\int_{0}^{s_2} s^{m-1}ds+\int_{s_2}^\infty s^{m-1}\exp(A-\sqrt{s/R_2})ds\\
        =&~\frac{s_2^m}{m}+\int_{\sqrt{s_2/R_2}}^\infty 2R_2^{m}t^{2m-1}\exp(A-t)dt\\
        \leq&~ \frac{s_2^m}{m}+\frac{R_2^{m}}{m}\left((A+2m)^{2m}-A^{2m}\right).
    \end{align*}
    Combining these two cases, we obtain
    \begin{align*}
        \Ebb[\abs{\Delta}^m]\leq&~ mI_1+mI_2\\
        \leq&~ s_0^m+s_1^m+R_1^m\left((A+m/2)^{m/2}-A^{m/2}\right) + R_2^m \left((A+2m)^{2m}-A^{2m}\right)\\
        \leq&~ 2s_0^m+R_1^m(A+m/2)^{m/2} + R_2^m (A+2m)^{2m}.
    \end{align*}
    This completes the proof.
\end{proof}

\subsection{First order convergence}
In this subsection, we will first analyze the decrease for a single iteration based on the MCMC error bounds. Then, our main result delineates the convergence of the expected norm of the gradient throughout the iterations of SGD, when cooperating with the MCMC estimator. This convergence is established through a rigorous theoretical framework, underscoring the efficacy of the SGD algorithm in the presence of sampling randomness.

Under our assumptions in subsection \ref{subsec:asm}, we can affirm the $L$-smoothness of the objective function $\Lcal(\parameter)$ to facilitate a conventional optimization analysis. This is a common condition for functions to undergo expected descent. We prove the following lemma by giving the bound of the Hessian.
\begin{lemma}
    \label{prop:gLip}
    Let Assumptions \ref{asm:loss}, \ref{asm:local} and \ref{asm:wavefun} hold.  There exists a constant $L>0$ such that $\Lcal(\parameter)$ is $L$-smooth, that is
    \begin{equation}
        \label{eq:gLip}
        \|g(\parameter_1)-g(\parameter_2)\|\leq L\|\parameter_1-\parameter_2\|,~\forall \parameter_1,\parameter_2 \in \Theta.
    \end{equation} 
\end{lemma}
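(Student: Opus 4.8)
The plan is to prove $L$-smoothness by showing that $\Lcal(\parameter)$ is twice differentiable with a uniformly bounded Hessian, since $\nrm{\nabla^2\Lcal(\parameter)}\le L$ for all $\parameter$ immediately gives \eqref{eq:gLip} by the mean value theorem along the segment joining $\parameter_1$ and $\parameter_2$ (using that $\Theta$ is convex, or arguing on a convex neighborhood). The starting point is the expression \eqref{eq:gradient} for the gradient,
\begin{equation*}
    g(\parameter)=\Ebb_{x\sim\pi_{\parameter}}\lrb{\lrp{f_{\parameter}(x)-\Ebb_{x\sim\pi_{\parameter}}\lrb{f_{\parameter}(x)}}\nabla_{\parameter}\phi_{\parameter}(x)},
\end{equation*}
which I would differentiate once more in $\parameter$. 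Because $\pi_{\parameter}\propto e^{\phi_{\parameter}}$, differentiating the expectation brings down a score factor $\nabla_{\parameter}\log\pi_{\parameter}(x)=\nabla_{\parameter}\phi_{\parameter}(x)-\Ebb_{\pi_{\parameter}}[\nabla_{\parameter}\phi_{\parameter}]$, and in addition we pick up terms involving $\nabla_{\parameter}f_{\parameter}$, $\nabla_{\parameter}^2\phi_{\parameter}$, and the derivative of the centering term $\Ebb_{\pi_{\parameter}}[f_{\parameter}]$. So $\nabla^2\Lcal(\parameter)$ is a sum of expectations, each of which is a product of at most four factors drawn from $\{f_{\parameter}-\Ebb_{\pi_\parameter}[f_\parameter],\ \nabla_{\parameter}\phi_{\parameter}-\Ebb_{\pi_\parameter}[\nabla_\parameter\phi_\parameter],\ \nabla_{\parameter}f_{\parameter},\ \nabla^2_{\parameter}\phi_{\parameter}\}$.

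Next I would bound each such term using the assumptions. Assumption \ref{asm:wavefun}(1) gives $\nrm{\nabla_{\parameter}\phi_{\parameter}(x)}\le B$ pointwise, hence also $\nrm{\nabla_\parameter\phi_\parameter - \Ebb_{\pi_\parameter}[\nabla_\parameter\phi_\parameter]}\le 2B$ pointwise. Assumption \ref{asm:local}(1) ($\norm{f_{\parameter}}_{\psi_1(\pi_\parameter;\Xcal)}\le M$) controls $f_\parameter$ through its moments: the centered variable $\bar f_\parameter = f_\parameter - \Ebb_{\pi_\parameter}[f_\parameter]$ satisfies $\norm{\bar f_\parameter}_{\psi_1}\le 2M$, so all its moments under $\pi_\parameter$ are finite with $\Ebb_{\pi_\parameter}[|\bar f_\parameter|^p]\lesssim (pM)^p$; in particular $\sigma_2[f_\parameter]\lesssim M$. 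The Cauchy–Schwarz (or Hölder) inequality then handles the mixed products: terms with one $\bar f_\parameter$ factor are bounded by $M\cdot B^2$ or $M\cdot B\cdot$(moment of $\nabla^2\phi$), and the term with $\nabla_\parameter f_\parameter$ is bounded using Assumption \ref{asm:local}(2), $\Ebb_{\pi_\parameter}[\nrm{\nabla_\parameter f_\parameter}^2]\le L_2^2$, paired against a bounded or $L^2$ factor. The term with $\nabla^2_\parameter\phi_\parameter$ uses Assumption \ref{asm:wavefun}(2), $\Ebb_{\pi_\parameter}[\nrm{\nabla^2_\parameter\phi_\parameter}^2]\le L_1^2$. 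Assembling, $\nrm{\nabla^2\Lcal(\parameter)}\le L$ with $L$ an explicit polynomial in $B,M,L_1,L_2$, uniformly in $\parameter$.

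I expect the main obstacle to be the justification of differentiating under the integral sign: since $f_\parameter$ is unbounded and $\pi_\parameter$ itself varies with $\parameter$, interchanging $\nabla_\parameter$ with $\Ebb_{x\sim\pi_\parameter}$ requires a dominated-convergence argument, and one must verify the needed integrable envelopes. The sub-exponential control on $f_\parameter$ together with the uniform pointwise bound $\nrm{\nabla_\parameter\phi_\parameter}\le B$ (which makes the density ratios $\pi_{\parameter'}/\pi_\parameter$ and their $\parameter$-derivatives locally controlled on compact parameter neighborhoods) should supply these envelopes, but this step needs care; a clean way is to first establish differentiability and the formula for $\nabla^2\Lcal$ locally, then take suprema. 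A secondary bookkeeping nuisance is simply enumerating all the product terms produced by the second differentiation without omission; organizing the computation as "differentiate the three factors $\bar f_\parameter$, $\nabla_\parameter\phi_\parameter$, and the implicit $\pi_\parameter$-weight one at a time" keeps this manageable. Once the formula is in hand, the norm bound is routine via Cauchy–Schwarz and the four assumptions.
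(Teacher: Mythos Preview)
Your proposal is correct and follows essentially the same route as the paper: compute the Hessian $H(\parameter)=\nabla_\parameter^2\Lcal(\parameter)$ explicitly as a sum of expectations involving $f_\parameter$, $\nabla_\parameter f_\parameter$, $\nabla_\parameter\phi_\parameter$, $\nabla_\parameter^2\phi_\parameter$, then bound $\nrm{H(\parameter)}$ uniformly via Cauchy--Schwarz together with Assumptions~\ref{asm:local} and~\ref{asm:wavefun} and the moment bound $\Ebb_{\pi_\parameter}[f_\parameter^2]\lesssim M^2$ coming from the sub-exponential norm. The paper arrives at the explicit constant $L=BL_1+2ML_2+6MB^2$; your more cautious discussion of differentiation under the integral sign is a point the paper passes over silently.
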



\begin{proof}
    For any $\parameter\in \Theta$, $\norm{f_{\parameter}}_{\psi_1(\pi_{\parameter};\mathcal{X})}\leq M$ implies that
    \begin{equation*}
        \Ebb_{\pi_{\parameter}}\lrb{|f_{\parameter}|^2}^{1/2}\leq\Ebb_{\pi_{\parameter}}\lrb{|f_{\parameter}|}\leq M. 
    \end{equation*}
    The Hessian  $H(\parameter):=\nabla_{\parameter}^2\Lcal(\parameter)$ can be derived by
    \begin{equation}
        \label{eq:Hess}
        \begin{aligned}
            H(\parameter)=&\ex{\pi_{\parameter}}{\nabla_{\parameter}\log\pi_{\parameter}\nabla_{\parameter}f_{\parameter}^{\T}}+\ex{\pi_{\parameter}}{f_{\parameter}\nabla_{\parameter}^2\log\pi_{\parameter}}+\ex{\pi_{\parameter}}{f_{\parameter}\nabla_{\parameter}\log\pi_{\parameter}\nabla_{\parameter}\log\pi_{\parameter}^{\T}},
        \end{aligned}
    \end{equation}
    where $\nabla_{\parameter}\log\pi_{\parameter}=\nabla_{\parameter}\phi_{\parameter}-\ex{\pi_{\parameter}}{\nabla_{\parameter}\phi_{\parameter}}$ and $\nabla_{\parameter}^2\log\pi_{\parameter}=\nabla_{\parameter}^2\phi_{\parameter}-\ex{\pi_{\parameter}}{\nabla_{\parameter}^2\phi_{\parameter}}-\ex{\pi_{\parameter}}{\nabla_{\parameter}\phi_{\parameter}\nabla_{\parameter}\phi_{\parameter}^{\T}}+\ex{\pi_{\parameter}}{\nabla_{\parameter}\phi_{\parameter}}\ex{\pi_{\parameter}}{\nabla_{\parameter}\phi_{\parameter}^{\T}}$.
    Then, we have
        \begin{equation*}
    \begin{aligned}
    \norm{H(\parameter)}
    \leq &\Ebb_{\pi_{\parameter}}\lrb{\norm{\nabla_{\parameter}f_{\parameter} \nabla_{\parameter}\phi_{\parameter}^{\T}}}+2\Ebb_{\pi_{\parameter}}\lrb{|f_{\parameter}|^2}^{1/2}\Ebb_{\pi_{\parameter}}\lrb{\norm{\nabla^{2}_{\parameter}\phi_{\parameter}}^2}^{1/2}+6 M B^2\\
    \leq &  B L_1+2M L_2+6 M B^2,
    \end{aligned}
    \end{equation*}
    where the last inequality is due to Assumptions \ref{asm:local} and \ref{asm:wavefun}.
    Let $L = B L_1+2M L_2+6 M B^2$, then $\Lcal(\parameter)$ is $L$-smooth. 
\end{proof}

According to the book \cite{wright1999numerical}, the $L$-smoothness \eqref{eq:gLip} is also equivalent to 
\begin{equation}
    \label{eq:gLip2}
    \Lcal(\parameter_2)\leq \Lcal(\parameter_1)+\langle g(\parameter_1),\parameter_2-\parameter_1\rangle+\frac{L}{2}\|\parameter_1-\parameter_2\|^2.
\end{equation}

It is already known that the stochastic gradient $\hat{g}$ is biased, unlike the unbiased estimator in the classical SGD. However, the bias decays with increasing burn-in time $n_0$ or the sample size $n$,  which can be regarded sufficiently small. We discuss the expected descent of the function value within each iteration in the following lemma. 

\begin{lemma}
    \label{lem:decrease}
    If the stepsize $\alpha_k\leq \frac{1}{2L}$, for any given $\parameter_k$ the objective function $\Lcal(\parameter_{k+1}) $ decreases in expectation as
    \begin{equation}
        \label{eq:descent}
        \Lcal(\parameter_{k})-\epct{\Lcal(\parameter_{k+1})|\parameter_k}\geq \frac{\alpha_k}{2}\|\nabla\Lcal(\parameter_{k})\|^2-\frac{\alpha_k}{2}\cdot B_{n,n_0}^2
        -\frac{\alpha_k^2L}{2} \cdot V_{n,n_0}.
    \end{equation}
    where  $B_{n,n_0}$ and $V_{n,n_0}$ are defined in Theorem \ref{thm:gbiasVar}.
\end{lemma}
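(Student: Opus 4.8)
The plan is to start from the descent inequality \eqref{eq:gLip2} applied at $\parameter_1=\parameter_k$ and $\parameter_2=\parameter_{k+1}=\parameter_k-\alpha_k\hat{g}(\parameter_k;S_k)$, which gives
\begin{equation*}
    \Lcal(\parameter_{k+1})\leq \Lcal(\parameter_k)-\alpha_k\langle g(\parameter_k),\hat{g}(\parameter_k;S_k)\rangle+\frac{\alpha_k^2 L}{2}\|\hat{g}(\parameter_k;S_k)\|^2.
\end{equation*}
Taking conditional expectation given $\parameter_k$, the cross term becomes $-\alpha_k\langle g(\parameter_k),\Ebb[\hat{g}(\parameter_k;S_k)|\parameter_k]\rangle$, and the last term becomes $\Ebb[\|\hat{g}(\parameter_k;S_k)\|^2|\parameter_k]$. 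The strategy is then to (i) lower bound the cross term by introducing the bias $b_k:=\Ebb[\hat{g}(\parameter_k;S_k)|\parameter_k]-g(\parameter_k)$, writing $\langle g,\Ebb[\hat g]\rangle=\|g\|^2+\langle g,b_k\rangle$ and using Young's inequality $\langle g,b_k\rangle\geq -\tfrac12\|g\|^2-\tfrac12\|b_k\|^2$, together with the bias bound $\|b_k\|\leq B_{n,n_0}$ from Theorem \ref{thm:gbiasVar}; and (ii) upper bound $\Ebb[\|\hat g\|^2|\parameter_k]$ by $\|g(\parameter_k)\|^2+\Ebb[\|\hat g-g\|^2|\parameter_k]+2\langle g,b_k\rangle$ (or more simply by expanding $\|\hat g\|^2=\|g\|^2+2\langle g,\hat g-g\rangle+\|\hat g-g\|^2$ and taking expectations), and then controlling $\Ebb[\|\hat g-g\|^2|\parameter_k]\leq V_{n,n_0}$, again from Theorem \ref{thm:gbiasVar}.

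Concretely, I would combine these as follows: after taking expectations,
\begin{equation*}
    \Ebb[\Lcal(\parameter_{k+1})|\parameter_k]\leq \Lcal(\parameter_k)-\alpha_k\|g(\parameter_k)\|^2-\alpha_k\langle g(\parameter_k),b_k\rangle+\frac{\alpha_k^2 L}{2}\Ebb[\|\hat g\|^2|\parameter_k].
\end{equation*}
For the cross term use $-\alpha_k\langle g,b_k\rangle\leq \frac{\alpha_k}{2}\|g\|^2+\frac{\alpha_k}{2}\|b_k\|^2\leq \frac{\alpha_k}{2}\|g\|^2+\frac{\alpha_k}{2}B_{n,n_0}^2$. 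For the quadratic term, write $\Ebb[\|\hat g\|^2|\parameter_k]=\|g\|^2+2\langle g,b_k\rangle+\Ebb[\|\hat g-g\|^2|\parameter_k]$; here one can either keep $\|g\|^2$ (which is harmless since it is multiplied by $\alpha_k^2 L/2$ and, because $\alpha_k\le 1/(2L)$, can be absorbed) or, to match the stated form exactly, bound $\Ebb[\|\hat g\|^2|\parameter_k]\le V_{n,n_0}$ after noting that $\Ebb[\|\hat g-g\|^2|\parameter_k]\le V_{n,n_0}$ and that the remaining terms are lower-order. Rearranging and using $\alpha_k^2 L/2\le \alpha_k/4$ to absorb the stray $\|g\|^2$ contributions into the leading $\tfrac{\alpha_k}{2}\|g\|^2$ term yields \eqref{eq:descent}.

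The main technical care — not really an obstacle, but the place where the bookkeeping must be precise — is handling the term $\frac{\alpha_k^2 L}{2}\|g(\parameter_k)\|^2$ that appears from expanding $\|\hat g\|^2$: one must verify that under $\alpha_k\le 1/(2L)$ it is dominated by the $\frac{\alpha_k}{2}\|g(\parameter_k)\|^2$ gained from the descent term minus the $\frac{\alpha_k}{2}\|g\|^2$ lost in Young's inequality, so that a clean $\frac{\alpha_k}{2}\|\nabla\Lcal(\parameter_k)\|^2$ survives on the right-hand side. (Alternatively, as the stated inequality suggests, the authors likely upper bound $\Ebb[\|\hat g\|^2|\parameter_k]$ directly by $V_{n,n_0}$ — this is consistent if $V_{n,n_0}$ is understood as a bound on the full second moment; checking which of the two readings the paper intends is the one subtle point.) Everything else is the standard one-step SGD descent computation plus substitution of the bias/variance bounds from Theorem \ref{thm:gbiasVar}, and no further assumptions beyond $L$-smoothness (Lemma \ref{prop:gLip}) and $\alpha_k\le 1/(2L)$ are needed.
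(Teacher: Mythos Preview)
Your overall plan (descent lemma from $L$-smoothness, then split into bias and mean-square-error and invoke Theorem~\ref{thm:gbiasVar}) is the same as the paper's, but the bookkeeping you sketch does not actually produce the constant $\tfrac{\alpha_k}{2}$ in \eqref{eq:descent}, and your proposed alternative is based on a misreading of $V_{n,n_0}$.

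Concretely: you apply Young's inequality to $-\alpha_k\langle g_k,b_k\rangle$ \emph{first}, losing $\tfrac{\alpha_k}{2}\|g_k\|^2$, and only afterwards expand $\Ebb[\|\hat g_k\|^2\mid\parameter_k]=\|g_k\|^2+2\langle g_k,b_k\rangle+\Ebb[\|\hat g_k-g_k\|^2\mid\parameter_k]$. This leaves you with both an extra $\tfrac{\alpha_k^2L}{2}\|g_k\|^2$ and an uncontrolled cross term $\alpha_k^2L\langle g_k,b_k\rangle$. Even using $\alpha_k L\le\tfrac12$ you end up with a leading coefficient of at most $\tfrac{\alpha_k}{4}\|g_k\|^2$, not $\tfrac{\alpha_k}{2}\|g_k\|^2$; so your sentence ``a clean $\tfrac{\alpha_k}{2}\|\nabla\Lcal(\parameter_k)\|^2$ survives'' is not justified. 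Your alternative --- bounding $\Ebb[\|\hat g_k\|^2\mid\parameter_k]\le V_{n,n_0}$ directly --- is simply false: in Theorem~\ref{thm:gbiasVar}, $V_{n,n_0}$ is a bound on the mean-square error $\Ebb[\|\hat g_k-g_k\|^2]$, not on the second moment $\Ebb[\|\hat g_k\|^2]$, which also contains $\|g_k\|^2$.

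The paper's fix is to reverse the order: expand $\|\hat g_k\|^2$ \emph{before} applying Young, so that the two cross terms combine into $-(\alpha_k-\alpha_k^2L)\langle g_k,\hat g_k-g_k\rangle$. Applying Young to this single term with coefficient $(\alpha_k-\alpha_k^2L)$ produces $\tfrac{\alpha_k-\alpha_k^2L}{2}\|g_k\|^2$, and now the $\|g_k\|^2$ contributions cancel exactly: $-(\alpha_k-\tfrac{\alpha_k^2L}{2})+\tfrac{\alpha_k-\alpha_k^2L}{2}=-\tfrac{\alpha_k}{2}$. This is the one piece of ``precise bookkeeping'' you flagged but did not carry out correctly.
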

\begin{proof}
    For convenience, we simplify the notations with $\Lcal_k:= \Lcal(\parameter_{k})$, $g_k:=g(\parameter_k)$ and $\hat{g}_k:=\hat{g}(\parameter_k,S_k)$ for any $k\geq 1$.
    By the Lemma \ref{prop:gLip} and the property \eqref{eq:gLip2}, we perform a gradient descent analysis of the iteration \eqref{eq:iter}, 
    \begin{equation}\label{eqn:sgd-s1}
        \begin{aligned}
            \Lcal_{k+1}\leq & \Lcal_k+\langle g_k,\parameter_{k+1}-\parameter_k\rangle+\frac{L}{2}\|\parameter_{k+1}-\parameter_k\|^2\\
            =&\Lcal_k-\alpha_k\langle g_k,\hat{g}_k-g_k\rangle-\alpha_k\|g_k\|^2+\frac{\alpha_k^2 L}{2}\|\hat{g}_k\|^2\\
            =&\Lcal_k-(\alpha_k-\alpha_k^2L)\langle g_k,\hat{g}_k-g_k\rangle-\left(\alpha_k-\frac{\alpha_k^2L}{2}\right)\|g_k\|^2+\frac{\alpha_k^2 L}{2}\|\hat{g}_k-g_k\|^2.
        \end{aligned}
    \end{equation}
Taking the conditional expectation of \eqref{eqn:sgd-s1} on $\parameter_k$ gives
\begin{equation}\label{eqn:sgd-s2}
    \begin{aligned}
        \cond{\Lcal_{k+1}}{\parameter_{k}}\leq
        &\Lcal_k-\left(\alpha_k-\frac{\alpha_k^2L}{2}\right)\|g_k\|^2\\
        &-(\alpha_k-\alpha_k^2L)\langle g_k,\cond{\hat{g}_k}{\parameter_{k}}-g_k\rangle+\frac{\alpha_k^2 L}{2}\cond{\nrm{\hat{g}_k-g_k}^2}{\parameter_k}.
    \end{aligned}
\end{equation}
Through the fact $|\langle x,y\rangle|\leq (\nrm{x}^2+\nrm{y}^2)/2$, it holds
\begin{equation}\label{eqn:sgd-s3}
\begin{aligned}
    -(\alpha_k-\alpha_k^2L)\iprod{g_k}{\cond{\hat{g}_k}{\parameter_{k}}-g_k}
    \leq& \frac{\alpha_k-\alpha_k^2L}{2}\nrm{g_k}^2
    +\frac{\alpha_k}{2}\nrm{\cond{\hat{g}_k}{\parameter_{k}}-g_k}^2.
\end{aligned}
\end{equation}
Therefore, we can plug in \eqref{eqn:sgd-s3} to \eqref{eqn:sgd-s2} and rearrange to get
\begin{equation}
    \label{eq:descent2}
    2\left(\Lcal_k-\cond{\Lcal_{k+1}}{\parameter_{k}}
    \right)
    \geq \alpha_k\nrm{g_k}^2
    -\alpha_k \nrm{\cond{\hat{g}_k}{\parameter_{k}}-g_k}^2-L\alpha_k^2 \cond{\nrm{\hat{g}_k-g_k}^2}{\parameter_k}.
\end{equation}
Thus, \eqref{eq:descent} holds when we substitute the error bounds in Theorem \ref{thm:gbiasVar} into \eqref{eq:descent2}.
\end{proof}

As $n$ goes towards positive infinity, the objective function decreases as an exact gradient descent. However, $n$ and $n_0$ are not too large because of the high sampling cost in practice. The error bounds studied in Section \ref{subsec:error} are valuable for our analysis of the stochastic optimization. Since $\Lcal(\parameter)$ is non-convex, we consider the convergence rate in terms of the expected norm of the gradient $\Ebb\norm{\nabla_{\parameter}\Lcal(\parameter)}^2$. Finally, we establish our first-order convergence results as follows.  
\begin{theorem}
    \label{thm:expt}
    Let Assumptions \ref{asm:local}, \ref{asm:local}, \ref{asm:wavefun} and \ref{asm:unigap} hold and $\{\parameter_k\}$ be generated by Algorithm \ref{alg:VMC}. If the stepsize satisfies $\alpha_k\leq \frac{1}{2L}$, 
        then for any $K$, we have
        \begin{equation}
            \label{eq:converge1}
            \min_{1\leq k\leq K}\Ebb \|g(\parameter_k)\|^2 \leq O\left(\frac{1}{\sum_{k=1}^{K}\alpha_k}\right)+O\left(\frac{\sum_{k=1}^{K}\alpha_k^2}{n\sum_{k=1}^{K}\alpha_k}\right)+O\left(\frac{(1-\gamma)^{2n_0}}{n^2}\right),
        \end{equation}
        where $O(\cdot)$ hides constants $\gamma,M,B,C$ and retains $n,n_0$ and $K$. In particular, if the stepsize is chosen as  $\alpha_k=\frac{c\sqrt{n}}{\sqrt{k}}$ where $c\leq\frac{1}{2L} $, then we have
        \begin{equation}
            \label{eq:converge2}
            \min_{1\leq k\leq K}\Ebb \|g(\parameter_k)\|^2 \leq O\left(\frac{\log K}{\sqrt{nK}}\right)+O\left(\frac{(1-\gamma)^{2n_0}}{n^2}\right).
        \end{equation}
\end{theorem}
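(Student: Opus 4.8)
The plan is to start from the one-step descent inequality in Lemma~\ref{lem:decrease} and telescope it over $k=1,\dots,K$. Summing \eqref{eq:descent} and using $\Lcal(\parameter_{K+1})\geq \Lcal^\ast := \inf_{\parameter}\Lcal(\parameter)$ (which is finite since $\Lcal$ is $L$-smooth and, under the assumptions, bounded below), we obtain
\begin{equation*}
    \sum_{k=1}^{K}\frac{\alpha_k}{2}\Ebb\|g(\parameter_k)\|^2
    \leq \Lcal(\parameter_1)-\Lcal^\ast
    + \sum_{k=1}^{K}\frac{\alpha_k}{2}B_{n,n_0}^2
    + \sum_{k=1}^{K}\frac{\alpha_k^2 L}{2}V_{n,n_0}.
\end{equation*}
Since $\min_{1\le k\le K}\Ebb\|g(\parameter_k)\|^2 \le \big(\sum_k\alpha_k\big)^{-1}\sum_k\alpha_k\Ebb\|g(\parameter_k)\|^2$, dividing through by $\tfrac12\sum_k\alpha_k$ gives a bound of the form
\begin{equation*}
    \min_{1\le k\le K}\Ebb\|g(\parameter_k)\|^2
    \leq \frac{2(\Lcal(\parameter_1)-\Lcal^\ast)}{\sum_k\alpha_k}
    + B_{n,n_0}^2
    + \frac{L\,V_{n,n_0}\sum_k\alpha_k^2}{\sum_k\alpha_k}.
\end{equation*}

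Next I would substitute the explicit forms of $B_{n,n_0}$ and $V_{n,n_0}$ from Theorem~\ref{thm:gbiasVar}. Here the key point is to track the $n,n_0$ dependence carefully: $V_{n,n_0}=O\!\big(\tfrac{1}{n\gamma}\big)+O\!\big(\tfrac{\log^4 n}{n^2\gamma^2}\big)=O\!\big(\tfrac1n\big)$ up to constants in $\gamma,M,B,C$, which yields the middle term $O\!\big(\tfrac{\sum_k\alpha_k^2}{n\sum_k\alpha_k}\big)$. For the bias term, $B_{n,n_0}^2 = \tfrac{16c_1^2B^2}{n^2\gamma^2}$, and the $n_0$-dependence enters through $\rchi_{n_0}=(1-\gamma)^{n_0}\rchi$ inside $c_1=\rchi_{n_0}\sigma_2[f_\parameter]+4M[\log\rchi_{n_0}]_+^2+4M[\log\rchi_{n_0}]_+$. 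For $n_0$ large enough that $\rchi_{n_0}\le 1$, the bracket terms vanish and $c_1=\rchi_{n_0}\sigma_2[f_\parameter]=O((1-\gamma)^{n_0})$, so $B_{n,n_0}^2=O\!\big(\tfrac{(1-\gamma)^{2n_0}}{n^2}\big)$, producing the last term of \eqref{eq:converge1}. (Strictly, one should note that since the chain re-initializes each iteration at $\parameter_k$, the relevant $\rchi$ may vary with $k$; I would absorb a uniform upper bound $\sup_k\rchi(\nu_k,\pi_{\parameter_k})$ into the hidden constant, which is the one place the statement is implicitly using a uniform-initialization-quality assumption.)

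For the second part, I would plug in $\alpha_k=\tfrac{c\sqrt n}{\sqrt k}$. Then $\sum_{k=1}^K\alpha_k = c\sqrt n\sum_{k=1}^K k^{-1/2}=\Theta(c\sqrt{nK})$ and $\sum_{k=1}^K\alpha_k^2 = c^2 n\sum_{k=1}^K k^{-1}=\Theta(c^2 n\log K)$, so the first term becomes $O\!\big(\tfrac{1}{\sqrt{nK}}\big)$ and the middle term becomes $O\!\big(\tfrac{n\log K}{n\sqrt{nK}}\big)=O\!\big(\tfrac{\log K}{\sqrt{nK}}\big)$, which dominates the first; the bias term $O\!\big(\tfrac{(1-\gamma)^{2n_0}}{n^2}\big)$ is unchanged. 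This gives \eqref{eq:converge2}. I would also check the stepsize admissibility condition $\alpha_k\le\tfrac1{2L}$ required by Lemma~\ref{lem:decrease}: with $c\le\tfrac1{2L}$ one has $\alpha_k=\tfrac{c\sqrt n}{\sqrt k}\le c\sqrt n\le\tfrac{\sqrt n}{2L}$, so this holds only for $k\ge n$; I would either restrict to that regime, note that the first $n$ iterations contribute a lower-order additive constant, or simply remark that the intended scaling treats $n$ as fixed and $K\gg n$.

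The main obstacle is not any single calculation — each is routine — but rather bookkeeping the constants and the per-iteration re-initialization of the Markov chain: the error bounds in Theorem~\ref{thm:gbiasVar} are stated for a \emph{fixed} $\parameter$, while in the algorithm $\parameter_k$ is random and the chain for $S_k$ starts from some $\nu$ whose $\rchi$-divergence to $\pi_{\parameter_k}$ must be controlled uniformly. Making the conditional-expectation argument rigorous (the bound in Theorem~\ref{thm:gbiasVar} must be applied conditionally on $\parameter_k$, and then one takes outer expectation, using that the bounds hold deterministically in $\parameter$) is the step requiring the most care, and it is where the hypothesis "given a sufficiently large sample size $N$" implicitly does its work.
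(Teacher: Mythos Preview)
Your proposal is correct and follows essentially the same route as the paper: rearrange Lemma~\ref{lem:decrease} into $\alpha_k\|g(\parameter_k)\|^2 \le 2(\Lcal_k-\Ebb[\Lcal_{k+1}|\parameter_k])+\alpha_k B_{n,n_0}^2+\alpha_k^2 L V_{n,n_0}$, take total expectation, telescope, divide by $\sum_k\alpha_k$, and then substitute $V_{n,n_0}=O(1/n)$ and $B_{n,n_0}^2=O((1-\gamma)^{2n_0}/n^2)$; the specific-stepsize part is the same harmonic-sum computation. Your added remarks on the stepsize admissibility for small $k$ and on the uniform control of $\rchi(\nu_k,\pi_{\parameter_k})$ are legitimate subtleties that the paper's proof simply passes over in silence.
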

\begin{proof}

Lemma \ref{lem:decrease} suggests that, for any $k\geq 1$, 
\begin{equation*}
    \begin{aligned}
        \alpha_k\nrm{g(\parameter_k)}^2
        \leq & 2\left(\Lcal(\parameter_k)-\cond{\Lcal(\parameter_{k+1})}{\parameter_{k}}\right)+\alpha_k\cdot B_{n,n_0}^2+\alpha_k^2\cdot L V_{n,n_0}.
    \end{aligned}
\end{equation*}
Thus, taking total expectation and summing over $k=1,\cdots,K$ yields
\begin{align*}
    &\left(\sum_{k=1}^{K}\alpha_k \right)\min_{1\leq k\leq K} \epct{\nrm{g(\parameter_k)}^2} \leq~ \sum_{k=1}^{K}\alpha_k \epct{\nrm{g(\parameter_k)}^2}
    \\
    &\leq~ 2\left(\Lcal(\parameter_1)-\epct{\Lcal(\parameter_{K+1})}
    \right)
    +\sum_{k=1}^{K}\alpha_k^2 \cdot LV_{n,n_0}
    + \sum_{k=1}^{K}\alpha_k\cdot B_{n,n_0}^2\\
    &=~O(1)+\sum_{k=1}^{K}\alpha_k^2 \cdot O\left(\frac{1}{n}\right)+\sum_{k=1}^{K}\alpha_k \cdot O\left(\frac{(1-\gamma)^{2n_0}  }{n^2}\right).
\end{align*}
Divide both sides by $\sum_{k=1}^{K}\alpha_k$, then \eqref{eq:converge1} holds. If we take $\alpha_k=\frac{c\sqrt{n}}{\sqrt{k}}$, \eqref{eq:converge2} is implied by
\begin{equation*}
        \sum_{k=1}^{K}\alpha_k=\sum_{k=1}^{K}\frac{c\sqrt{n}}{\sqrt{k}}=O(\sqrt{nK}),\quad \sum_{k=1}^{K}\alpha_k^2=\sum_{k=1}^{K}\frac{c^2n}{k}=O(n\log K).
\end{equation*}
This completes the proof.
\end{proof}

Theorem \ref{thm:expt} shows the convergence rate of the MCMC-SGD with certain choices of stepsizes. The convergence rate is related to the bias and variance produced by the MCMC estimators. A trade-off involves balancing the convergence rate of the algorithm with the sample size. As the bias term depends on $n^2$, we can select an appropriate sample size $n$ to ensure that the bias term has few impact on convergence. Also, in practical applications, we often choose a long burn-in period at the start of the algorithm to reduce the bias in sampling.


\section{Escaping from saddle points}
\label{sec:saddle}
In this section, we discuss how MCMC-SGD escapes from saddle points and converges to second-order stationarity. Daneshmand et al. first propose the CNC assumption, by which the simple SGD without explicit improvement is proven to have second-order convergence \cite{daneshmand2018escaping}. Inspired by the ideas of this study, we proceed to analyze the escape from saddle points under biased gradient estimators. 
To begin with, a second moment lower bound for non-stationary Markov chains is developed to guarantee the sufficient noise of the biased gradient. Then, we verify that the CNC condition under errors is satisfied when our assumptions hold. Eventually, we demonstrate our convergence analysis of Algorithm \ref{alg:VMC} to reach approximate second-order stationary points with high probability.


To deal with the second-order structure, the following assumption is needed. 
\begin{assumption}
    \label{asm:sec} 
    (1) The Hessian matrix $H(\parameter)$ defined in \eqref{eq:Hess} is $\rho$-Lipschitz continuous, i.e.,
    \begin{equation*}
       \norm{H(\parameter_1)-H(\parameter_2)}\leq \rho\norm{\parameter_1-\parameter_2}, ~\forall \parameter_1,\parameter_2\in\Theta.
   \end{equation*}
    
   (2) Let $v$ be the unit eigenvector with respect to the minimum eigenvalue of the Hessian matrix $H(\parameter)$.  There exists a constant $\eta>0$ such that 
   \begin{equation*}
    \label{eq:glowerbound}
    \Ebb_{\pi_{\parameter}}\left[\left(f_{\parameter}(x)v^{T}\left(\nabla_{\parameter}\phi_{\parameter}(x)-\ex{\pi_{\parameter}}{\nabla_{\parameter}\phi_{\parameter}(x)}\right)\right)^2 \right]\geq \eta \sigma_{2}^2[f_{\parameter}],~\forall \parameter \in \Theta.
   \end{equation*}

   (3) For any $\parameter\in \Theta$, the ratio of the sub-exponential norm of $f_{\parameter}$ to its variance has an upper bound $\kappa>0$, that is,
   \begin{equation*}
    \sup_{\parameter\in \Theta}\left(\norm{f_{\parameter}}_{\psi_1(\pi_{\parameter},\Xcal)}/\sigma_2[f_{\parameter}]\right)\leq \kappa.
   \end{equation*}
\end{assumption}

The first assumption is common to perform a second-order convergence analysis. The second assumption refers to the Correlated negative curvature (CNC) condition introduced by Daneshmand et al. \cite{daneshmand2018escaping}, which assumes that the second moment of the projection of stochastic gradients along the negative curvature direction is uniformly bounded away from zero. By this way, the simple SGD without explicit improvement is proven to have second-order convergence. However, the CNC condition is difficult to satisfy when the stochastic gradient is almost surely zero. We modify their CNC condition into the more precise one by introducing the variance of the function $\sigma_2^2(f_{\theta})$. 
The third one guarantees that the $f_{\parameter}$ will have a respectively light-tailed distribution. It holds for most of distributions and especially near the ground state of quantum many-body problems.

Besides, over Assumptions \ref{asm:local} and \ref{asm:wavefun}, we can assume an upper bound of the stochastic gradient to simply our analysis:
\begin{equation}
    \label{eq:gradientbound}
    \begin{aligned}
        l_g:=\sup_{\parameter\in \Theta}\sup_{x\in \Xcal}\norm{\left(f_{\parameter}(x)-\ex{x\sim \pi_{\parameter}}{f_{\parameter}(x)}\right)\nabla_{\parameter}\phi_{\parameter}(x)}<+\infty.
    \end{aligned}
\end{equation}
This treatment is legal because $f_{\parameter}$ is assumed to sub-exponential. Otherwise, the proof can be also established with similar techniques in section \ref{subsec:error}. The upper bound will simplify our proof in Lemma \ref{lem:R2}.

\subsection{The correlated negative curvature condition under errors}
\label{subsec:cnc}
We first prove a lemma about the second moment for non-stationary Markov chains. This lemma establishes that the second moment of the MCMC estimator is bounded from below by a positive quantity relying on the sample size $n$, the spectral gap $\gamma$ and $\Ebb_{\pi}[h^2]$. 
\begin{lemma}
    \label{lem:lowerbound}
    Let $\{X_i\}_{i= 1}^{n_0+n}$ be the Markov chain with the stationary distribution $\pi$ and the initial distribution $\nu$. Suppose it admits a absolute spectral gap $\gamma$ and $\rchi=\rchi(\nu,\pi)<+\infty$. The function $h$ has a finite fourth central moment $\sigma_4^4[h]:=\Ebb_{\pi}[\left(h-\Ebb_{\pi}[h]\right)^4]$. If $n\geq \frac{32}{\gamma^3}$ and $n_0\geq\frac{2}{\gamma}(\log \rchi+\log(\sigma_4[h]/\sigma_2[h])+\log n)$, it holds that  
\begin{equation}
    \label{eq:lowerbound}
    \Ebb_{\nu}\left[\left(\frac{1}{n}\sum_{i=n_0+1}^{n+n_0}h(X_i)\right)^2\right]\geq \frac{\gamma}{4n}\Ebb_{\pi}[h^2].
\end{equation}

\end{lemma}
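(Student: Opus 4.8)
```latex
\paragraph{Proof proposal.}
The plan is to expand the square and exploit the covariance structure of the Markov chain in $L^2(\pi)$. Write $\bar h = h - \Ebb_\pi[h]$ (WLOG $\Ebb_\pi[h]=0$ after recentering, although one must be slightly careful since the lower bound is on $\Ebb_\pi[h^2]$, not $\sigma_2^2[h]$; in fact I expect the intended statement is really about the centered quantity, so I would first reduce to the case $\Ebb_\pi[h]=0$ and then $\Ebb_\pi[h^2]=\sigma_2^2[h]$). Expanding,
\begin{equation*}
\Ebb_{\nu}\!\left[\Big(\tfrac1n\!\!\sum_{i=n_0+1}^{n_0+n}\!\!h(X_i)\Big)^2\right]
= \frac1{n^2}\sum_{i,j}\Ebb_{\nu}[h(X_i)h(X_j)].
\end{equation*}
First I would handle the diagonal terms: there are $n$ of them, and $\Ebb_\nu[h(X_i)^2] = \Ebb_{\nu_i}[h^2]$ where $\nu_i = \nu\oP^{i-1}$. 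Using the change-of-measure estimate \eqref{eq:change-measure}-style bound from Lemma~\ref{lem:BernBiasVar} applied to $h^2$ (whose relevant moment is controlled by $\sigma_4^2[h]$), one gets $\Ebb_{\nu_i}[h^2]\geq \Ebb_\pi[h^2] - \rchi(\nu_i,\pi)\,\sigma_2[h^2]\geq \Ebb_\pi[h^2] - (1-\gamma)^{i-1}\rchi\,\sigma_4^2[h]$. The burn-in hypothesis $n_0\geq \frac2\gamma(\log\rchi + \log(\sigma_4/\sigma_2) + \log n)$ is exactly what forces $(1-\gamma)^{n_0}\rchi\,\sigma_4^2[h]/\sigma_2^2[h]$ to be small (of order $1/n^2$ after using $(1-\gamma)^{n_0}\le e^{-\gamma n_0}$ and $n_0 \ge \frac2\gamma(\cdots)$ giving $e^{-\gamma n_0/2}\le \frac{\sigma_2}{\sigma_4 \rchi n}$), so each diagonal term is at least, say, $\tfrac12\Ebb_\pi[h^2]$ up to a negligible correction, contributing at least roughly $\tfrac1{n}\Ebb_\pi[h^2]$ overall.

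The off-diagonal terms are where the spectral gap enters. For $i<j$, $\Ebb_\nu[h(X_i)h(X_j)] = \Ebb_{\nu_i}\!\big[h\cdot \oP^{\,j-i}h\big]$. If the chain were stationary this would be $\langle h, \oP^{\,j-i}h\rangle_\pi$, bounded in absolute value by $\lambda^{\,j-i}\|h\|_\pi^2 = (1-\gamma)^{\,j-i}\sigma_2^2[h]$ via the absolute spectral gap. Summing the geometric series $\sum_{i<j}(1-\gamma)^{j-i}\le \frac{n}{\gamma}$ gives a cross-term contribution bounded by $\frac{2}{n^2}\cdot\frac{n}{\gamma}\sigma_2^2[h] = \frac{2}{n\gamma}\sigma_2^2[h]$ in absolute value — but this is \emph{too weak}, since it is larger than the diagonal gain when $\gamma$ is small. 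The fix, and the main obstacle, is to \emph{not} bound cross terms by their absolute value but to keep signs: the real quantity $\frac1{n^2}\sum_{i,j}\langle h,\oP^{|i-j|}h\rangle$ is nonnegative-ish structure is not obviously there either. Instead I would use a smarter grouping: split the double sum over consecutive blocks of length $m\sim \frac{1}{\gamma}$ (or rather $m \sim \frac{2\log n}{\gamma}$) so that correlations between different blocks are negligible ($\le (1-\gamma)^m \le n^{-2}$), reducing to bounding within-block sums from below, and within a block use that $\langle h,\oP^k h\rangle \ge -\lambda^k\|h\|^2$ together with the diagonal term $\|h\|^2$ dominating when the block length satisfies $\sum_{k=1}^{m-1}\lambda^k < 1$ — which holds once... hmm, actually $\sum_{k\ge1}(1-\gamma)^k = \frac{1-\gamma}{\gamma}$ which is $\gg 1$, so even this fails. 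The genuinely correct route, which I would pursue, is the standard variance-of-ergodic-average lower bound: write $n^{-2}\sum_{i,j}\langle h,\oP^{|i-j|}h\rangle = \langle h, A_n h\rangle$ where $A_n = n^{-2}\sum_{i,j}\oP^{|i-j|}$ is a self-adjoint operator, diagonalize/spectrally decompose $\oP-\Pi$, and observe that for each spectral value $\mu\in[-\lambda,\lambda]$ the corresponding scalar $n^{-2}\sum_{i,j}\mu^{|i-j|} = \frac1{n^2}\big(\frac{n(1+\mu)}{1-\mu} - \frac{2\mu(1-\mu^n)}{(1-\mu)^2}\big)\ge \frac{1}{n}\cdot\frac{1-\mu}{1+\mu}\cdot\frac{??}{}$; the key elementary inequality is that $\frac1{n^2}\sum_{i,j=1}^n \mu^{|i-j|} \ge \frac{1}{n}\cdot c(\gamma)$ for all $|\mu|\le 1-\gamma$ when $n\gtrsim \gamma^{-1}$, and one checks $c(\gamma)\ge \gamma/2$ or so — this is exactly where the hypothesis $n\ge 32/\gamma^3$ (or $\gamma^{-1}$) is consumed. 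Carrying out this scalar estimate carefully, integrating against the spectral measure of $h$, and adding back the negligible non-stationarity correction from the burn-in, yields $\Ebb_\nu[(\cdot)^2]\ge \frac{\gamma}{4n}\Ebb_\pi[h^2]$.

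Concretely, the key steps in order: (i) recenter to $\Ebb_\pi[h]=0$; (ii) expand the square into diagonal plus off-diagonal sums; (iii) control the deviation from stationarity of the initial-distribution marginals via the chi-squared contraction $\rchi(\nu_i,\pi)\le(1-\gamma)^{i-1}\rchi$ and the change-of-measure inequality applied to $h$ and $h^2$, using the burn-in assumption to make all such corrections $O(n^{-2})\cdot\sigma_2^2[h]$; (iv) reduce to the stationary quadratic form $\langle h, A_n h\rangle_\pi$ with $A_n=\frac1{n^2}\sum_{i,j=1}^n \oP^{|i-j|}$; (v) spectral-decompose $\oP-\Pi$ and reduce to the scalar inequality $\frac1{n^2}\sum_{i,j=1}^n\mu^{|i-j|}\ge \frac{\gamma}{2n}$ valid for $|\mu|\le 1-\gamma$ and $n\ge c/\gamma$; (vi) integrate against the spectral measure and conclude. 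The main obstacle is step (v)–(vi): naive absolute-value bounds on cross terms are too lossy, and one must instead exploit that the ergodic-average operator $A_n$ acts on each spectral component as a \emph{positive} scalar bounded below by $\Theta(\gamma/n)$, which requires the precise elementary analysis of the geometric double sum rather than a union-bound-style estimate.
```
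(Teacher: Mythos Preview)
Your overall strategy is sound, and the key insight --- that the stationary quadratic form $\langle h, A_n h\rangle_\pi$ with $A_n = n^{-2}\sum_{i,j}\oP^{|i-j|}$ acts on each spectral component $\mu\in[-(1-\gamma),1-\gamma]$ as a positive scalar bounded below by $\Theta(\gamma/n)$ --- is correct and is the heart of the matter. Your route differs from the paper's in two places, however.

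First, the reduction to $\Ebb_\pi[h]=0$ is not legitimate here, and the paper does \emph{not} recenter. With $c=\Ebb_\pi[h]$ and $Z=\frac1n\sum h(X_i)$, the paper writes $\Ebb_\nu[Z^2]=c^2+\Ebb_\nu[(Z-c)^2]+2c(\Ebb_\nu[Z]-c)$ and applies Young's inequality to obtain $\Ebb_\nu[Z^2]\ge \tfrac{c^2}{2}+\Ebb_\nu[(Z-c)^2]-2(\Ebb_\nu[Z]-c)^2$. The bias $\Ebb_\nu[Z]-c$ is then controlled by the $\rchi$-contraction after burn-in (as in Lemma~\ref{lem:BernBiasVar}), and at the very end $\tfrac{c^2}{2}\ge\tfrac{\gamma}{4n}c^2$ recovers the uncentered part of $\Ebb_\pi[h^2]=\sigma_2^2[h]+c^2$. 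Your hesitation on this point was warranted; you flagged the issue but did not resolve it.

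Second, for the stationary lower bound on $\Ebb_\pi[(Z-c)^2]$ the paper does not expand the double sum directly. It instead invokes an external result (Theorem~3.1 of Paulin, 2015) giving
\[
\left|\Ebb_\pi[(Z-c)^2]-\frac{\sigma_{\mathrm{asy}}^2[h]}{n}\right|\le \frac{16\sigma_2^2[h]}{\gamma^2 n^2},
\]
and then shows $\sigma_{\mathrm{asy}}^2[h]=\langle h,[2(I-(\oP-\Pi))^{-1}-I]h\rangle_\pi\ge \tfrac{\gamma}{2}\sigma_2^2[h]$ by a short spectral computation. The $\gamma^{-2}n^{-2}$ error term is where the hypothesis $n\ge 32/\gamma^3$ is consumed. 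Your direct scalar analysis of $n^{-2}\sum_{i,j}\mu^{|i-j|}$ is a valid and more self-contained alternative --- indeed slightly sharper in $\gamma$, since it would only need $n\gtrsim 1/\gamma$ rather than $1/\gamma^3$ --- but it is not the paper's argument. The non-stationarity correction (change of measure applied to the square, with moment controlled by $\sigma_4^2[h]$) is essentially as you describe, though the paper applies it once to $(Z-c)^2$ as a whole rather than term-by-term on the expanded double sum.
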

\begin{proof}
    We denote $Z=\frac{1}{n}\sum_{i=n_0+1}^{n+n_0}h(X_i)$ and $c=\Ebb_\pi[h]$. Then it holds
    \begin{equation}
        \label{eq:lb1}
        \Ebb_\nu[Z^2]=c^2+\Ebb_\nu[(Z-c)^2]+2c(\Ebb_\nu[Z]-c)
        \geq \frac{c^2}{2}+\Ebb_\nu[(Z-c)^2]-2(\Ebb_\nu[Z]-c)^2.
    \end{equation}
    Notice the difference 
    \begin{equation}
        \label{eq:diff}
        \begin{aligned}
            \left| \Ebb_{\nu}\left[(Z-c)^2\right] - \Ebb_{\pi}\left[(Z-c)^2\right]\right|
             \leq (1-\gamma)^{n_0}\rchi \cdot \left(  \Ebb_{\pi}\left[(Z-c)^4\right] \right)^{\frac{1}{2}}
        \end{aligned}
    \end{equation}
    where the inequality changes the measure as in \eqref{eq:change-measure}. It follows from the Cauchy-Schwarz inequality that
    \begin{align*}
        \Ebb_{\pi}\left[(Z-c)^4\right]
        =&~\Ebb_{\pi}\left[\left(\frac{1}{n}\sum_{i=n_0+1}^{n+n_0} h(X_i)-\Ebb_\pi[h]\right)^4\right] \\
        \leq&~ \Ebb_{\pi}\left[\frac{1}{n}\sum_{i=n_0+1}^{n+n_0} \left(h(X_i)-\Ebb_\pi[h]\right)^4\right] = \Ebb_{X\sim \pi}\left[\left(f(X)-\Ebb_\pi[h]\right)^4\right] = \sigma_4^4[h],
    \end{align*}
    where the second equality is because $\pi$ is the stationary distribution of the Markov chain.
    Therefore, we derive that
    \begin{equation}
        \label{eq:lb2}
        \Ebb_{\nu}\left[(Z-c)^2\right] \geq \Ebb_{\pi}\left[\left(\frac{1}{n}\sum_{i=n_0+1}^{n+n_0} h(X_i)-\Ebb_\pi[h]\right)^2\right] - (1-\gamma)^{n_0}\rchi\sigma_4^2[h].
    \end{equation}
    It holds from Theorem 3.1 in \cite{paulin2015concentration} that
    \begin{equation}
        \label{eq:lb3}
        \begin{aligned}
            \left|\Ebb_{\pi}\left[\left(\frac{1}{n}\sum_{i=n_0+1}^{n+n_0} h(X_i)-\Ebb_\pi[h]\right)^2 \right]-\frac{\sigma^2_{asy}[h]}{n}\right|\leq \frac{16 \sigma^2_2[h]}{\gamma^2n^2},
        \end{aligned}
    \end{equation}
    where $\sigma^2_{asy}[h]:=\langle h,[2(I-(\oP-\pi))^{-1}-I]h\rangle_{\pi}$ on page 24 of \cite{paulin2015concentration}. Using the spectral method therein, we obtain that 
    \begin{equation}
        \label{eq:lb4}
        \begin{aligned}
        \sigma^2_{asy}[h]&=\langle h,[2(I-(\oP-\pi))^{-1}-I]h\rangle_{\pi}=\norm{(I-\oP)^{-1}h}_{\pi}^2-\norm{P(I-P)^{-1}h}_{\pi}^2\\
        &\geq (1-(1-\gamma)^2)\norm{(I-\oP)^{-1}h}_{\pi}^2\geq \frac{\gamma\sigma^2_2[h]}{2}.
    \end{aligned}
    \end{equation}
    Next, we bound the term $(\Ebb_\nu[Z]-c)^2$ in \eqref{eq:lb1}. By Lemma \ref{lem:BernBiasVar}, it holds that
    \begin{equation}
        \label{eq:lb5}
        \abs{ \Ebb_\nu[Z]-c } = \abs{ \frac{1}{n}\sum_{i=n_0+1}^{n+n_0} \Ebb_{\pi} \left[h(X_i)\right]-\Ebb_\pi[h] } \leq \frac{(1-\gamma)^{n_0} \rchi \sigma_2[h] }{ \gamma n}.
    \end{equation}
    Finally, combining \eqref{eq:lb1}, \eqref{eq:lb2}, \eqref{eq:lb3}, \eqref{eq:lb4} and \eqref{eq:lb5} yields
    \begin{align*}
        & ~\Ebb_{\nu}\left[\left(\frac{1}{n}\sum_{i=n_0+1}^{n+n_0}h(X_i)\right)^2\right]\\
        &\geq \frac{1}{2}\left(\Ebb_\pi[h]\right)^2 + \frac{\gamma\sigma^2_2[h]}{2n}-\frac{16\sigma^2_2[h]}{\gamma^2n^2}- (1-\gamma)^{n_0}\rchi\sigma_4^2[h] -\frac{2(1-\gamma)^{2n_0}\rchi^2\sigma^2_2[h]}{\gamma^2 n^2}\\
        &\geq \sigma^2_2[h]\left(\frac{\gamma}{n}-\frac{16}{\gamma^2n^2}-\frac{2(1-\gamma)^{2n_0}\rchi^2}{\gamma^2 n^2}-(1-\gamma)^{n_0}\rchi\cdot \frac{\sigma^2_4[h]}{\sigma^2_2[h]}\right)+ \frac{1}{2}\left(\Ebb_\pi[h]\right)^2
        \\
        &\geq \frac{\gamma}{4n}\sigma^2_2[h] + \frac{1}{2}\left(\Ebb_\pi[h]\right)^2
        \geq \frac{\gamma}{4n}\Ebb_{\pi}[h^2],
    \end{align*}
    where the third inequality holds when $n\geq \frac{32}{\gamma^3}$ and $n_0\geq\frac{2}{\gamma}(\log \rchi+\log(\sigma_4[h]/\sigma_2[h]) + \log n)$.
\end{proof}

The CNC condition under errors \eqref{eq:cnc} in the following is of vital importance to analyze the saddle escaping property in SGD. That is one of the reason why stochastic optimization algorithms can escape from saddle points. If the CNC condition does not hold, the noise may not be strong enough to escape along the descent direction. The following lemma shows that the empirical stochastic gradient $\hat{g}(\parameter;S)$ defined in \eqref{eq:approxgrad} satisfies the CNC condition under errors.
\begin{lemma}
    \label{lem:cnc}
    Let Assumptions \ref{asm:wavefun},\ref{asm:local}, \ref{asm:unigap} and  \ref{asm:sec} hold. Then, for the unit eigenvector $v$ with respect to the minimum eigenvalue of the Hessian, there exists $\mu=\frac{\eta \gamma}{16n}$ such that it holds 
    \begin{equation}
        \label{eq:cnc}
        \Ebb_{\nu}\left[\left(v^{\T}\hat{g}(\parameter;S)\right)^2\right]\geq \mu\sigma_2^2[f_{\parameter}],\quad \forall \parameter\in \Theta,
    \end{equation}
    as long as $n\geq \Omega\left(\frac{1}{\eta\gamma^3}\right)+\tilde{\Omega}\left( \frac{\kappa B }{\sqrt{\eta}\gamma^2}\right)$  and $n_0\geq\frac{2}{\gamma}(\log \rchi+\log(2\kappa) + \log n)$ where $\Omega(\cdot)$ and $\tilde{\Omega}(\cdot)$ hide constants $M,B,C,\rchi$. 
\end{lemma}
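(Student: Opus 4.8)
Index the $n$ samples drawn after the burn-in by $i$, and write $f_i=f_\parameter(x_i)$, $\bar f=\tfrac1n\sum_i f_i$, $\delta=\bar f-\Ebb_{\pi_\parameter}[f_\parameter]$, $w_i=v^\T(\nabla_\parameter\phi_\parameter(x_i)-\Ebb_{\pi_\parameter}[\nabla_\parameter\phi_\parameter])$, $\bar w=\tfrac1n\sum_i w_i$, and
$$h_\parameter(x):=\bigl(f_\parameter(x)-\Ebb_{\pi_\parameter}[f_\parameter]\bigr)\,v^\T\bigl(\nabla_\parameter\phi_\parameter(x)-\Ebb_{\pi_\parameter}[\nabla_\parameter\phi_\parameter]\bigr),$$
so that $h_\parameter$ is the baseline-subtracted, $v$-projected per-sample gradient, whose second moment is bounded below by Assumption \ref{asm:sec}(2): $\Ebb_{\pi_\parameter}[h_\parameter^2]\ge\eta\,\sigma_2^2[f_\parameter]$. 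The plan is to extract from $v^\T\hat g(\parameter;S)$ the empirical average of $h_\parameter$, lower-bound its second moment via Lemma \ref{lem:lowerbound}, and absorb the remaining ``centering'' term as lower order in $n$.

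First I would record the exact identity on which everything rests. Because $\sum_i(f_i-\bar f)=0$, the score factor in \eqref{eq:approxgrad} may be recentered by the constant $\Ebb_{\pi_\parameter}[\nabla_\parameter\phi_\parameter]$ at no cost, so $v^\T\hat g(\parameter;S)=\tfrac2n\sum_i(f_i-\bar f)w_i$; splitting $f_i-\bar f=(f_i-\Ebb_{\pi_\parameter}[f_\parameter])-\delta$ then gives
$$v^\T\hat g(\parameter;S)=2U-2\,\delta\bar w,\qquad U:=\frac1n\sum_{i} h_\parameter(x_i).$$
Using $(a-b)^2\ge\tfrac12 a^2-b^2$ and taking $\Ebb_\nu$, one gets $\Ebb_\nu[(v^\T\hat g)^2]\ge 2\,\Ebb_\nu[U^2]-4\,\Ebb_\nu[\delta^2\bar w^2]$.

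Next I would bound $\Ebb_\nu[U^2]$ from below by Lemma \ref{lem:lowerbound} applied to $h=h_\parameter$. Its hypotheses hold: $h_\parameter$ has finite fourth central moment since $f_\parameter$ is sub-exponential (Assumption \ref{asm:local}(1), which gives all moments of $f_\parameter$) and $|w_i|\le 2B$ (Assumption \ref{asm:wavefun}(1)); and those facts together with Assumption \ref{asm:sec}(2)--(3) yield $\sigma_4[h_\parameter]/\sigma_2[h_\parameter]\le 2\kappa$, which is exactly what turns the $n_0$-hypothesis of Lemma \ref{lem:lowerbound} into the stated $n_0\ge\tfrac2\gamma(\log\rchi+\log(2\kappa)+\log n)$. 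With $n$ past $32/\gamma^3$ (subsumed in the stated threshold), Lemma \ref{lem:lowerbound} and Assumption \ref{asm:sec}(2) give $\Ebb_\nu[U^2]\ge\tfrac{\gamma}{4n}\Ebb_{\pi_\parameter}[h_\parameter^2]\ge\tfrac{\eta\gamma}{4n}\sigma_2^2[f_\parameter]$. For the error term I would use that $\delta$ and $\bar w$ are empirical averages of mean-zero functions ($f_\parameter-\Ebb_{\pi_\parameter}[f_\parameter]$ sub-exponential of $\psi_1$-norm $\le 2M$, and $w$ bounded by $2B$): the fourth-moment bound underlying the variance estimate in Lemma \ref{lem:BernBiasVar} gives $\Ebb_\nu[\delta^4]\lesssim\sigma_2^4[f_\parameter]/(n\gamma)^2+M^4(\log n)^8/n^4$ and $\Ebb_\nu[\bar w^4]\lesssim B^4/(n\gamma)^2$, so Cauchy--Schwarz yields $\Ebb_\nu[\delta^2\bar w^2]\lesssim\sigma_2^2[f_\parameter]B^2/(n\gamma)^2+M^2B^2(\log n)^4/(n^3\gamma)$. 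Each term is $o\bigl(\eta\gamma\,\sigma_2^2[f_\parameter]/n\bigr)$ once $n\ge\Omega(1/(\eta\gamma^3))$ (killing the first, $B$ being absorbed into $\Omega$) and $n\ge\tilde{\Omega}(\kappa B/(\sqrt{\eta}\gamma^2))$ (killing the polylogarithmic second), so that $4\,\Ebb_\nu[\delta^2\bar w^2]\le\tfrac{\eta\gamma}{4n}\sigma_2^2[f_\parameter]$. Combining, $\Ebb_\nu[(v^\T\hat g)^2]\ge\tfrac{\eta\gamma}{4n}\sigma_2^2[f_\parameter]\ge\mu\,\sigma_2^2[f_\parameter]$ with $\mu=\tfrac{\eta\gamma}{16n}$, the factor $16$ leaving room for the constants dropped above.

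The hard part is the error term $\delta\bar w$. Estimating it crudely with $|w|\le 2B$ alone gives only $\Ebb_\nu[\delta^2\bar w^2]\le 4B^2\Ebb_\nu[\delta^2]=O(B^2\sigma_2^2[f_\parameter]/(n\gamma))$, which is of the \emph{same} order as the signal $\Ebb_\nu[U^2]$; the argument genuinely depends on the cancellation in the identity above, which leaves \emph{both} factors of the error as centered empirical averages, and on controlling each by its fourth rather than second moment. This is precisely why the extra conditions $n\ge\Omega(1/(\eta\gamma^3))+\tilde{\Omega}(\kappa B/(\sqrt{\eta}\gamma^2))$ enter. A secondary technical point is verifying that $h_\parameter$ has a non-degenerate variance uniformly in $\parameter$, so that $\sigma_4[h_\parameter]/\sigma_2[h_\parameter]\le 2\kappa$; here Assumption \ref{asm:sec}(3) (together with (2)) is used, and this is also the reason the variance-reduced form of the estimator --- its empirical baseline $\bar f$ --- rather than the naive $f_\parameter\,v^\T\nabla_\parameter\log\pi_\parameter$ is what actually governs the fluctuations of $v^\T\hat g$.
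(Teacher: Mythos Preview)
Your proposal is correct and follows essentially the same route as the paper. Your decomposition $v^\T\hat g=2U-2\delta\bar w$ is exactly the paper's $Z_1-Z_2$ (the paper reaches it by expanding $F_iY_i-\bar F\bar Y$ rather than first recentering the score via $\sum_i(f_i-\bar f)=0$, but the identity is the same); you then lower-bound $U^2$ by Lemma~\ref{lem:lowerbound} and control $\delta^2\bar w^2$ via Cauchy--Schwarz on fourth moments, precisely as the paper does for $Z_1^2$ and $Z_2^2$.
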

\begin{proof}
   The MCMC estimate of the gradient $\hat{g}(\parameter,S)$ is computed by \eqref{eq:approxgrad}. For convenience, fix a unit vector $v$ and we denote empirical variables by $F_i=f_{\parameter}(x_i),Y_i=v^{\T}\nabla_{\parameter}\phi_{\parameter}(x_i)$ and $\overline{F},\overline{Y}$ represents the average of $E_i,Y_i$. Let stationary variables $F=f_{\parameter}(x)$ and $Y=v^{\T}\nabla_{\parameter}\phi_{\parameter}(X)$ with a dependent variable $X\sim \pi_{\parameter}$, then it holds that
    \begin{align*}
        v^{\T}\hat{g}(\parameter,S)&=\frac{2}{n}\sum_{i=n_0+1}^{n_0+n}\big(f_{\parameter}(x_i)-\frac{1}{n}\sum_{i=n_0+1}^{n_0+n}f_{\parameter}(x_i)\big)v^{\T}\nabla \phi_{\parameter}(x_i)\\
        &=\frac{2}{n}\sum_{i=n_0+1}^{n_0+n}F_iY_i-2\left(\overline{F}-\Ebb F+\Ebb F\right)\left(\overline{Y}-\Ebb Y+\Ebb Y\right)\\
        &=\underbrace{\frac{2}{n}\sum_{i=n_0+1}^{n_0+n}(F_i-\Ebb F)(Y_i-\Ebb Y) }_{Z_1}-\underbrace{2\left(\overline{F}-\Ebb F\right)\left(\overline{Y}-\Ebb Y\right)}_{Z_2}.
    \end{align*}
To obtain the lower bound, we have
\begin{align*}
    \Ebb_{\nu}\left[(v^{\T}\hat{g}(\parameter,S))^2\right]=\Ebb_{\nu} \left[(Z_1-Z_2)^2\right]\geq \frac{1}{2}\Ebb_{\nu} \left[Z_1^2\right]-\Ebb_{\nu} \left[Z_2^2\right].
\end{align*}
Since $e^{x}> x^4/8$ when $x\geq 0$, it holds that $\sigma_{4}[F]\leq 2 \norm{F}_{\psi_1(\pi_{\parameter},\Xcal)}$. It implies the kurtosis $\sigma_4[F]/\sigma_2[F]$ is less than $2\kappa$.
Then, when $n\geq \frac{32}{\gamma^3}$ and $n_0\geq\frac{2}{\gamma}(\log \rchi+\log(2\kappa) + \log n)$, it follows from Lemma \ref{lem:lowerbound} that 
\begin{equation}
    \label{eq:Z1}
    \begin{aligned}
        \Ebb_{\nu}\left[ Z_1^2\right]& \geq \frac{\gamma}{4n}\Ebb_{\pi}[(F-\Ebb F)^2 (Y-\Ebb Y)^2]{\geq}\frac{\eta\gamma \sigma_2^2[F]}{4n}.
    \end{aligned}
\end{equation}

Besides, the upper bound of $ \Ebb_{\nu} \left[Z_2^2\right]$ can be obtained by the Cauchy-Schwarz inequality that
\begin{equation}
    \label{eq:z2}
    \begin{aligned}
        \Ebb_{\nu} [Z_2^2]&=4\Ebb_{\nu}[(\bar{F}-\Ebb F)^2(\bar{Y}-\Ebb Y)^2]\leq 4\sqrt{\Ebb_{\nu}[(\bar{F}-\Ebb F)^4]\cdot \Ebb_{\nu}[(\bar{Y}-\Ebb Y)^4]}.
    \end{aligned}
\end{equation}
By the result in the proof of Lemma \ref{lem:BernBiasVar}, we have
\begin{equation}
    \label{eq:Xsquare}
    \Ebb_\nu[\abs{\bar{E}-\Ebb F}^4]\leq \bigO{ \frac{\sigma_2^4[E_{\parameter}]}{n^2\gamma^2}+\frac{M^4(\log n)^8}{n^{4}\gamma^4} }.
\end{equation}
Similarly, notice that $\abs{Y}=\abs{v^{\T}\nabla_{\parameter}\phi_{\parameter}(x)}\leq \norm{v}\norm{\nabla_{\parameter}\phi_{\parameter}(x)}\leq B$, we can apply \cite[Theorem 12]{fan2021hoeffding} to obtain 
\begin{equation*}
    \Pbb\left(|\bar{Y}-\Ebb Y|\geq t \right)\leq 2C \exp\left(-\frac{n\gamma t^2}{4B^2} \right).
\end{equation*}  
Using the similar technique in the proof of Lemma \ref{lem:Bern-exp}, we derive
\begin{equation}
    \label{eq:Ysquare}
    \Ebb_{\nu}[|\bar{Y}-\Ebb Y|^4]\leq \bigO{ \frac{B^4}{n^2\gamma^2} }.
\end{equation}
Plugging \eqref{eq:Xsquare} and \eqref{eq:Ysquare} into \eqref{eq:z2} yields
\begin{equation*}
    \Ebb_{\nu} [Z_2^2]\leq 4\sqrt{\Ebb_{\nu}[(\bar{F}-\Ebb F)^4]\cdot \Ebb_{\nu}[(\bar{Y}-\Ebb Y)^4]}
    \leq \bigO{ \frac{\sigma_2^{2}[F]B^2}{n^2\gamma^2} + \frac{(\log n)^4M^2B^2}{n^3\gamma^3} }.
\end{equation*}
Therefore, $\Ebb_{\nu} [Z_2^2]\leq \frac{\eta\gamma \sigma_2^2[F]}{32n}$ as long as $n\geq \Omega\left(\frac{1}{\eta\gamma^3}\right)+\tilde{\Omega}\left( \frac{\kappa B }{\sqrt{\eta}\gamma^2}\right)$.
Finally, combining our discussion of $Z_1$ and $Z_2$, we derive that
\begin{equation*}
    \Ebb_{\nu}\left[(v^{\T}\hat{g}(\parameter,S))^2\right]\geq \frac{1}{2}\Ebb_{\nu} \left[Z_1^2\right]-\Ebb_{\nu} \left[Z_2^2\right]\geq \frac{\eta\gamma \sigma_2^2[F]}{32n}.
\end{equation*}
This completes the proof.
\end{proof}

\subsection{Second order stationarity}

As discussed in Section \ref{sec:conv}, MCMC-SGD converges to first order stationary points. While the convergence of SGD is well-understood for convex functions, the existence of saddle points and local minimum poses challenges for non-convex optimization. Since the ordinary GD often stucks near the saddle points, the additional noise within SGD allows it to escape from saddle points. We have verified the CNC condition for biased gradients, by which we are able to analyze the behavior of MCMC-SGD near saddle points. Compared to previous studies \cite{daneshmand2018escaping,ge2015escaping,jin2017escape}, the presence of bias in the stochastic gradient needs a more nuanced analysis.

We first give the definition of approximate second-order stationary points.
\begin{definition}[Approximate second-order stationary point]
    Given a function $\Lcal$, an $(\epsilon_g,\epsilon_h)$ approximate second-order stationary point $\parameter$ of $\Lcal$ is defined as 
    \begin{equation*}
        \norm{g(\parameter)}\leq \epsilon_g,\quad \lambda_{\min}\left(H(\parameter)\right)\geq -\epsilon_h,
    \end{equation*}
    where $g$ and $H$ denote the gradient and Hessian of $\Lcal$ respectively.
\end{definition}
    
If $\epsilon_g=\epsilon_h=0$, the point $\parameter$ is a second-order stationary point. The second order analysis contributes to understand how MCMC-SGD leaves from these saddle points and finally reach approximate second-order stationary points. 

An observation lies that when the variance of the function $f_{\parameter}(x)$ approaches zero, the efficacy of MCMC-SGD is compromised, as the stochastic gradient tends to zero. Notably, in the context of variational eigenvalue problems, the specific situation of interest arises when an eigenvalue is obtained as $\sigma_2^2[f_{\theta}]=0$. This scenario is precisely what is required for our analysis. Hence, we define $\epsilon$-variance points as another criteria for the algorithm \ref{alg:VMC}. 
\begin{definition}[$\epsilon$-variance point]
    For the optimization problem \eqref{eq:loss}, we call $\parameter$ an $\epsilon$-variance point if the function $f_{\parameter}$ satisfies $\sigma_2^2[f_{\parameter}]< \epsilon$.
\end{definition}
This definition may differentiate the regions where modeling and parameterization cause the algorithm to get stuck. For notational simplification, we take an abbreviation $\sigma_2^2=\sigma_2^2[f_{\parameter}]$ in this subsection.

To escape from saddle points, a new stepsize schedule is adopted for Algorithm \ref{alg:VMC}. Given a period $T$, note that $\alpha$ and $\beta$ are the constant stepsizes with $\beta>\alpha>0$, with values given in Table \ref{table:1}. Within one iteration period of $T$ steps, we adopt a large stepsize $\beta$ at the beginning of the period and a small one $\alpha$ at the other $T-1$ iterations, that is,
\begin{equation}
    \label{eq:schedule}
    \alpha_k=\begin{cases}
        \alpha, \quad k (\mathrm{mod}T)\not =0,\\
        \beta,  \quad k (\mathrm{mod}T) =0.
    \end{cases}
\end{equation}
It will be shown that the schedule can be suitably designed to achieve sufficient descent in one period.

Suppose the total number of iterations $K$ is a multiple of $T$ and there are $K/T$ periods. We denote $\tparameter_m=\parameter_{m\cdot T}$ for $m=0,1,\dots,K/T$ in each period. For some given $\epsilon>0$, we consider four regimes of the iterates $\{\tparameter_m\}$ as follow,
\begin{align}
    \Rcal_1&:=\left\{\parameter\Big| \norm{g(\parameter)}\geq \epsilon\right\},\label{eq:R1}\\
    \Rcal_2&:=\left\{\parameter\Big| \norm{g(\parameter)}< \epsilon,~\lambda_{\min}\left(H(\parameter)\right)\leq -\epsilon^{\frac{1}{4}} ,~~\sigma_2^2[f_{\parameter}]\geq \epsilon^{\frac{1}{2}} \right\},\label{eq:R2}\\
    \Rcal_3&:=\left\{\parameter\Big| \norm{g(\parameter)}< \epsilon,~\lambda_{\min}\left(H(\parameter)\right)>-\epsilon^{\frac{1}{4}} ,~\mathrm{or}~\sigma_2^2[f_{\parameter}]<\epsilon^{\frac{1}{2}} \right\},\label{eq:R3}
\end{align}
$\Rcal_1$ stands for the regime with a large gradient, where the stochastic gradient works effectively. When the iterate lies in $\Rcal_2$, despite being close to a first-order stationary point, the CNC condition mentioned in section \ref{subsec:cnc} guarantees a decrease after $T$ iterations under our schedule. $\Rcal_3$ is a regime of $(\epsilon,\epsilon^{1/4})$ approximate second order stationary points or $\epsilon^{1/2}$ variance points. We need to show Algorithm \ref{alg:VMC} will reach $\Rcal_3$ with high probability, that is, converge to approximate second order stationary points.

The analysis below relies on a particular choice of parameters, whose values satisfy the following lemmas and the main theorem. For ease of verification, the choice of parameters is collected in Table \ref{table:1}. 
\begin{table}[ht]
    \centering
    \renewcommand{\arraystretch}{1.8}
    \begin{tabular}{|c|c|c|c|c|c|}
    \hline
     Parameter& Value & Order & Conditions  \\ \hline
     $\beta $ & $\frac{ \delta\epsilon^2}{192 l_g\rho L V_{n,n_0}}$ & $O(\epsilon)$ &     \eqref{eq:R1conditions}, \eqref{eq:conditions_1} \\ \hline
     $\alpha$& $\frac{\beta}{\sqrt{T}}$ & $  O\left(\epsilon^{9/4}\log^{-1}\left(\frac{1}{\epsilon}\right)\right)$&  \eqref{eq:conditions_1}  \\ \hline
     $\Lthre$&$\frac{\beta \epsilon^2 }{192 \rho l_g}$ & $ O(\epsilon^{3})$ &  \eqref{eq:R1conditions},\eqref{eq:conditions_1}  \\ \hline
     $n$&  $\frac{\eta \gamma }{64\epsilon}$ & $O\left(\epsilon^{-1}\right)$ &  Lemma \ref{lem:cnc}  \\ \hline
     $n_0$&  - & $O\left(\log\left(\frac{1}{\epsilon}\right)\right)$ &  \eqref{eq:BV}, Lemma \ref{lem:cnc}  \\ \hline
     $B_{n,n_0}$ & $\sqrt{\frac{\epsilon^{2} }{96 T^{3/2} l_g\rho}} $  & $ O(\epsilon^{3})$ &  \eqref{eq:R1conditions}, \eqref{eq:conditions_1}  \\ \hline
     $T$& $\frac{1}{\beta^2\epsilon^{1/2}}
     \log^2\left(\frac{\rho l_g L V_{n,n_0}}{\mu \delta \epsilon} \right)$ &  $ O\left(\epsilon^{-5/2}\log^2\epsilon\right)$& \eqref{eq:Tconst}    \\ \hline
     $K$& $\frac{2[\Lcal(\parameter_0)-\Lcal^{*}]T}{\delta \Lthre}$ & $ O\left(\epsilon^{-11/2}\log^2\epsilon\right)$ &   \eqref{eq:Kconst}  \\ \hline
    \end{tabular}
    \caption{List of parameters.}
    \label{table:1}
\end{table}

We shall briefly introduce each parameter in Table \ref{table:1} to facilitate the understanding of the following proof. $\beta$ is the larger stepsize that assists SGD in achieving sufficient descent when the gradient norm is large. $\alpha$ is the smaller stepsize that helps SGD escaping saddle points when the algorithm approaches first-order stationary points. $\Lthre$ stands for the sufficient decrease in the expected function value near the large gradient region and saddle points. $n$ and $n_0$ are the sample size and the length of burn-in period of MCMC algorithms. $B_{n,n_0}$ is the bias of MCMC algorithms, depending on $n$ and $n_0$. $T$ represents the period length for switching stepsize $\beta$ and $\alpha$, in which we adopt the large stepsize $\beta$ at the beginning
and the small one $\alpha$ at the other $T-1$ iterations. Finally, $K$ is the total number of iterations of MCMC-SGD. These parameters are chosen such that all the conditions are satisfied. We presuppose the validity of these conditions and then deduce the values of the parameters from them. Although the choice of each parameter may not appear intuitive, it is not necessary to substitute these parameters in their entirety during computation. One only needs to verify the fulfillment of the conditions listed in Table \ref{table:1}.

With a large gradient, it is easy to show a sufficient decrease of the objective function value. We have analyzed the decrease for each iteration in Lemma \ref{lem:decrease}, and it follows a similar argument for the MCMC-SGD.
\begin{lemma}
    \label{lem:R1}
    Suppose that $\tparameter_m$ lies in $\Rcal_1$ defined in \eqref{eq:R1} and Algorithm \ref{alg:VMC} updates with the schedule \eqref{eq:schedule} and parameters in Table \ref{table:1}, then the expected value of $\Lcal(\tparameter_{m+1})$ taken over the randomness of $\{\parameter_{k}\}_{k=m\cdot T+1}^{(m+1)\cdot T}$ decreases as 
    \begin{equation*}
        \Lcal(\tparameter_{m})-\Ebb[\Lcal(\tparameter_{m+1})|\tparameter_{m}]\geq \Lthre.
    \end{equation*}
\end{lemma}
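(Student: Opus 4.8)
The plan is to sum the per-iteration descent inequality from Lemma~\ref{lem:decrease} over one period of $T$ consecutive steps, tracking separately the single large step with stepsize $\beta$ (at $k = m\cdot T$) and the $T-1$ small steps with stepsize $\alpha$. First I would apply Lemma~\ref{lem:decrease} at each step $k$ in the period $m\cdot T+1 \le k \le (m+1)\cdot T$, which gives
\begin{equation*}
    \Lcal(\parameter_k) - \Ebb[\Lcal(\parameter_{k+1})\,|\,\parameter_k] \ge \frac{\alpha_k}{2}\norm{g(\parameter_k)}^2 - \frac{\alpha_k}{2}B_{n,n_0}^2 - \frac{\alpha_k^2 L}{2}V_{n,n_0}.
\end{equation*}
Taking total expectations and telescoping from $k = m\cdot T$ to $(m+1)\cdot T - 1$ yields a lower bound on $\Lcal(\tparameter_m) - \Ebb[\Lcal(\tparameter_{m+1})\,|\,\tparameter_m]$ consisting of a gain term $\sum_k \tfrac{\alpha_k}{2}\Ebb\norm{g(\parameter_k)}^2$ minus the accumulated bias and variance penalties $\tfrac{1}{2}B_{n,n_0}^2(\beta + (T-1)\alpha)$ and $\tfrac{L}{2}V_{n,n_0}(\beta^2 + (T-1)\alpha^2)$.

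The crux is to lower-bound the gain term. Since $\tparameter_m \in \Rcal_1$, we have $\norm{g(\tparameter_m)}\ge \epsilon$; I would argue that $\norm{g(\parameter_k)}$ stays bounded below by $\epsilon/2$ over the whole period, so that the gain term is at least $\Om{T\alpha\epsilon^2}$ — actually it suffices to use just the large step $k = m\cdot T$, where $\norm{g(\tparameter_m)}\ge \epsilon$ contributes $\tfrac{\beta}{2}\epsilon^2$ to the descent. To justify that the iterates do not drift too far within the period, I would use the uniform gradient bound $l_g$ from \eqref{eq:gradientbound} together with the Lipschitz bound on $g$: across $T$ steps the parameter moves at most $O(T\alpha l_g) + O(\beta l_g)$, so $g$ changes by at most $O(L(T\alpha + \beta)l_g)$; plugging in the orders $\beta = O(\epsilon)$, $\alpha = \beta/\sqrt{T}$, and $T = O(\epsilon^{-5/2}\log^2\epsilon)$ from Table~\ref{table:1}, this displacement is $o(\epsilon)$, so $\norm{g(\parameter_k)}\ge \epsilon/2$ holds throughout. (If instead one wants to avoid this drift argument, one can simply discard all the small-step gain terms as nonnegative and keep only the $\tfrac{\beta}{2}\epsilon^2$ term from the large step, which is cleaner.)

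Finally I would verify, using the parameter choices in Table~\ref{table:1}, that the net descent dominates $\Lthre$. The dominant gain is $\tfrac{\beta\epsilon^2}{4}$ (using $\norm{g(\tparameter_m)}\ge\epsilon$ and keeping a constant fraction), while the penalties are controlled by the conditions labelled \eqref{eq:R1conditions}: the choice $B_{n,n_0}^2 = \tfrac{\epsilon^2}{96 T^{3/2}l_g\rho}$ makes the bias penalty $\tfrac12 B_{n,n_0}^2(\beta + T\alpha) = O(B_{n,n_0}^2 T\alpha) = O(B_{n,n_0}^2\beta\sqrt{T})$ small compared to $\beta\epsilon^2$, and the variance penalty $\tfrac{L}{2}V_{n,n_0}(\beta^2 + T\alpha^2) = L V_{n,n_0}\beta^2$ is absorbed because $\beta = \tfrac{\delta\epsilon^2}{192 l_g\rho L V_{n,n_0}}$ forces $L V_{n,n_0}\beta = O(\epsilon^2)$, hence $LV_{n,n_0}\beta^2 = O(\beta\epsilon^2)$. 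Collecting constants gives $\Lcal(\tparameter_m) - \Ebb[\Lcal(\tparameter_{m+1})\,|\,\tparameter_m]\ge \tfrac{\beta\epsilon^2}{192\rho l_g}\cdot(\text{const}) = \Lthre$ after matching the definition $\Lthre = \tfrac{\beta\epsilon^2}{192\rho l_g}$. The main obstacle is bookkeeping the constants so that each penalty term is provably at most, say, a third of the gain; this is where the specific numeric constants ($96$, $192$, the factor $T^{3/2}$ in $B_{n,n_0}$) in Table~\ref{table:1} are engineered to work, and I would carry out that verification termwise against the conditions \eqref{eq:R1conditions} and \eqref{eq:conditions_1}.
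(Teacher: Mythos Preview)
Your ``cleaner'' parenthetical approach --- keep only the large step's gain $\tfrac{\beta}{2}\norm{g(\tparameter_m)}^2\ge\tfrac{\beta}{2}\epsilon^2$ and discard the nonnegative gains from the $T-1$ small steps --- is exactly what the paper does, and the remaining bookkeeping against the conditions \eqref{eq:R1conditions} is correct. So the core of your proposal matches the paper's proof.

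However, the drift argument you first propose is wrong and should be dropped, not just demoted. With the Table~\ref{table:1} parameters one has $T\alpha=\sqrt{T}\beta$ and $\sqrt{T}=O\!\big(\epsilon^{-5/4}\log(1/\epsilon)\big)$, so the total parameter displacement over the period is of order $T\alpha\, l_g = O\!\big(\epsilon^{-1/4}\log(1/\epsilon)\big)$, which diverges as $\epsilon\to 0$; the change in $g$ is therefore \emph{not} $o(\epsilon)$ and you cannot conclude $\norm{g(\parameter_k)}\ge\epsilon/2$ throughout the period. The paper avoids this entirely by never tracking $\norm{g(\parameter_k)}$ for $k>m\cdot T$: it simply drops those positive terms, uses $\beta^2=T\alpha^2$ to combine the penalty terms into $2T\alpha B_{n,n_0}^2+2\beta^2 L V_{n,n_0}$, and then invokes \eqref{eq:R1conditions} to bound each penalty by $\tfrac{\beta\epsilon^2}{4}$.
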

\begin{proof}
    We first decompose the difference of the expected function value into each iteration,
    \begin{equation}
        \label{eq:period-decrease}
        \Lcal(\tparameter_m)-\Ebb[\Lcal(\tparameter_{m+1})|\tparameter_{m}]=\sum_{p=0}^{T -1}\epct{\Lcal(\parameter_{m\cdot T +p})-\epct{\Lcal(\parameter_{m\cdot T +p+1})}\bigg|\parameter_{m\cdot T +p}},
    \end{equation}
    where $\epct{\Lcal(\parameter_{m\cdot T})|\parameter_{m\cdot T }}=\Lcal(\tparameter_{m})$ due to the definition $\tparameter_{m}=\parameter_{m\cdot T}$. Using the choice $\beta^2= T\alpha^2$ in Table \ref{table:1}, it follows from \eqref{eq:period-decrease} that
    \begin{equation}
        \label{eq:R1ineq}
        \begin{aligned}
            2\left(\Lcal(\tparameter_m)-\Ebb[\Lcal(\tparameter_{m+1})|\tparameter_{m}]\right)\geq &\beta\norm{g(\tparameter_m)}^2-\beta B_{n,n_0}^2-\beta^2LV_{n,n_0}\\
            &\quad -(T-1)\left(\alpha B_{n,n_0}^2+\alpha^2LV_{n,n_0} \right)\\
            \geq &\beta\norm{g(\tparameter_m)}^2-2T\alpha B_{n,n_0}^2-2\beta^2LV_{n,n_0},
        \end{aligned}
    \end{equation}
    where the first inequality is by Lemma \ref{lem:decrease} for $p=0,\dots,T-1$ and the second is by the direct substitution.
    Then, by the choice of $\beta,n_0,\Lthre$ in Table \ref{table:1}, these conditions hold that
    \begin{equation}
        \label{eq:R1conditions}
        \beta\leq \frac{\epsilon^2}{8LV_{n,n_0}},\quad B_{n,n_0}^2\leq \frac{\epsilon^2}{8\sqrt{T}},\quad \Lthre \leq \frac{\beta\epsilon^2}{4}.
    \end{equation}
    As $\tparameter_m$ lies in $\Rcal_1$, which means $\norm{\nabla_{\parameter}\Lcal(\tparameter_m)}\geq \epsilon$, we plug \eqref{eq:R1conditions} into \eqref{eq:R1ineq} and obtain that
    \begin{equation*}
        \begin{aligned}
            2\left(\Lcal(\tparameter_m)-\Ebb[\Lcal(\tparameter_{m+1})|\tparameter_{m}]\right)\geq &\beta \epsilon^2-2\beta\sqrt{T}B_{n,n_0}^2-2\beta LV_{n,n_0}\cdot \frac{\epsilon^2}{8LV_{n,n_0}}\\
            \geq& \beta \epsilon^2-\frac{\beta \epsilon^2}{4}-\frac{\beta \epsilon^2}{4}= \frac{\beta\epsilon^2}{2}\geq 2\Lthre.
        \end{aligned}
    \end{equation*}
    This completes the proof.
\end{proof}

Near the saddle points, the classical GD gets stuck if the gradient is orthogonal to the negative curvature direction. However, the stochastic gradient with the CNC condition has inherent noise along the negative curvature direction. Under our stepsize schedule \eqref{eq:schedule}, the objective function value can have sufficient decrease after a period.
\begin{lemma}
    \label{lem:R2}
    Suppose $\tparameter_m$ lies in $\Rcal_2$ defined in \eqref{eq:R2} and Algorithm \ref{alg:VMC} updates with the schedule \eqref{eq:schedule} and parameters in Table \ref{table:1}, then the expected value of $\Lcal(\tparameter_{m+1})$ taken over the randomness of $\{\parameter_{k}\}_{k=m\cdot T+1}^{(m+1)\cdot T}$ decreases as 
    \begin{equation*}
        \Lcal(\tparameter_{m})-\Ebb[\Lcal(\tparameter_{m+1})|\tparameter_{m}]\geq \Lthre.
    \end{equation*}
\end{lemma}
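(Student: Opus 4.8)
Here is my plan for proving Lemma~\ref{lem:R2}, which establishes sufficient expected decrease over one period of $T$ steps when $\tparameter_m$ lies in the saddle-point regime $\Rcal_2$.

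\textbf{Overall strategy.} The idea, following Daneshmand et al.~\cite{daneshmand2018escaping}, is to track the coupled evolution of the iterates over the $T$ steps of a period and show that the large initial step $\beta$ along the noisy gradient injects a component along the negative curvature direction $v$ (the eigenvector for $\lambda_{\min}(H(\tparameter_m))\le-\epsilon^{1/4}$), which is then amplified geometrically by the subsequent $T-1$ small steps $\alpha$, producing a guaranteed drop of at least $\Lthre$ in expectation. Concretely, I would consider the quadratic model of $\Lcal$ around $\tparameter_m$: writing $\parameter_{m\cdot T+p} = \tparameter_m + u_p$, the Hessian-Lipschitz Assumption~\ref{asm:sec}(1) lets me replace $\Lcal$ by its second-order Taylor expansion with a controlled $O(\rho\|u_p\|^3)$ remainder. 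The essential bound to establish is a lower bound on $\Ebb[\,u_p^\T H(\tparameter_m)\, u_p\,]$ (or rather $-\tfrac12 \Ebb[u_p^\T H u_p]$ contributing to the descent) that grows with $p$ because the unstable mode along $v$ grows like $(1+\alpha\epsilon^{1/4})^p$.

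\textbf{Key steps, in order.} (1) Start from the per-iteration descent inequality \eqref{eq:descent2} telescoped over the period, but this time retain the second-order information rather than only $\|g_k\|^2$; alternatively, run the argument of~\cite{daneshmand2018escaping} directly. (2) Use the CNC condition under errors, Lemma~\ref{lem:cnc}, which gives $\Ebb_\nu[(v^\T\hat g(\tparameter_m;S))^2]\ge \mu\sigma_2^2$ with $\mu=\eta\gamma/(16n)$; combined with $n = \eta\gamma/(64\epsilon)$ from Table~\ref{table:1} this yields $\mu \gtrsim \epsilon$, so the ``kick'' along $v$ after the first ($\beta$-)step has second moment $\gtrsim \beta^2\epsilon\,\sigma_2^2 \ge \beta^2\epsilon^{3/2}$ using $\sigma_2^2\ge\epsilon^{1/2}$ on $\Rcal_2$. (3) Track the projection $\langle v, u_p\rangle$ through the linearized recursion $u_{p+1}\approx (I-\alpha H)u_p - \alpha(\hat g - g + g)$; the dominant eigenvalue along $v$ is $1+\alpha\epsilon^{1/4}\ge 1$, so after $T-1$ steps the variance along $v$ is multiplied by roughly $(1+\alpha\epsilon^{1/4})^{2(T-1)}$. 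With $T = \beta^{-2}\epsilon^{-1/2}\log^2(\cdot)$ and $\alpha=\beta/\sqrt T$, one checks $\alpha\epsilon^{1/4}(T-1)\approx \sqrt T\beta\epsilon^{1/4} = \log(\cdot)$, so the amplification factor is polynomially large in $1/\epsilon$ — enough to dominate. (4) Bound the ``bad'' terms: the bias $B_{n,n_0}$ contributes at most $O(T\alpha B_{n,n_0}^2)$ and the variance $V_{n,n_0}$ at most $O((\beta^2+T\alpha^2)LV_{n,n_0}) = O(\beta^2 L V_{n,n_0})$ to the period, exactly as in Lemma~\ref{lem:R1}; the gradient term $\|g(\tparameter_m)\|<\epsilon$ is small on $\Rcal_2$; and the cubic Taylor remainders are controlled using the uniform gradient bound $l_g$ from \eqref{eq:gradientbound} (this is precisely the ``$l_g$ simplifies the proof of Lemma~\ref{lem:R2}'' remark). (5) Assemble: the net expected decrease is (amplified curvature gain) $-$ (bias + variance + remainder losses) $\ge \Lthre$, where the choices $\Lthre = \beta\epsilon^2/(192\rho l_g)$, $B_{n,n_0}^2 = \epsilon^2/(96 T^{3/2} l_g\rho)$ and the condition \eqref{eq:Tconst} on $T$ are exactly what make the inequality close.

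\textbf{Main obstacle.} The technically delicate part is step (3): controlling the stochastic recursion $u_{p+1} = (I-\alpha H(\tparameter_m))u_p + \alpha\,(\text{noise}_p) + \alpha\,(\text{Hessian-Lipschitz error}_p)$ over $T-1$ correlated steps, since the MCMC noise at different iterations is not independent across $p$ in an obvious way (each $S_k$ is a fresh chain, so the noises $\hat g_k - \Ebb[\hat g_k\mid\parameter_k]$ are conditionally independent given the past, which is what one actually needs) and, more importantly, showing that the \emph{variance} along $v$ stays large rather than being washed out — i.e.\ a lower bound on $\Ebb[\langle v,u_{T-1}\rangle^2]$ that survives subtraction of all the error contributions. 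This requires carefully propagating the lower bound from Lemma~\ref{lem:cnc} forward through the recursion while propagating upper bounds on the contaminating terms, and verifying that the ratio of amplification factor to error is $\ge$ a constant under the parameter choices in Table~\ref{table:1}; keeping the cross-terms (between the genuine negative-curvature growth and the bias drift) from destroying the bound is where the bulk of the bookkeeping lies. I would isolate this into a sub-lemma on the linear recursion and then feed it back into the telescoped descent inequality.
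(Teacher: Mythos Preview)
Your high-level intuition is right, but the proof architecture you propose --- a \emph{direct} computation of the expected decrease as ``curvature gain minus losses'' --- misses the key structural device in both the paper and Daneshmand et al.: the argument is by \emph{contradiction}. The paper assumes $\Lcal(\tparameter_m)-\Ebb[\Lcal(\tparameter_{m+1})]<\Lthre$ and uses this, via the telescoped descent inequality~\eqref{eq:descent}, to obtain the crucial bound $\beta\|g_0\|^2+\alpha\sum_{h=1}^{T-1}\Ebb\|g_h\|^2\leq A_1$ (their~\eqref{eq:R2dec2}). That bound on the \emph{true} gradients along the trajectory is what yields the tight linear-in-$p$ distance control $\Ebb\|\parameter_p-\parameter_0\|^2\leq (p-1)A_2+A_3$ (their~\eqref{eq:dtheta-linear}), where $A_2,A_3$ are built from the small quantities $\Lthre,B_{n,n_0}^2,\beta^2 V_{n,n_0}$. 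This distance bound is then fed into the Taylor remainder term $\delta_p$ in~\eqref{eq:delta_p}. Separately, the CNC condition and the negative-curvature recursion produce an \emph{exponential} lower bound $\Ebb\|\parameter_{p+1}-\parameter_0\|^2\geq\tfrac14\beta^2\kappa^{2p}\mu\sigma_2^2$, and at $p=T-1$ the two bounds collide, contradicting the hypothesis. The function value never reappears after the contradiction hypothesis; the whole argument lives at the level of $\Ebb\|\parameter_p-\parameter_0\|^2$.

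Your plan to control the cubic remainders ``using the uniform gradient bound $l_g$'' is exactly where the direct approach breaks. Without the contradiction hypothesis you have no bound on $\sum_h\Ebb\|g_h\|^2$, so the only distance estimate available is the crude $\|\parameter_p-\parameter_0\|\lesssim(\beta+p\alpha)l_g$, giving $\Ebb\|\parameter_p-\parameter_0\|^2\lesssim p^2\alpha^2 l_g^2$. Plugging this into the analogue of~\eqref{eq:delta_p} (with $\kappa-1=\alpha\lambda_0$) yields a contribution of order $\alpha\beta\cdot l_g\rho\cdot \alpha^2 l_g^2/(\alpha\lambda_0)^3=\beta l_g^3\rho/\lambda_0^3$; under the Table~\ref{table:1} scalings ($\lambda_0\geq\epsilon^{1/4}$, $\beta=O(\epsilon)$, $\mu\sigma_2^2\gtrsim\epsilon^{3/2}$) this is $O(\epsilon^{1/4})$ while the signal term $\beta^2\mu\sigma_2^2$ is $O(\epsilon^{7/2})$ --- the remainder dominates by a factor $\epsilon^{-13/4}$ and the lower bound on $\Ebb[(v^\T u_T)^2]$ is destroyed. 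There is also a secondary issue with your step~(5): writing $\Lcal_T-\Lcal_0\approx g_0^\T u_T+\tfrac12 u_T^\T H_0 u_T$ and trying to extract descent from $-\tfrac12\Ebb[u_T^\T H_0 u_T]$ forces you to control the components of $u_T$ along \emph{all} eigendirections of $H_0$, not just $v$; the paper's contradiction route sidesteps this entirely. The fix is simple: adopt the contradiction framework at the outset, use it to get~\eqref{eq:R2dec2}--\eqref{eq:dtheta-linear}, and then run your steps~(2)--(3) to derive the exponential lower bound; the bias handling you outline in step~(4) then enters exactly through the cross-terms $\Ebb[u d_p],\Ebb[u\xi_p]$ as in~\eqref{eq:d_p}--\eqref{eq:xi_p}, where $l_g$ is used only to bound $|u|=|v^\T\hat g_0|$, not the trajectory length.
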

\begin{proof}
\newcommand{\asumgh}{A_1}
    
    The proof is by contradiction. Without loss of generality, we suppose that $m=0$ in this proof and denote
    \begin{equation*}
        \Lcal_p=\Lcal\left( \parameter_p\right),~~g_{p}=\nabla_{\parameter}\Lcal\left( \parameter_p\right),~~
            \hat{g}_{p}=\hat{g}\left( \parameter_p,S_{p}\right),~~ H_p=\nabla^2_{\parameter}\Lcal\left( \parameter_p\right)
    \end{equation*}
    for $p=0,\dots,T-1$. Every expectation in this proof is taken over all existed $\parameter_{p}$ in every formula. We assume the expected function value decreases by no more than $\Lthre$, i.e.,
    \begin{equation}
        \label{eq:fakedecrease}
        \Lcal_0-\epct{\Lcal_{T }}< \Lthre.
    \end{equation}
    We proceed to show that the assumption \eqref{eq:fakedecrease} is invalid.

    We start with estimating the expected distance between $\parameter_0$ and $\parameter_p$ for $p=1,\dots,T-1$. By Lemma \ref{lem:decrease}, it holds
    \begin{equation}
        \label{eq:R2dec1}
        \begin{aligned}
            &  ~ 2\left(\Lcal_0-\epct{\Lcal_{T }}\right)\\
            &\geq  ~ \beta\norm{g_0}^2-\beta B^2_{n,n_0}-\beta^2 L V_{n,n_0}
            +\sum_{h=1}^{T -1}\left(\alpha \Ebb\norm{g_h}^2-\alpha B^2_{n,n_0}-\alpha^2 L V_{n,n_0}\right)\\
            &\geq ~ \beta\norm{g_0}^2 + \alpha \sum_{h=1}^{T-1}\Ebb\norm{g_h}^2- 2T\alpha B^2_{n,n_0}-2\beta^2LV_{n,n_0},
        \end{aligned}
    \end{equation}
    where the first inequality is derived similarly to \eqref{eq:R1ineq}, and the second inequality is due to $\beta=\sqrt{T}\alpha$.
    Together with the assumption \eqref{eq:fakedecrease}, it follows that
    \begin{equation}
        \label{eq:R2dec2}
        \beta\norm{g_0}^2+\alpha\sum_{h=1}^{T-1}\Ebb\norm{g_h}^2\leq 2\Lthre+2T\alpha B^2_{n,n_0}+2\beta^2 L V_{n,n_0}=:\asumgh.
    \end{equation}
    A direct implication of \eqref{eq:R2dec2} is that
    \begin{equation}
        \label{eq:R2dec3}
        \beta\norm{g_0}^2\leq \asumgh, \qquad
        \alpha\Ebb\norm{\sum_{h=1}^{p}g_h}^2\leq p\alpha\sum_{h=1}^{p}\Ebb\norm{g_h}^2\leq p\asumgh.
    \end{equation}
    
    We proceed to bound $\parameter_{p+1}-\parameter_0=\beta \hat{g}_0+\alpha\sum_{h=1}^{p}\hat{g}_h$ as follows. Firstly,
    \begin{equation}
        \label{eq:distdecomp2}
        \begin{aligned}
            \Ebb\norm{\sum_{h=1}^{p}(\hat{g}_h-g_h)}^2
            &= \sum_{h=1}^{p}\Ebb\norm{\hat{g}_h-g_h}^2+2\sum_{1\leq h<l\leq p}\Ebb\left\langle \hat{g}_h-g_h,\hat{g}_l-g_l\right\rangle\\
            &= \sum_{h=1}^{p}\Ebb\norm{\hat{g}_h-g_h}^2+2\sum_{1\leq h<l\leq p}\Ebb\left\langle \hat{g}_h-g_h,\Ebb[\hat{g}_l|\parameter_l]-g_l\right\rangle\\
            &\leq pV_{n,n_0}+p(p-1)B_{n,n_0}\sqrt{V_{n,n_0}}
            \leq 2pV_{n,n_0}+p^3B_{n,n_0}^2,
        \end{aligned}
    \end{equation}
    where the second equality is because we can take conditional expectation on $\parameter_l$ first, the following inequality holds from the result $\Ebb\norm{\hat{g}_h-g_h}^2\leq V_{n,n_0}$ and $\norm{\Ebb[\hat{g}_l|\parameter_l]-g_l}\leq B_{n,n_0}$ in Theorem \ref{thm:gbiasVar}, and the last inequality is due to AM-GM inequality. Therefore, we can bound
    \begin{equation}
        \label{eq:distbound}
        \begin{aligned}
            &~\Ebb\norm{\parameter_{p+1}-\parameter_0}^2
            =\Ebb\norm{\beta\hat{g}_0+ \alpha\sum_{h=1}^{p}\hat{g}_h}^2\\
            &=\Ebb\norm{\beta g_0+\beta(\hat{g}_0-g_0)+ \alpha\sum_{h=1}^{p}g_h+\alpha\sum_{h=1}^{p}(\hat{g}_h-g_h)}^2\\
            &\leq \underbrace{ 4\beta^2\norm{g_0}^2 }_{\eqref{eq:R2dec3}}
            +\underbrace{ 4\beta^2\Ebb\norm{\hat{g}_0-g_0}^2 }_{\text{Theorem \ref{thm:gbiasVar}}}
            +\underbrace{ 4\alpha^2\Ebb\norm{\sum_{h=1}^{p}g_h}^2 }_{\eqref{eq:R2dec3}}
            +\underbrace{ 4\alpha^2\Ebb\norm{\sum_{h=1}^{p}(\hat{g}_h-g_h)}^2 }_{ \eqref{eq:distdecomp2} }\\
            &\leq 4\beta \asumgh + 4\beta^2 V_{n,n_0} + 4\alpha p \asumgh + 2\alpha^2(2pV_{n,n_0}+p^3B_{n,n_0}^2)\\
            &\leq 4\beta \asumgh + 8\beta^2 V_{n,n_0} + 4 p (\alpha\asumgh+ \alpha^2 T^2 B_{n,n_0}^2 ),
        \end{aligned}
    \end{equation}
    where the second inequality is due to \eqref{eq:R2dec3}, Theorem \ref{thm:gbiasVar} and \eqref{eq:distdecomp2}, and the final follows from $p\alpha^2\leq T\alpha^2 = \beta^2$. Thus, we can take 
    \begin{align*}
        A_2&:=8\alpha\Lthre+16\alpha^2 T^2B_{n,n_0}^2+ 8\alpha\beta^2 LV_{n,n_0},\\
        A_3&:=8\beta\Lthre+8\alpha\beta T B_{n,n_0}^2+ 16\beta^2 V_{n,n_0},
    \end{align*}
    and then
    \begin{align}\label{eq:dtheta-linear}
        \Ebb\norm{\parameter_{p}-\parameter_0}^2 \leq (p-1)A_2+A_3.
    \end{align}
    When we take $p=T-1$, \eqref{eq:distbound} shows that the expected distance between $\parameter_0$ and $\parameter_T$ is bounded by a quadratic function of $T$.

    We further prove the expected distance between $\parameter_0$ and $\parameter_T$ grows at least exponentially for $T$, leading to a contradiction. Since $\parameter_{p}$ stays close to $\parameter_0$, the quadratic Taylor approximation of the function $\Lcal$ at $\parameter_0$ is introduced as
    \begin{equation*}
        Q(\parameter):=\Lcal_0+g_0^{\T}(\parameter-\parameter_0)+\frac{1}{2}(\parameter-\parameter_0)^{\T}H_0(\parameter-\parameter_0).
    \end{equation*}
    We denote $Q_p=Q(\parameter_p)$ and $q_p=\nabla_{\parameter}Q(\parameter_p)=g_0+H_0(\parameter_p-\parameter_0)$ for $p=0,\dots,T-1$. Using the Taylor approximation is firstly proposed in \cite{ge2015escaping}. As $\Lcal$ is twice-differentiable with a $\rho$-Lipschitz Hessian, \cite[Lemma 1.2.4]{nesterov2003introductory} gives that 
    \begin{equation}
        \label{eq:Taylorapprox}
        \norm{\nabla_{\parameter}\Lcal(\parameter)-\nabla_{\parameter}Q(\parameter)}\leq \frac{\rho}{2} \norm{\parameter-\parameter_0}^2.
    \end{equation}
    Thus, $\norm{q_h-g_h}^2\leq \frac\rho2\norm{\parameter_h-\parameter_0}^2$.
    To derive the lower bound, $\parameter_{p+1}-\parameter_0$ is decomposed as
    \begin{equation*}
        \label{eq:decompTaylor}
        \begin{aligned}
            \parameter_{p+1}-\parameter_0&=\parameter_{p}-\parameter_0-\alpha\hat{g}_{p}\\
            &=\parameter_{p}-\parameter_0-\alpha q_p-\alpha(\hat{g}_p-g_p+g_p-q_p)\\
            &=(I-\alpha H_0)(\parameter_{p}-\parameter_0)-\alpha g_0-\alpha(\hat{g}_p-g_p+g_p-q_p).
        \end{aligned}
    \end{equation*}
    Let $-\lambda_0<0$ be the minimum eigenvalue of the Hessian $H_0=H(\parameter_0)$, and let $v$ be the unit eigenvector with respect to $-\lambda_0$ (which is deterministic conditional on $\parameter_0$). Then $(I-\alpha H_0)v=(1+\alpha \lambda_0) v=\kappa v$, and hence
    \begin{align*}
        \iprod{v}{\parameter_{p+1}-\parameter_0}= \kappa \iprod{v}{\parameter_{p+1}-\parameter_0} - \alpha \iprod{v}{g_0} - \alpha \iprod{v}{\hat{g}_p-q_p}.
    \end{align*}
    Recursively expanding this equality out, we finally obtain 
    \begin{align*}
        &\iprod{v}{\parameter_{p+1}-\parameter_0}
        = \kappa^p \iprod{v}{\parameter_{1}-\parameter_0} - \alpha \iprod{v}{g_0}\sum_{h=1}^p \kappa^{p-h} - \alpha \sum_{h=1}^p \kappa^{p-h}\iprod{v}{\hat{g}_h-q_h}\notag\\
        & = \kappa^p \bigg[\beta  \underbrace{ \iprod{v}{-\hat{g}_0} }_{u} - \alpha \underbrace{ \frac{1-\kappa^{-p}}{\kappa-1} \iprod{v}{g_0} }_{d_p} - \alpha \underbrace{ \sum_{h=1}^p \kappa^{-h}\iprod{v}{\hat{g}_h-g_h} }_{\xi_p}
        - \alpha \underbrace{ \sum_{h=1}^p \kappa^{-h}\iprod{v}{g_h-q_h} }_{\delta_p} \bigg]. 
    \end{align*} 
    Therefore,
    \begin{equation}
        \label{eq:R2decp}
        \begin{aligned}
            \Ebb\norm{\parameter_{p+1}-\parameter_0}^2
            \geq& \Ebb[\iprod{v}{\parameter_{p+1}-\parameter_0}^2]
            =\kappa^{2p}\Ebb[\left(\beta u-\alpha d_p-\alpha \xi_p-\alpha \delta_p\right)^2]\\
            \geq& \kappa^{2p} \left(\beta^2 \Ebb [u^2] - 2\alpha\beta \epct{ud_p} - 2\alpha\beta\epct{u\xi_p} - 2\alpha\beta\epct{u\delta_p} \right).
        \end{aligned}
    \end{equation}
    
    
    By the Cauchy-Schwarz inequality, Lemma \ref{lem:cnc} implies that 
    \begin{equation}
        \label{eq:right1}
        \begin{aligned}
            \Ebb [u^2]&= \epct{(v^{\T}\hat{g}_0)^2}\geq \mu\sigma_2^2.
        \end{aligned}
    \end{equation}
    where $\mu=\frac{\eta \gamma}{16n}$.
    Next, because $d_p$ is deterministic, the term $\epct{ud_p}$ can be bounded as
    \begin{equation}
        \label{eq:d_p}
        \begin{aligned}
            \epct{ud_p}
            = ~ -d_p\Ebb [\iprod{v}{\hat{g}_0}]
            =& ~-d_p\iprod{v}{g_0}+d_p\Ebb [\iprod{v}{g_0-\hat{g}_0}]\\
            \leq&~ d_p\Ebb [\iprod{v}{g_0-\hat{g}_0}] \leq \frac{l_gB_{n,n_0}}{\kappa-1},
        \end{aligned}
    \end{equation}
    where the first inequality is due to $-d_p\iprod{v}{g_0}=-\frac{1-\kappa^{-p}}{\kappa-1}\iprod{v}{g_0}^2\leq 0$, and the second inequality uses Theorem \ref{thm:gbiasVar}.

    We next upper bound the term $\epct{u\xi_p}$ as follows.
    \begin{equation}
        \label{eq:xi_p}
        \begin{aligned}
            \epct{u\xi_p}
            =& ~\epct{ u\sum_{h=1}^p \kappa^{-h}\iprod{v}{\hat{g}_h-g_h} }
            = \epct{ u\sum_{h=1}^p \kappa^{-h}\iprod{v}{\Ebb[\hat{g}_h|\parameter_h]-g_h} }\\
            \leq&~ \epct{ |u|\sum_{h=1}^p \kappa^{-h}\norm{\Ebb[\hat{g}_h|\parameter_h]-g_h} }
            \leq \epct{ l_g\sum_{h=1}^p \kappa^{-h}B_{n,n_0} }
            \leq \frac{l_g B_{n,n_0}}{\kappa-1}.
        \end{aligned}
    \end{equation}
    where the second inequality is due to the Cauchy-Schwarz inequality, and in the last inequality we use Theorem \ref{thm:gbiasVar}. 

    Finally, we bound the term $\epct{u\delta_p}$:
    \begin{equation}
        \label{eq:delta_p}
        \begin{aligned}
            \epct{u\delta_p}
            =& \epct{ u\sum_{h=1}^p \kappa^{-h}\iprod{v}{g_h-q_h} }
            \leq \epct{ l_g \sum_{h=1}^p \kappa^{-h}\norm{g_h-q_h} }\\
            \leq& \frac{l_g\rho}{2} \epct{ \sum_{h=1}^p \kappa^{-h}\norm{\parameter_h-\parameter_0}^2 }
            \leq \frac{l_g\rho}{2} \sum_{h=1}^p \kappa^{-h}\left(A_2(h-1)+A_3\right) \\
            \leq& \frac{l_g\rho}{2} \left[ \frac{A_2}{(\kappa-1)^2}+\frac{A_3}{\kappa-1} \right],
        \end{aligned}
    \end{equation}
    where the second inequality is due to \eqref{eq:gradientbound}, the third inequality uses \eqref{eq:dtheta-linear}, and the last inequality is because $\sum_{h=1}^p \kappa^{-h}(h-1)\leq \frac{1}{(\kappa-1)^2}$.

    Substituting the four inequalities \eqref{eq:right1}, \eqref{eq:d_p}, \eqref{eq:xi_p} and \eqref{eq:delta_p} into \eqref{eq:R2decp}, we obtain the lower bound as
    \begin{equation*}
        \begin{aligned}
            & ~ \Ebb\norm{\parameter_{p+1}-\parameter_0}^2\geq \kappa^{2p} \left(\beta^2 \Ebb [u^2] - 2\alpha\beta \epct{ud_p} - 2\alpha\beta\epct{u\xi_p} - 2\alpha\beta\epct{u\delta_p} \right)\\
            & \geq \kappa^{2p}
            \left(\beta^2 \mu\sigma_2^2 - 2\alpha\beta \frac{l_g B_{n,n_0}}{\kappa-1}- 2\alpha\beta\frac{l_g B_{n,n_0}}{\kappa-1} - 2\alpha\beta\frac{l_g\rho}{2} \left[ \frac{A_2}{(\kappa-1)^2}+\frac{A_3}{\kappa-1}\right] \right).
        \end{aligned}
    \end{equation*}
    According to our settings in Table \ref{table:1}, these conditions are satisfied:
    \begin{align}
        \Lthre\leq \frac{\mu\sigma_2^2\lambda_0\cdot\min\{\beta\lambda_0,1\}}{192l_g\rho},~ B_{n,n_0}^2\leq\frac{\mu\beta\sigma_2^2\lambda_0\cdot\min\{\beta T\lambda_0,1\}}{384\alpha T^2l_g\rho},~ \beta\leq \frac{\mu\sigma_2^2\lambda_0^2L}{384l_g\rho LV_{n,n_0}}.\label{eq:conditions_1}
    \end{align}
    It follows that 
    \begin{equation}
        \label{eq:A2A3}
        A_2\leq \frac{\mu \alpha \beta \sigma_2^2\lambda_0^2}{8l_g\rho},\quad A_3\leq \frac{\mu \beta \sigma_2^2\lambda_0}{8l_g\rho}.
    \end{equation}
    Therefore, we can conclude that
    \begin{equation}
        \label{eq:total-2}
        \begin{aligned}
           \Ebb\norm{\parameter_{p+1}-\parameter_0}^2&\geq 
           \kappa^{2p} \left(\beta^2 \mu\sigma_2^2-\frac{1}{4}\beta^2 \mu\sigma_2^2 -\frac{1}{4}\beta^2 \mu\sigma_2^2-\frac{1}{4}\beta^2 \mu\sigma_2^2\right)=\frac{1}{4}\beta^2 \kappa^{2p}\mu\sigma_2^2.
        \end{aligned}
    \end{equation}
    In other word, \eqref{eq:total-2} shows that the expected distance between $\parameter_0$ and $\parameter_{p+1}$ grows at least exponentially for $p$. Substituting $p=T-1$, it leads to a contradiction if the lower bound of $\Ebb\norm{\parameter_T-\parameter_0}^2$ is greater than the upper bound, i.e.,
    \begin{equation}
        \label{eq:contradiction}
         \frac{1}{4}\beta^2 \kappa^{2T}\mu\sigma_2^2\geq(T-1)A_2+A_3.
    \end{equation}
    We choose 
    \begin{equation}
        \label{eq:Tconst}
        T=\frac{c}{\alpha \lambda_0}\log \left(\frac{LV_{n,n_0}n}{\eta \alpha \beta \sigma_2} \right)
    \end{equation}
     in Table \ref{table:1} with taking a sufficiently large $c$ and then \eqref{eq:contradiction} implies a contradiction. This completes the proof.
\end{proof}

Finally, we establish the main theorem in this section. It is shown that MCMC-SGD returns approximate second-order points or $\epsilon$-variance points with high probability. This result may explain the observation of convergence to eigenvalues in solving variational eigenvalue problems, where $\epsilon$-variance points means that an eigenvalue is obtained even if the objective function does not approach the global minimum. When an $\epsilon$-variance point is desired in this kind of problems, we should design better neural network architectures to reduce those meaningless second order stability points. 

\begin{theorem}
    \label{thm:efsp}
    Let Assumptions \ref{asm:wavefun} ,\ref{asm:local} ,\ref{asm:unigap} and \ref{asm:sec} hold. For any $\delta\in (0,1)$, with the stepsizes \eqref{eq:schedule} and parameters in Table \ref{table:1}, Algorithm \ref{alg:VMC} returns an $(\epsilon,\epsilon^{1/4})$ approximate second-order stationary point or an $\epsilon^{1/2}$-variance point with probability at least $1-\delta$ after the following steps

    \begin{equation*}
        O\left(\delta^{-4}\epsilon^{-11/2}\log^2\left(\frac{1}{\epsilon\delta}\right)\right).
    \end{equation*}
    
\end{theorem}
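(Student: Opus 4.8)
The plan is to run an expected-descent argument over the $N:=K/T$ periods, using the two sufficient-decrease lemmas in the ``bad'' regimes $\Rcal_1,\Rcal_2$ and a crude no-blow-up bound in the ``good'' regime $\Rcal_3$, and then to turn the resulting averaged statement into a high-probability guarantee by returning a uniformly random period. Throughout write $\tparameter_m:=\parameter_{mT}$ for $m=0,\dots,N$, let $\Lcal^{*}:=\inf_{\parameter}\Lcal(\parameter)$, and let $\mathcal{F}_m$ be the $\sigma$-algebra generated by $\parameter_0,\dots,\parameter_{mT}$. The first observation is that $\Rcal_1,\Rcal_2,\Rcal_3$ partition $\Theta$: if $\norm{g(\parameter)}<\epsilon$ then either $\lambda_{\min}(H(\parameter))\le-\epsilon^{1/4}$ together with $\sigma_2^2[f_\parameter]\ge\epsilon^{1/2}$, placing $\parameter$ in $\Rcal_2$, or the negation holds, placing $\parameter$ in $\Rcal_3$; moreover every point of $\Rcal_3$ is by definition an $(\epsilon,\epsilon^{1/4})$ approximate second-order stationary point or an $\epsilon^{1/2}$-variance point, so it suffices to show the returned iterate lies in $\Rcal_3$ with probability at least $1-\delta$.

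First I would assemble the per-period estimates. Lemmas \ref{lem:R1} and \ref{lem:R2} give, conditionally on $\tparameter_m$, the bound $\Lcal(\tparameter_m)-\Ebb[\Lcal(\tparameter_{m+1})\mid\tparameter_m]\ge\Lthre$ whenever $\tparameter_m\in\Rcal_1\cup\Rcal_2$. For an arbitrary $\tparameter_m$, the inequality \eqref{eq:R2dec1}—which only uses Lemma \ref{lem:decrease} over one period—yields, after discarding the nonnegative gradient terms, $\Lcal(\tparameter_m)-\Ebb[\Lcal(\tparameter_{m+1})\mid\tparameter_m]\ge -T\alpha B_{n,n_0}^2-\beta^2LV_{n,n_0}$, and the parameter choices of Table \ref{table:1} give the identities $\beta^2LV_{n,n_0}=\delta\Lthre$ and $T\alpha B_{n,n_0}^2=2\Lthre/T$, so the right-hand side is at least $-2\delta\Lthre$ because $T$ is chosen far larger than $1/\delta$. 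Writing $Y_m:=\mathbb{1}[\tparameter_m\notin\Rcal_3]$ and using that $\{\tparameter_m\in\Rcal_i\}$ is $\mathcal{F}_m$-measurable, these two facts combine to the almost-sure bound $\Ebb[\Lcal(\tparameter_m)-\Lcal(\tparameter_{m+1})\mid\mathcal{F}_m]\ge\Lthre Y_m-2\delta\Lthre$.

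Next I would telescope and count. Summing this bound over $m=0,\dots,N-1$, taking total expectations via the tower property, and using $\Lcal(\tparameter_0)=\Lcal(\parameter_0)$ and $\Lcal(\tparameter_N)\ge\Lcal^{*}$ gives $\Lthre\,\Ebb[\sum_{m}Y_m]-2\delta\Lthre N\le\Lcal(\parameter_0)-\Lcal^{*}$, i.e. $\Ebb[\sum_{m=0}^{N-1}Y_m]\le(\Lcal(\parameter_0)-\Lcal^{*})/\Lthre+2\delta N$. The choice $K=2(\Lcal(\parameter_0)-\Lcal^{*})T/(\delta\Lthre)$, equivalently $(\Lcal(\parameter_0)-\Lcal^{*})/\Lthre=\delta N/2$, then forces $\Ebb[\sum_{m=0}^{N-1}Y_m]\le\tfrac52\delta N$, so drawing $\hat m$ uniformly on $\{0,\dots,N-1\}$ and returning $\tparameter_{\hat m}$ yields $\Pbb(\tparameter_{\hat m}\notin\Rcal_3)=\tfrac1N\sum_m\Pbb(\tparameter_m\notin\Rcal_3)\le\tfrac52\delta$; running the scheme with $\delta$ replaced by $\tfrac25\delta$ (which only rescales $\beta,\Lthre,T,K$ by fixed constants) gives probability at least $1-\delta$. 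For the iteration count I would substitute the Table \ref{table:1} orders: with $n$ of order $\epsilon^{-1}$ one has $V_{n,n_0}$ and $\mu$ of order $\epsilon$, hence $\beta$ of order $\delta\epsilon$, $\Lthre$ of order $\delta\epsilon^{3}$, $T=\beta^{-2}\epsilon^{-1/2}\log^2(\cdot)$ of order $\delta^{-2}\epsilon^{-5/2}\log^2(1/(\epsilon\delta))$, and therefore $K$ of order $T/(\delta^2\epsilon^3)=\delta^{-4}\epsilon^{-11/2}\log^2(1/(\epsilon\delta))$, as claimed.

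The main obstacle is bookkeeping rather than a new idea. The decrease estimates are conditional statements given $\tparameter_m$, so they must be invoked on the events $\{\tparameter_m\in\Rcal_i\}$ and summed through the tower property rather than any independence; one must also verify that the single choice of parameters in Table \ref{table:1} simultaneously satisfies every inequality demanded by Lemmas \ref{lem:R1}, \ref{lem:R2} and \ref{lem:cnc}—in particular \eqref{eq:R1conditions}, \eqref{eq:conditions_1} and \eqref{eq:Tconst}—and that the crude regime-$\Rcal_3$ estimate $2\Lthre/T\le\delta\Lthre$ is consistent with the (large) chosen $T$. All of this is routine, since the genuinely hard work—the exponential-growth argument near saddle points—has already been carried out in Lemma \ref{lem:R2}.
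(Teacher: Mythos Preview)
Your proposal is correct and follows essentially the same approach as the paper's proof: partition into $\Rcal_1,\Rcal_2,\Rcal_3$, invoke Lemmas \ref{lem:R1} and \ref{lem:R2} for sufficient decrease on $\Rcal_1\cup\Rcal_2$, use the crude lower bound from \eqref{eq:R2dec1}/\eqref{eq:R1ineq} on $\Rcal_3$, telescope, and convert the averaged bound on $\frac{1}{N}\sum_m\Pbb(\tparameter_m\notin\Rcal_3)$ into a high-probability statement via a uniformly random period. The only cosmetic differences are that the paper states the no-blow-up bound as $-\delta\Lthre/2$ (so no rescaling of $\delta$ is needed) whereas you obtain $-2\delta\Lthre$ and absorb the resulting $5/2$ factor by replacing $\delta$ with $\tfrac{2}{5}\delta$, and that you phrase the conditioning through the filtration $\mathcal{F}_m$ while the paper conditions on the events $\Ecal_m,\Ecal_m^c$; both formulations yield the same telescoped inequality.
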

\begin{proof}
    Suppose $\Ecal_m$ is the event
    \begin{equation*}
        \Ecal_m:=\left\{\tparameter_m\in \Rcal_1\cup \Rcal_2\right\},
    \end{equation*}
    and its complement is $\Ecal_m^c=\left\{\tparameter_m\in \Rcal_3 
   \right\}$. Let $\Pcal_m$ denote the  probability of the occurrence of the event $\Ecal_m$. 
    
    When $\Ecal_m$ occurs, by Lemmas \ref{lem:R1} and \ref{lem:R2}, we have 
    \begin{equation}
        \label{eq:Edescent}
        \epct{\Lcal(\tparameter_{m})-\Lcal(\tparameter_{m+1})\Big|\Ecal_m}\geq \Lthre.
    \end{equation} On the other hand, when $\Ecal_m^c$ occurs, it follows from \eqref{eq:R1ineq} that
    \begin{equation}
        \label{eq:Ecdescent}
        2\epct{\Lcal(\tparameter_m)-\Lcal(\tparameter_{m+1})\Big|\Ecal_m^c}\geq -2T\alpha B_{n,n_0}^2-2\beta^2LV_{n,n_0}\geq -\delta \Lthre,
    \end{equation}
    where the first inequality is by discarding positive terms in \eqref{eq:R1ineq} and the second inequality is due to the choice $\Lthre\geq \left(2T\alpha B_{n,n_0}^2+2\beta^2LV_{n,n_0}\right)/\delta$ in Table \ref{table:1}. It means that the function value may increase by no more than $\delta \Lthre/2$. When the expectation is taken overall, \eqref{eq:Edescent} and \eqref{eq:Ecdescent} imply that
    \begin{equation}
        \label{eq:totaldescent}
        \epct{\Lcal(\tparameter_{m})-\Lcal(\tparameter_{m+1})} \geq (1-\Pcal_m)\cdot \left(-\frac{\delta \Lthre}{2}\right)+\Pcal_m\cdot \Lthre.
    \end{equation}
    
    Suppose Algorithm \ref{alg:VMC} runs for $K$ steps starting from $\parameter_0$ and there are $M=K/T $ of $\tparameter_{m}$. Let $\Lcal^*$ be the global minimum of $\Lcal(\parameter)$. Summing \eqref{eq:totaldescent} for $m=1,\dots, M$ yields that
    \begin{equation*}
        \Lcal(\parameter_0)-\Lcal^{*}\geq  -\frac{\delta \Lthre M}{2}+\sum_{m=1}^{M}\Pcal_m\cdot \Lthre\Rightarrow\frac{1}{M}\sum_{m=1}^{M}\Pcal_m \leq \frac{\Lcal(\parameter_0)-\Lcal^{*}}{M\Lthre}+\frac{\delta}{2}\leq \delta,
    \end{equation*}
    where the last inequality holds if $K$ satisfies 
    \begin{equation}
        \label{eq:Kconst}
        K\geq \frac{2[\Lcal(\parameter_0)-\Lcal^{*}]T}{\delta \Lthre}=O\left(\delta^{-4}\epsilon^{-11/2}\log^2\left(\frac{1}{\epsilon\delta}\right)\right).
    \end{equation} Hence, the probability of the event $\Ecal_m^c$ occurs can be bounded by 
    \begin{equation*}
        1-\frac{1}{M}\sum_{m=1}^{M}\Pcal_m\geq 1-\delta.
    \end{equation*}
    This proves the statement in Theorem \ref{thm:efsp}.
\end{proof}

\section{Conclusions}
\label{sec:conclusion}

In this paper, we explore the convergence properties of the SGD algorithm coupled with MCMC sampling.
The upper bound of the bias and variance corresponding to the MCMC is estimated by concentration inequalities without resorting to the conventional bounded assumption.
Then, we show that MCMC-SGD achieves first order stationary convergence based on the error analysis of MCMC. It has $O\left(\frac{\log K}{\sqrt{n K}}\right)$ convergence rate with a sufficiently large sample size $n$ after $K$ iterations. Moreover, we conduct a study on the second-order convergence properties of MCMC-SGD under reasonable assumptions. By verifying the CNC condition under errors, we discuss how MCMC-SGD escapes from saddle points and establish the second order convergence guarantee of $O\left(\epsilon^{-11/2}\log^{2} \left(\frac{1}{\epsilon}\right)\right)$ with high probability. Our result explains the observation of convergence to eigenvalues in solving variational eigenvalue problems, thereby demonstrating the favorable second-order convergence properties of the MCMC-SGD algorithm.

There are some potential directions to be concerned for future research. (1) The relationship between non minimum eigenvalues in variational eigenvalue problems and optimization algorithms remains an area ripe for exploration. (2) Our convergence analysis suggests that improving the efficiency of sampling methods is of vital importance for the better performance of stochastic algorithms. (3) The convergence of the natural gradient method and the KFAC method with MCMC samples is also interesting.

\bibliographystyle{plain}
\bibliography{bib.bib}

\begin{thebibliography}{10}

\bibitem{abrahamsen2023convergence}
Nilin Abrahamsen, Zhiyan Ding, Gil Goldshlager, and Lin Lin.
\newblock Convergence of stochastic gradient descent on parameterized sphere
  with applications to variational {M}onte carlo simulation.
\newblock {\em arXiv preprint arXiv:2303.11602}, 2023.

\bibitem{ajalloeian2020convergence}
Ahmad Ajalloeian and Sebastian~U Stich.
\newblock On the convergence of {SGD} with biased gradients.
\newblock {\em arXiv preprint arXiv:2008.00051}, 2020.

\bibitem{atchade2017perturbed}
Yves~F Atchad{\'e}, Gersende Fort, and Eric Moulines.
\newblock On perturbed proximal gradient algorithms.
\newblock {\em The Journal of Machine Learning Research}, 18(1):310--342, 2017.

\bibitem{blei2017variational}
David~M Blei, Alp Kucukelbir, and Jon~D McAuliffe.
\newblock Variational inference: {A} review for statisticians.
\newblock {\em Journal of the American statistical Association},
  112(518):859--877, 2017.

\bibitem{cai18}
Z.~Cai.
\newblock Approximating quantum many-body wave-functions using artificial
  neural networks.
\newblock {\em Phys. Rev. B}, 97:035116, 2018.

\bibitem{carleo17}
G.~Carleo and M.~Troyer.
\newblock Solving the quantum many-body problem with artificial neural
  networks.
\newblock {\em Science}, 355:602--606, 2017.

\bibitem{choo2018}
Kenny Choo, Giuseppe Carleo, Nicolas Regnault, and Titus Neupert.
\newblock Symmetries and many-body excitations with neural-network quantum
  states.
\newblock {\em Physical review letters}, 121(16):167204, 2018.

\bibitem{daneshmand2018escaping}
Hadi Daneshmand, Jonas Kohler, Aurelien Lucchi, and Thomas Hofmann.
\newblock Escaping saddles with stochastic gradients.
\newblock In {\em International Conference on Machine Learning}, pages
  1155--1164. PMLR, 2018.

\bibitem{deng17}
D.~Deng, X.~Li, and S.~Das Sarma.
\newblock Quantum entanglement in neural network states.
\newblock {\em Phys. Rev. X}, 7:021021, 2017.

\bibitem{diaconis1998we}
Persi Diaconis and Laurent Saloff-Coste.
\newblock What do we know about the {M}etropolis algorithm?
\newblock {\em Journal of Computer and System Sciences}, 57(1):20--36, 1998.

\bibitem{duane1987hybrid}
Simon Duane, Anthony~D Kennedy, Brian~J Pendleton, and Duncan Roweth.
\newblock Hybrid monte carlo.
\newblock {\em Physics letters B}, 195(2):216--222, 1987.

\bibitem{duchi2012ergodic}
John~C Duchi, Alekh Agarwal, Mikael Johansson, and Michael~I Jordan.
\newblock Ergodic mirror descent.
\newblock {\em SIAM Journal on Optimization}, 22(4):1549--1578, 2012.

\bibitem{fan2021hoeffding}
Jianqing Fan, Bai Jiang, and Qiang Sun.
\newblock Hoeffding's inequality for general {M}arkov chains and its
  applications to statistical learning.
\newblock {\em Journal of Machine Learning Research}, 22(139):1--35, 2021.

\bibitem{ge2015escaping}
Rong Ge, Furong Huang, Chi Jin, and Yang Yuan.
\newblock Escaping from saddle points—online stochastic gradient for tensor
  decomposition.
\newblock In {\em Conference on learning theory}, pages 797--842. PMLR, 2015.

\bibitem{glasser18}
I.~Glasser, N.~Pancotti, M.~August, I.~Rodriguez, and J.~Cirac.
\newblock Neural networks quantum states, string-bond states and chiral
  topological states.
\newblock {\em Phys. Rev. X}, 8:011006, 2018.

\bibitem{hastings1970monte}
WK~HASTINGS.
\newblock Monte carlo sampling methods using {M}arkov chains and their
  applications.
\newblock {\em Biometrika}, 57(1):97--109, 1970.

\bibitem{hermann2020deep}
Jan Hermann, Zeno Sch{\"a}tzle, and Frank No{\'e}.
\newblock Deep-neural-network solution of the electronic {S}chr{\"o}dinger
  equation.
\newblock {\em Nature Chemistry}, 12(10):891--897, 2020.

\bibitem{jastrow1955many}
Robert Jastrow.
\newblock Many-body problem with strong forces.
\newblock {\em Physical Review}, 98(5):1479, 1955.

\bibitem{jiang2018bernstein}
Bai Jiang, Qiang Sun, and Jianqing Fan.
\newblock Bernstein's inequality for general {M}arkov chains.
\newblock {\em arXiv preprint arXiv:1805.10721}, 2018.

\bibitem{jin2017escape}
Chi Jin, Rong Ge, Praneeth Netrapalli, Sham~M Kakade, and Michael~I Jordan.
\newblock How to escape saddle points efficiently.
\newblock In {\em International conference on machine learning}, pages
  1724--1732. PMLR, 2017.

\bibitem{jordan1999introduction}
Michael~I Jordan, Zoubin Ghahramani, Tommi~S Jaakkola, and Lawrence~K Saul.
\newblock An introduction to variational methods for graphical models.
\newblock {\em Machine learning}, 37:183--233, 1999.

\bibitem{karimi2019non}
Belhal Karimi, Blazej Miasojedow, {\'E}ric Moulines, and Hoi-To Wai.
\newblock Non-asymptotic analysis of biased stochastic approximation scheme.
\newblock {\em Proceedings of Machine Learning Research vol XX}, 1:33, 2019.

\bibitem{kaubruegger2018}
Raphael Kaubruegger, Lorenzo Pastori, and Jan~Carl Budich.
\newblock Chiral topological phases from artificial neural networks.
\newblock {\em Physical Review B}, 97(19):195136, 2018.

\bibitem{kienitz2000convergence}
J{\"o}rg Kienitz.
\newblock {\em Convergence of {M}arkov chains via analytic and isoperimetric
  inequalities}.
\newblock PhD thesis, Bielefeld University, 2000.

\bibitem{kingma2013auto}
Diederik~P Kingma and Max Welling.
\newblock Auto-encoding variational bayes.
\newblock {\em arXiv preprint arXiv:1312.6114}, 2013.

\bibitem{li2017approximate}
Yingzhen Li, Richard~E Turner, and Qiang Liu.
\newblock Approximate inference with amortised mcmc.
\newblock {\em arXiv preprint arXiv:1702.08343}, 2017.

\bibitem{liang2018}
Xiao Liang, Wen-Yuan Liu, Pei-Ze Lin, Guang-Can Guo, Yong-Sheng Zhang, and
  Lixin He.
\newblock Solving frustrated quantum many-particle models with convolutional
  neural networks.
\newblock {\em Physical Review B}, 98(10):104426, 2018.

\bibitem{maddison2017filtering}
Chris~J Maddison, John Lawson, George Tucker, Nicolas Heess, Mohammad Norouzi,
  Andriy Mnih, Arnaud Doucet, and Yee Teh.
\newblock Filtering variational objectives.
\newblock {\em Advances in Neural Information Processing Systems}, 30, 2017.

\bibitem{metropolis1953equation}
Nicholas Metropolis, Arianna~W Rosenbluth, Marshall~N Rosenbluth, Augusta~H
  Teller, and Edward Teller.
\newblock Equation of state calculations by fast computing machines.
\newblock {\em The journal of chemical physics}, 21(6):1087--1092, 1953.

\bibitem{miasojedow2014hoeffding}
B{\l}a{\.z}ej Miasojedow.
\newblock Hoeffding’s inequalities for geometrically ergodic markov chains on
  general state space.
\newblock {\em Statistics \& Probability Letters}, 87:115--120, 2014.

\bibitem{miclo2000trous}
Laurent Miclo and Cyril Roberto.
\newblock Trous spectraux pour certains algorithmes de {M}{\'e}tropolis sur r.
\newblock {\em S{\'e}minaire de Probabilit{\'e}s XXXIV}, pages 336--352, 2000.

\bibitem{mnih2016asynchronous}
Volodymyr Mnih, Adria~Puigdomenech Badia, Mehdi Mirza, Alex Graves, Timothy
  Lillicrap, Tim Harley, David Silver, and Koray Kavukcuoglu.
\newblock Asynchronous methods for deep reinforcement learning.
\newblock In {\em International conference on machine learning}, pages
  1928--1937. PMLR, 2016.

\bibitem{moulines2011non}
Eric Moulines and Francis Bach.
\newblock Non-asymptotic analysis of stochastic approximation algorithms for
  machine learning.
\newblock {\em Advances in neural information processing systems}, 24, 2011.

\bibitem{needell2014stochastic}
Deanna Needell, Rachel Ward, and Nati Srebro.
\newblock Stochastic gradient descent, weighted sampling, and the randomized
  {K}aczmarz algorithm.
\newblock {\em Advances in neural information processing systems}, 27, 2014.

\bibitem{nesterov2003introductory}
Yurii Nesterov.
\newblock {\em Introductory lectures on convex optimization: A basic course},
  volume~87.
\newblock Springer Science \& Business Media, Louvain-la-Neuve, 2003.

\bibitem{nomura17}
Y.~Nomura, A.~Darmawan, Y.~Yamajji, and M.~Imada.
\newblock Restricted {B}oltzmann machine learning for solving strongly
  correlated quantum systems.
\newblock {\em Phys. Rev. B}, 96:205152, 2017.

\bibitem{paulin2015concentration}
Daniel Paulin.
\newblock Concentration inequalities for {M}arkov chains by {M}arton couplings
  and spectral methods.
\newblock {\em Electronic Journal of Probability}, 20:1--32, 2015.

\bibitem{pfau2020ab}
David Pfau, James~S Spencer, Alexander~GDG Matthews, and W~Matthew~C Foulkes.
\newblock Ab initio solution of the many-electron {S}chr{\"o}dinger equation
  with deep neural networks.
\newblock {\em Physical Review Research}, 2(3):033429, 2020.

\bibitem{ranganath2014black}
Rajesh Ranganath, Sean Gerrish, and David Blei.
\newblock Black box variational inference.
\newblock In {\em Artificial intelligence and statistics}, pages 814--822.
  PMLR, 2014.

\bibitem{rocchetto18}
A.~Rocchetto, E.~Grant, S.~Strelchuk, G.~Carleo, and S.~Severini.
\newblock Learning hard quantum distributions with variational autoencoders.
\newblock {\em Quantum Inf.}, 4:28, 2018.

\bibitem{saito17}
H.~Saito and M.~Kato.
\newblock Machine learning technique to find quantum many-body ground states of
  {B}osons on a lattice.
\newblock {\em J. Phys. Soc. Jpn.}, 87:014001, 2017.

\bibitem{salimans2015markov}
Tim Salimans, Diederik Kingma, and Max Welling.
\newblock Markov chain monte carlo and variational inference: Bridging the gap.
\newblock In {\em International conference on machine learning}, pages
  1218--1226. PMLR, 2015.

\bibitem{saloff1997lectures}
Laurent Saloff-Coste.
\newblock Lectures on finite {M}arkov chains.
\newblock {\em Lectures on probability theory and statistics}, pages 301--413,
  1997.

\bibitem{sun2018markov}
Tao Sun, Yuejiao Sun, and Wotao Yin.
\newblock On {M}arkov chain gradient descent.
\newblock {\em Advances in neural information processing systems}, 31, 2018.

\bibitem{welling2011bayesian}
Max Welling and Yee~W Teh.
\newblock Bayesian learning via stochastic gradient langevin dynamics.
\newblock In {\em Proceedings of the 28th international conference on machine
  learning (ICML-11)}, pages 681--688. Citeseer, 2011.

\bibitem{williams1991function}
Ronald~J Williams and Jing Peng.
\newblock Function optimization using connectionist reinforcement learning
  algorithms.
\newblock {\em Connection Science}, 3(3):241--268, 1991.

\bibitem{wright1999numerical}
Stephen Wright, Jorge Nocedal, et~al.
\newblock Numerical optimization.
\newblock {\em Springer Science}, 35(67-68):7, 1999.

\end{thebibliography}

\end{document}